\documentclass{article}

\usepackage{microtype}
\usepackage{graphicx}
\usepackage{subcaption}
\usepackage{booktabs} %
\usepackage{hyperref}

\PassOptionsToPackage{authoryear}{natbib}

\usepackage{algorithm}
\usepackage{algpseudocode}

\newcommand{\seperator}[1][.2pt]{\par\vskip.5\baselineskip\hrule height #1\par\vskip.5\baselineskip}

\usepackage[preprint]{neurips_2025}



\usepackage[utf8]{inputenc} 
\usepackage[T1]{fontenc}    
\usepackage{hyperref}       
\usepackage{url}            
\usepackage{booktabs}       
\usepackage{amsfonts}       
\usepackage{nicefrac}       
\usepackage{microtype}      
\usepackage{xcolor}         
\usepackage{amsmath}
\usepackage{amssymb}
\usepackage{mathtools}
\usepackage{amsthm}
\usepackage{verbatim}
\usepackage{bbm}
\usepackage{thmtools, thm-restate}
\usepackage[capitalize,noabbrev]{cleveref}

\theoremstyle{plain}
\newtheorem{theorem}{Theorem}[section]
\newtheorem{proposition}[theorem]{Proposition}
\newtheorem{lemma}[theorem]{Lemma}

\theoremstyle{definition}
\newtheorem{definition}[theorem]{Definition}
\newtheorem{assumption}[theorem]{Condition}
\newtheorem*{claim*}{Claim}
\newtheorem{claim}{Claim}
\theoremstyle{remark}
\newtheorem{remark}[theorem]{Remark}
\newtheorem*{remark*}{Remark}
\newcommand\numberthis{\addtocounter{equation}{1}\tag{\theequation}}

\title{A Theoretical Characterization of Optimal Data Augmentations in Self-Supervised Learning}

%

\author{%
  Shlomo Libo Feigin, Maximilian Fleissner, Debarghya Ghoshdastidar \\
   School of Computation, Information and Technology \\
  Techincal University of Munich\\
  \texttt{\{s.libo, m.fleissner, d.ghoshdastidar\}@tum.de} \\
}

\begin{document}

\maketitle

\begin{abstract}
 Data augmentations play an important role in the recent success of self-supervised learning (SSL).
While augmentations are commonly understood to encode invariances between different views into the learned representations, this interpretation overlooks the impact of the pretraining architecture and suggests that SSL would require diverse augmentations which resemble the 
data to work well. However, these assumptions do not align with empirical evidence, encouraging further theoretical understanding to guide the principled design of  augmentations in new domains. To this end, we use kernel theory to derive analytical expressions for data augmentations that achieve desired target representations after pretraining. We consider non-contrastive and contrastive losses, namely VICReg, Barlow Twins and the Spectral Contrastive Loss, and provide an algorithm to construct such augmentations. Our analysis shows that augmentations need not be similar to the data to learn useful representations, nor be diverse, and that the architecture has a significant impact on the optimal augmentations.
\end{abstract}

\section{Introduction}

Self-supervised learning (SSL) has gained prominence in recent years, serving as one of the backbones of the foundation models driving current progress in artificial intelligence. Instead of using labels, SSL employs a surrogate objective to learn representations, which are then used for downstream tasks. Joint embedding methods \citep{chen_simple_2020, he_momentum_2020, BYOL,zbontar_barlow_2021} in particular have seen a rise in popularity, achieving competitive performance with supervised representation learning for downstream classification, image segmentation and object detection \citep{chen2020exploringsimplesiameserepresentation, bardes_vicreg_2022}. Broadly speaking, these models encourage different views of the same underlying point to align closely in the embedding space, while preventing the representations from collapsing \citep{jing_understanding_2022}. In the vision domain, the views are usually given by augmentations such as random crop, Gaussian blur, and color distortion \citep{chen_simple_2020}.

The choice of augmentations is critical to the representations learned by the SSL objective. Different downstream tasks may require different augmentation; for example, \citet{purushwalkam} show that cropping encourages invariance to occlusions but negatively affects downstream tasks that require category and viewpoint invariance. In addition, \citet{shouldnt_be_contrastive, DBLP:journals/corr/abs-2111-09613, zhang2022rethinkingaugmentationmodulecontrastive,bendidi2023freelunchselfsupervised} demonstrate that certain augmentations benefit or penalize downstream classification on different classes, and \citet{ericsson2022selfsupervisedmodelstransferinvestigating} empirically show that pose-related tasks and classification-related tasks benefit from opposite augmentations. \textit{Previous theoretical studies largely do not address the subtleties of augmentation choice}. Instead, by assuming a certain relationship between the data and the augmentations, they provide guarantees specifically on downstream classification tasks \citep{arora_theoretical_2019, haochen_provable_2022, saunshi_understanding_2022}. 

In practice, however, the choice of suitable augmentations depends not only on the downstream task, but also intricately on the domain \citep{bendidi2023freelunchselfsupervised, balestriero2023cookbookselfsupervisedlearning}. Medicine still struggles to benefit from the success of SSL, arguably since assumptions that hold for natural images do not hold for medical images \citep{huang_self-supervised_2023}. As an example, consider brain scans. Unlike natural images, which typically include a central object, cropping parts of an image with a tumor can drastically change the interpretation of the scan. In practice, augmentations are therefore carefully crafted for different applications, a manual and empirically driven process. Theoretical insights are scarce, and as we discuss next, the few that exist do not explain empirical phenomena well. For example, if augmentations primarily serve the role of generating different views, one would expect that data augmentations need to (a) be similar to the original data, and (b) capture a diverse set of views. Neither holds in practice. Firstly, strong data augmentations, such as random crop and cutout, have drastically different marginal distributions than the original data \citep{gontijolopes2020affinitydiversityquantifyingmechanisms}. At the same time, these are exactly the augmentations that empirically provide the most benefit for downstream performance \citep{chen_simple_2020}. In contrast, natural-seeming augmentations, such as adding Gaussian noise, only show limited benefit. 
Secondly, \citet{bardes_vicreg_2022, pmlr-v202-cabannes23a} find that it is always better to use more data instead of more augmentations. In fact, \citet{chen_simple_2020} use only three types of augmentations. Finally, \citet{moutakanni2024dontneeddataaugmentationselfsupervised} stretch both of the above assumptions to the limit, showing that with just one augmentation, cropping, one can achieve state-of-the-art representations provided sufficient data.

\begin{figure}[!t]
  \centering
  \hspace{-0.07\textwidth}
  \begin{subfigure}[t]{0.22\textwidth}
  
    \centering
    \includegraphics[height=5cm]{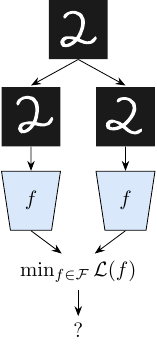}
    \caption{Previous works}
    \label{fig:prev}
  \end{subfigure}\hspace{-0.07\textwidth}
  \begin{minipage}[t]{0.22\textwidth}
    \centering
    \includegraphics[height=5cm]{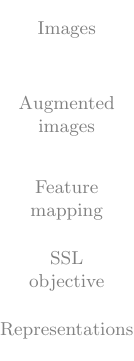}
  \end{minipage}\hspace{-0.07\textwidth}
  \begin{subfigure}[t]{0.22\textwidth}
    \centering
    \includegraphics[height=5cm]{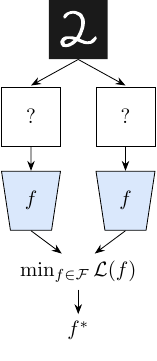}
    \caption{Our study}
    \label{fig:ours}
  \end{subfigure}\hspace{0.0\textwidth}
  \begin{subfigure}[t]{0.45\textwidth}
    \centering
    \includegraphics[height=5cm]{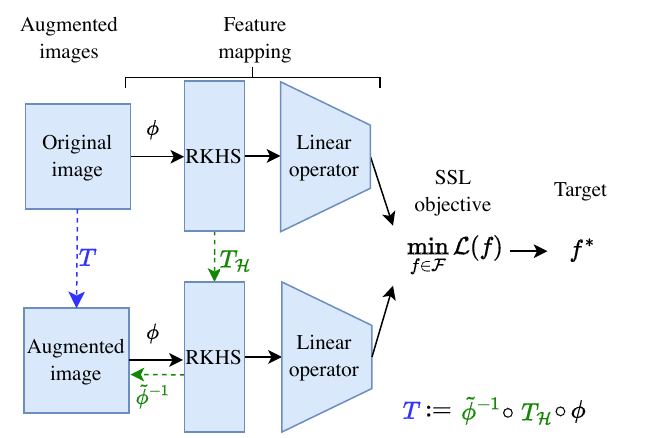}\caption{Our approach}
    \label{fig:our_approach}
  \end{subfigure}

  \caption{
    (a) Previous theoretical works assume certain augmentation characteristics and examine the learned representations.
    \quad (b) Our study asks the reverse question: given target representations (e.g.\ from a pretrained ResNet), what augmentations achieve them?
    \quad (c) An illustration of our pipeline: we find a transformation $T_{\mathcal{H}}$ in the RKHS that yields the target representations $f^*$ (Theorems~\ref{main lemma VICReg},~\ref{main lemma SCLNorm},~\ref{main lemma Barlow Twins}), then translate $T_{\mathcal{H}}$ back to input space by solving a pre-image problem $\tilde\phi^{-1}$ (Algorithm~\ref{algorithm}).
  }
  \label{fig:combined-all}
\end{figure}

In self-supervised learning, a unique interplay exists between data, augmentations and representations \citep{pmlr-v202-cabannes23a}. Prior works predominantly focus on one side of this interaction, asking how data and augmentations influence the learned representations. In this paper, we study SSL from a different angle. We ask: \textit{For given data and desired target representations, what augmentations result in these representations?} This conceptually deviates from prior studies, as illustrated in Figure \ref{fig:combined-all}.

\paragraph{Contributions.} To tackle this question, we place ourselves within existing theoretical frameworks for SSL. We formally prove that suitable data augmentations together with a sufficiently expressive hypothesis class can learn any desired representation by joint embedding methods such as VICReg \citep{bardes_vicreg_2022}, Barlow Twins \citep{zbontar_barlow_2021} and the Spectral Contrastive Loss \citep{haochen_provable_2022} --- the latter being a theoretical proxy to SimCLR \citep{chen_simple_2020}. We derive analytical expressions for the augmentations in Section \ref{main_sec}, and propose an algorithm to compute these augmentations in Section \ref{algorithm_sec}. Finally, we give insights about augmentation choice in Section \ref{discussion_sec}, interpreting the aforementioned empirical phenomena through the lens of our analysis. To summarize, our main contributions are the following.
\begin{itemize} \item  We prove that for VICReg, Barlow Twins and the Spectral Contrastive Loss, it is possible to guarantee the recovery of any representation for the input data, given suitable augmentations and a sufficiently expressive function class. 
\item For VICReg and the Spectral Contrastive Loss, we derive a closed-form solution for the augmentations; for Barlow Twins, the augmentations are expressed
through a solution of a continuous-time Lyapunov equation. To the best of our knowledge, this is the first method to construct augmentations explicitly for any given target representation.
\item Our theoretical results provide new insights into the role of augmentations in SSL: (a) Augmentations need not be similar to the original data. (b) Even very few augmentations can provide good representations, as empirically observed by \citealt{moutakanni2024dontneeddataaugmentationselfsupervised}. (c) Augmentations corresponding to the same representations can be recognizably different depending on the architecture. (d) Augmentations can act as projections in the feature space as opposed to different views of the data.

\end{itemize}

\section{Related Work}\label{related work}
\paragraph{Theory of Self-Supervised Learning.}

Previous theoretical works on self-supervised learning primarily focus on mathematically describing the learned representations, while implicitly assuming that useful augmentations are already given. This high-level idea has been formalized in numerous ways. Positive pairs can be assumed to be independent samples from the same class \citep{arora_theoretical_2019}, or a data augmentation graph is defined in which positive pairs are connected \citep{haochen_provable_2022}. Furthermore, a suitable integral operator can encode similarity between positive pairs \citep{pmlr-v202-cabannes23a}, or the target representations are assumed to be contained in a Hilbert space defined by the augmentation \citep{zhai2024understanding}. In this work, we turn this question around: Given target representations, what are the augmentations one needs to learn these representations?

\paragraph{Kernel Methods and SSL.} Studying supervised deep learning through kernel methods has proven to be fruitful, most notably by virtue of the neural tangent kernel (NTK) \citep{jacot2018neural}. This has sparked several works that assume kernel models for the representation function in SSL \citep{ kiani_joint_2022, NEURIPS2022_aa56c745, pmlr-v202-simon23a,pmlr-v202-cabannes23a, esser2024non}. Indeed, the validity of the NTK approximation in SSL has recently been proven for Barlow Twins \citep{fleissner2024infinite}, justifying the use of kernel theory to understand SSL with neural networks.
Additionally, it is known that contrastive learning can in fact be viewed as kernel learning \citep{johnson2022contrastive}, which has been used to derive new generalization error bounds for SSL \citep{zhai2024understanding}. In this paper, we therefore also consider a kernel setting. For VICReg our framework coincides with that of \citet{NEURIPS2022_aa56c745}, for Barlow Twins our framework matches \citet{pmlr-v202-simon23a}, and for the Spectral Contrastive Loss our framework is similar to that of \citep{esser2024non}.

\paragraph{The Role of Data Augmentations.} The role of data augmentations has been studied both in the supervised setting as well as in the self-supervised setting. In the supervised setting,  \citet{gontijolopes2020affinitydiversityquantifyingmechanisms,kim2022what} suggest that augmentations should be similar to the original data distribution, and diverse. \citet{group} view augmentations as group actions that keep the data distribution approximately similar. \citet{geiping2023how} show that augmentations can be helpful even if they encode invariances not present in the original data. In SSL, \citet{wang2020understanding} explain the role of data augmentations through the alignment of the positive samples and the uniformity of induced distribution. \citet{wang2022chaos} present the role of augmentations as connecting the data through overlaps, which then cause class-separated representations in downstream classification. \citet{content_style} view augmentations as transformations that preserve the semantic content of an image while changing the style component. Taking an information theoretic perspective, \citet{tian_what_2020} argue that augmentations should reduce the mutual information between views as much as possible while keeping the shared task-relevant information intact.
It is challenging (and in some cases even impossible) to reconcile these interpretations with the empirical observation that augmentations produce useful representations even when they are not diverse or similar to the original distribution \citep{moutakanni2024dontneeddataaugmentationselfsupervised}.

\section{Preliminaries}\label{preliminaries}

\subsection{Problem Statement and Approach}
\paragraph{Joint Embedding Loss Functions.}\label{Joint Embedding Loss} Throughout this paper, we assume we are given $n$ data points $\{x_i\}_{i=1}^{n}\subseteq\mathcal{X} \subseteq \mathbb{R}^m$. For each of these points $x_i$, we create two views by sampling from a random augmentation map $T:\mathcal{X}\rightarrow\mathcal{X}$ with distribution $\mathcal{T}$. This yields $T_i(x_i)$ and $T'_i(x_i)$. Both views are passed to a function $f: \mathcal{X} \rightarrow \mathbb{R}^d$ that maps to a lower-dimensional space with $d < \min(n,m)$. In practice, $f$ is typically a neural network, while theoretical works often consider $f$ to be a kernel function \citep{ kiani_joint_2022, NEURIPS2022_aa56c745, pmlr-v202-simon23a,pmlr-v202-cabannes23a, esser2024non}. We denote $\mathcal{F}$ for the function class from which $f$ is chosen. Denoting $z_i = f(T(x_i))$ and $z_i' = f(T'(x_i))$, a joint embedding loss $L(Z,Z')$ is computed on the matrices $Z=[z_1,\ldots,z_n]$ and $Z'=[z_1',\ldots,z_n']\in \mathbb{R}^{d\times n}$. The loss minimized over $\mathcal{F}$ is of the form 
    $\mathcal{L}\Bigl(\{x_i\}_{i=1}^{n},\mathcal{T},f\Bigl)=\mathbb{E}_{Z,Z'}\Bigl[L(Z,Z')\Bigl]$,
where the expectation is with respect to the randomness in the augmentations. Depending on the method, $T_i$ and $T'_i$ can either be independent samples from the underlying distribution $\mathcal{T}$, or they can be conditioned on the event $T'_i\neq T_i$ (in that case, augmentations always yield two distinct views of the same point $x_i$). .

\paragraph{Kernels, RKHS and Function class.}
We assume that we are given a kernel $\kappa: \mathcal{X} \times \mathcal{X} \rightarrow \mathbb{R}$. The canonical feature map of this kernel $\kappa$ is denoted by $\phi(x):\mathcal{X}\rightarrow\mathcal{H}$, where $\mathcal{H}$ is the reproducing kernel Hilbert space (RKHS) associated with $\kappa$.
From there, we use the tensor product notation $\mathbb{R}^d\otimes\mathcal{H}$ to denote the set of linear maps from $\mathcal{H}$ to $\mathbb{R}^d$. An element $\Theta \in \mathbb{R}^d\otimes\mathcal{H}$ can be thought of as a matrix with $d$ rows that are elements of $\mathcal{H}$. The adjoint operator, denoted as $\Theta^\top  \in \mathcal{H}\otimes\mathbb{R}^d$, can be thought of as the transposed matrix. We equip the function space with the Hilbert-Schmidt norm on $\mathbb{R}^d\otimes\mathcal{H}$. The space $\mathcal{F}$ from which our model $f$ is chosen is given by the set of all functions $x \mapsto \Theta^\top  \phi(x)$ where $\Theta \in \mathbb{R}^d\otimes\mathcal{H}$, again equipped with the Hilbert-Schmidt norm on $\Theta$. In this paper, we restrict ourselves to the least norm solutions of $\mathcal{L}\left(\{\phi(x_i)\}_{i=1}^n, \mathcal{T}, f\right)$ on $\mathcal{F}$, as is commonly done in kernel methods. Formally:
\begin{definition}[Least Norm Minimizers]
A function $\hat f$ is said to minimize a loss $\mathcal{L}$ with least norm over $\mathcal{F}$ if $\mathcal{L}(\hat f) = \inf\limits_{f \in \mathcal{F}} \mathcal{L}(f)$ and $\|\hat f\|_\mathcal{F} \le \| f \|_\mathcal{F}$ for all $f$ with $\mathcal{L}(f) = \inf\limits_{g \in \mathcal{F}} \mathcal{L}(g)$.
\end{definition}

\paragraph{Problem Statement.} 
Given data $\{x_i\}_{i=1}^{n}\subset\mathcal{X}$ and a desired target function $f^\ast:\mathcal{X}\rightarrow\mathbb{R}^d$, our goal is to find a distribution $\mathcal{T}$ of random transformations such that $f^*$ is equivalent to $\min_{f\in\mathcal{F}}\mathcal{L}(\{x_i\}_{i=1}^n, \mathcal{T},f)$. We consider two functions $f$ and $g$ to be equivalent if they are identical up to an invertible affine transformation. This notion of equivalence is justified by \citet[Lemma 3.1]{haochen_provable_2022}, who prove that invertible affine transformations do not influence the downstream performance when using the standard linear evaluation protocol \citep{chen_simple_2020}.

Motivated by the empirical study of \citet{moutakanni2024dontneeddataaugmentationselfsupervised}, we consider distributions $\mathcal{T}$ that can lead to only two possible augmentations --- the identity and some other transformation  $T$. We construct $T$ by first finding a suitable augmentation in the Hilbert space $\mathcal{H}$, and only later translating it back to the input space $\mathcal{X}$. Given a target representation $f^*$, we first identify a binary distribution $\mathcal{T}_{\mathcal{H}}$ that either samples from the identity operator in $\mathcal{H}$ or reduces to a transformation $T_{\mathcal{H}}$. The distribution $\mathcal{T}_\mathcal{H}$ is carefully chosen in a way that ensures the least-norm minimizer of the loss is equivalent to the desired target $f^*$. Having found the correct augmentation in the Hilbert space, we translate it back to the input space by defining $T\coloneqq \tilde{\phi}^{-1}\circ T_{\mathcal{H}}\circ\phi$
where $\tilde{\phi}^{-1}$ is a solution to the pre-image problem for kernel machines. Our approach is illustrated in Figure \ref{fig:our_approach}.

\paragraph{Continuous-Time Lyapunov Equations.} Our theoretical results for the Barlow Twins loss partly build on solutions to continuous-time Lyapunov equations. These are matrix equations of the form $AX+XA^\top =C$, where $X,A,C$ are matrices of appropriate size. For our purposes, the main result we need is that if $C$ is symmetric and $A$ is a positive-definite matrix, then there is a unique symmetric solution $X$ that solves the Lyapunov equation \citep[Theorem 6.4.2]{ortega_other_1987}. The solution $X$ can be represented analytically in terms of $C$ and $A$.

\paragraph{Additional Notation.} For points $\{x_i\}_{i=1}^n$ we denote by $\Phi\in\mathcal{H}\otimes\mathbb{R}^n$ the operator $[\phi(x_1), \ldots, \phi(x_n)]$. $\Phi$ can be thought of as the transpose of the design matrix.
We use $\mathcal{D}\otimes\mathcal{D}$ to denote the product of two probability measures $\mathcal{D}$. $I_\mathcal{H}$ is the identity operator. $\circ$ is the composition operator. The centering matrix is defined as $H_n=I_n - \frac{1}{n} \mathbf{1}_n \mathbf{1}_n^\top \in \mathbb{R}^{n \times n}$, where $\mathbf{1}_n$ is a vector of ones.  We omit subscripts where the dimension is clear from the context. We denote the sample covariance matrix by $\text{cov}(X) = \frac{1}{n}(XH)(XH)^\top $. Finally, the equivalence relation $f\overset{\text{aff}}{\sim}g\big\rvert_{S}$ means there exist an invertible matrix $A$ and a vector $b$ such that $\forall_{x\in S}{f(x)=Ag(x)+b}$; when $S$ is the entire range of $g$ and $f$ we simply denote $f\overset{\text{aff}}{\sim}g$. 

\paragraph{Assumptions.} In this paper, we assume two conditions to be satisfied. Firstly, the target representations should have full rank. This condition is reasonably mild: if $f^*: \mathbb{R}^m\rightarrow\mathbb{R}^d$ has linearly dependent dimensions in its range, then there exists an equivalent $f': \mathbb{R}^m \rightarrow \mathbb{R}^{d'}
$ with independent dimensions such that $f^*$ and $f'$ will perform the same under the standard linear evaluation protocol. 

\begin{assumption}\label{non_redundant_representation} Given a target function $f^*$ and data $\{x_i\}_{i=1}^{n}$, the covariance matrix 
$\text{cov}\left([f^\ast(x_1), \ldots, f^\ast(x_n)]\right)$ has full rank $d$.
\end{assumption}

Secondly, the Gram matrix of the kernel should be invertible. For universal kernels (e.g. the Gaussian RBF kernel), Condition \ref{invertable_gram} holds for any set of distinct samples $\{x_i\}_{i=1}^{n}$.

\begin{assumption}\label{invertable_gram}
Given a kernel function $\kappa(\cdot, \cdot)$ and data $\{x_i\}_{i=1}^{n}$,  $K=[\kappa(x_i, x_j)]_{i,j}$ has full rank.
\end{assumption}

\subsection{Non-Contrastive and Contrastive Loss Functions}
We present the loss functions we analyze. All claims made are proven in the appendix. 

\paragraph{VICReg.} The VICReg loss \citep{bardes_vicreg_2022} is defined via three components:
\begin{gather*}
\mathcal{L}_{\text{VIC}}\Bigl(\{x_i\}_{i=1}^{n}, \mathcal{T},f\Bigl)=\mathbb{E}_{Z,Z'}\Bigl[\lambda s(Z, Z') + \mu [ v(Z) + v(Z')] + \nu [ c(Z) + c(Z')]\Bigl] \text{, where}
\\
s(Z,Z') = \frac{1}{n}\sum_{i=1}^{n}\Vert z_{i}-z_{i}' \Vert_{2}^{2},
\quad
v(Z) = \frac{1}{d}\sum_{i=1}^{d}\left(1-[\text{cov}\left(Z\right)]_{i,i}\right)^{2},
\quad
c(Z) = \frac{1}{d}\sum_{i\neq j}^{d}[\text{cov}(Z)]_{i,j}^2
\end{gather*}

and $\lambda,\mu, \nu > 0$ are hyper-parameters. $z_i=f\Bigl(T_i(x_i)\Bigl),z_i'=f\Bigl(T'_i(x_i)\Bigl)$, where $\left(T_i, T'_i\right) {\sim} \mathcal{T}\otimes\mathcal{T}$.

This definition is identical to the original VICReg loss \citep{zbontar_barlow_2021}, except we use the $L_2$ loss instead of the hinge loss in the definition of $v$. To be consistent with previous theoretical works \citep{NEURIPS2022_aa56c745,pmlr-v202-cabannes23a}, we define $v$ using the variance instead of the standard deviation, however our results still hold when using the standard deviation (Appendix \ref{appendix_def_vicreg}).

\paragraph{Barlow Twins.} The Barlow Twins Loss \citep{zbontar_barlow_2021} is defined in the following way:
\begin{align*}
    \mathcal{L}_{\text{BT}}\Bigl(\{x_i\}_{i=1}^{n},\mathcal{T},f\Bigl)=\mathbb{E}_{Z,Z'}\Bigl[\sum_i^n (1- \mathcal{C}_{ii})^2+ \lambda\sum_{i\neq j} (1- \mathcal{C}_{ij})^2\Bigl];\quad \mathcal{C}=\frac{1}{2n}(ZZ'^\top  + Z'Z^\top )
\end{align*}
Where $\lambda>0$ is a hyperparameter. 
$z_i=f\Bigl(T_i(x_i)\Bigl),z_i'=f\Bigl(T'_i(x_i)\Bigl)$, where the random augmentations $\left(T_i, T_i'\right) {\sim} \mathcal{T}\otimes\mathcal{T}$ are conditioned on the event $T_i\neq T'_i$. Similar to \citet{pmlr-v202-simon23a}, we consider a symmetrized version of the unnormalized cross-correlation for $\mathcal{C}$.

\paragraph{Spectral Contrastive Loss.} The Spectral Contrastive Loss (SCL) \citep{haochen_provable_2022} is a theoretical proxy to the SimCLR loss. We define the following sample loss:
\begin{align*}
\mathcal{L}_{\text{SCL}}\Bigl(\{x_i\}_{i=1}^{n},\mathcal{T},f\Bigl)&=\mathbb{E}_{Z,Z'}\Bigl[-\frac{2}n{}\sum_{i=1}^nz_i^\top z'_i+\frac{1}{n^2}\sum_{i\neq j}(z_i^\top z'_j)^2 + \frac{1}{2n^2}\sum_{i=1}^n( \Vert z_i\Vert^4+\Vert z_i'\Vert^4)\Bigl]
\end{align*}
$z_i=f\Bigl(T_i(x_i)\Bigl),z_i'=f\Bigl(T'_i(x_i)\Bigl)$, where $\left(T_i, T'_i\right) {\sim} \mathcal{T}\otimes\mathcal{T}$.

\begin{remark*}
    Without regularization, the term  $-\frac{2}n{}\sum_{i=1}^nz_i^\top z'_i+\frac{1}{n^2}\sum_{i\neq j}(z_i^\top z'_j)^2$ alone can diverge to $-\infty$ for small batches, as we show in the appendix. Previous definitions of the sample loss for SCL deal with that fact in various ways. For example,  \citet{haochen_provable_2022} enforce the norm of representations to be fixed in their experiments, while \citet{esser2024non} regularize using the norm $\Vert\cdot\Vert_{\mathcal{H}}$. The regularization term $\frac{1}{2n^2}\sum_{i=1}^n( \Vert z_i\Vert^4+\Vert z_i'\Vert^4)$ is theoretically motivated by a guaranteed tight bound $\mathcal{L}_{\text{SCL}}\geq-d$ for any $n$, which we prove in the appendix. As $n\rightarrow\infty$, the regularization decreases and $\mathcal{L}_{\text{SCL}}$ uniformly converges to the population SCL as defined in \citet{haochen_provable_2022}.    
\end{remark*}

\section{Main Results}\label{main_sec}

In this section, we develop our main results for VICReg (Theorem \ref{main lemma VICReg}), the Spectral Contrastive Loss (Theorem \ref{main lemma SCLNorm}) and Barlow Twins (Theorem \ref{main lemma Barlow Twins}), proving that any desired target representations can be found by using suitable augmentations, up to equivalence. We defer the proofs to the appendix. We begin by defining the optimal augmentations for VICReg and SCL, and then verify their optimality.

\begin{definition}[Optimal VICReg and SCL Augmentations]\label{definition vicreg} Consider a target representation of the form $f^* = C \Phi^\top $, where $C \in \mathbb{R}^{d\times n}$ has full rank. Assuming Condition \ref{invertable_gram}, we define $\mathcal{T}_\mathcal{H}(C)$ as a distribution of transformations yielding $I_\mathcal{H}$ and $\Phi\,C^\top \bigl(C\,K\,C^\top \bigr)^{-1}C\,\Phi^\top $ with probability $\frac{1}{2}$ each.
\end{definition}

\begin{restatable}[Optimality of Augmentations for VICReg]{theorem}{mainLemmaVICReg}\label{main lemma VICReg}
Let $f^* = C\,\Phi^\top , C \in \mathbb{R}^{d\times n}$ satisfy Condition \ref{non_redundant_representation} and assume Condition \ref{invertable_gram}.  
Then, $C$ has full rank and any $f$ that is a least norm minimizer of 
$\mathcal{L}_{\text{VIC}}\Bigl(\{\phi(x_i)\}_{i=1}^n,\;\mathcal{T}_\mathcal{H}(C),\;f\Bigr)$
over $\mathbb{R}^d \otimes \mathcal{H}$ satisfies $f\overset{\text{aff}}{\sim}f^*$.
\end{restatable}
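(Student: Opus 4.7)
The plan is to reduce the infinite-dimensional optimization over $\mathbb{R}^d \otimes \mathcal{H}$ to a finite-dimensional matrix problem, and then to exploit the non-negativity of the three VICReg terms to pin down the global minimizers. As a preliminary, I would observe that Condition \ref{non_redundant_representation} says the matrix $S := \frac{1}{n}CKHKC^\top$ (which equals $\text{cov}([f^*(x_1),\ldots,f^*(x_n)])$) has rank $d$; since $K$ is invertible, this already forces $C$ to have rank $d$. Next, because $T_\mathcal{H}(C) = \Phi C^\top(CKC^\top)^{-1}C\Phi^\top$ maps $\mathcal{H}$ into $\text{range}(\Phi)$, every training point and its augmented copy lie in $\text{span}\{\phi(x_i)\}$. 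A standard representer argument --- the loss depends only on the component of $\Theta$ in this span, and the least-norm condition discards the orthogonal component --- therefore yields $\hat\Theta = M\Phi^\top$ for some $M \in \mathbb{R}^{d\times n}$. Setting $D := C^\top(CKC^\top)^{-1}CK$, a direct computation using the independence of the two augmentation draws expresses the alignment term as $\mathbb{E}[s(Z,Z')] = \frac{1}{2n}\|MK(I-D)\|_F^2$.

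The key step is to show $\mathcal{L}_{\text{VIC}} = 0$ is attainable and to characterize its zero-set exactly. Trying $M = NC$ for any $N \in \mathbb{R}^{d\times d}$ gives $MK = NCK = MKD$, so the alignment vanishes; moreover $Z_i$ becomes \emph{deterministic} (both augmentations produce $NCKe_i$), whence $\text{cov}(Z) = N S N^\top$. Since $S \succ 0$, picking $N = S^{-1/2}$ makes $\text{cov}(Z) = I$, which simultaneously kills the variance and covariance terms, so $\mathcal{L}_{\text{VIC}} = 0$ is realized. Conversely, non-negativity of all three terms forces any minimizer to zero each of them out: alignment $= 0$ gives $MK = MKD$, and invertibility of $K$ together with the identity $D = C^\top(CKC^\top)^{-1}CK$ implies $M = NC$ for some $N \in \mathbb{R}^{d\times d}$ (take $N := MKC^\top(CKC^\top)^{-1}$); then $v + c = 0$ reduces to $N S N^\top = I$, which --- since $S \succ 0$ --- forces $N$ to be invertible. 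Hence $\hat f = N f^*$ and $\hat f \overset{\text{aff}}{\sim} f^*$.

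The main technical hurdle is that $v$ and $c$ depend nonlinearly on $\text{cov}(Z)$, which is itself a random matrix, so an expectation-swap argument cannot directly reduce them to clean closed-form expressions in $M$. This is sidestepped by the achievability trick above: exhibiting an $M$ attaining the lower bound $\mathcal{L}_{\text{VIC}} \geq 0$ lets non-negativity do all the work, and reduces the characterization of minimizers to the two algebraic conditions $MK = MKD$ and $N S N^\top = I$, which are easy to solve in closed form.
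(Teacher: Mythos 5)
Your proposal is correct and follows essentially the same route as the paper's proof: a representer argument reduces the problem to $f = M\Phi^\top$, non-negativity of the three VICReg terms forces $Z=Z'$ and $\text{cov}(Z)=I$ for every realization of the augmentations, and the identity $MK = MKD$ with $D = C^\top(CKC^\top)^{-1}CK$ yields $M = NC$. The only (harmless) deviation is that the paper first treats the whitened case $\text{cov}([f^*(x_i)]_i)=I_d$, where $N$ comes out orthogonal, and then un-whitens via an invertible $W$, whereas you conclude invertibility of $N$ directly from $NSN^\top = I$ and $S \succ 0$ --- a mild streamlining that suffices because the theorem only asserts affine equivalence.
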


Interestingly, the exact same augmentations can be used for SCL:
\begin{restatable}[Optimality of Augmentations for SCL]{theorem}{mainLemmaSCLNorm}\label{main lemma SCLNorm}
Let $f^* = C\,\Phi^\top , C \in \mathbb{R}^{d\times n}$ satisfy Condition \ref{non_redundant_representation} and assume Condition \ref{invertable_gram}.  
Then, $C$ has full rank and any $f$ that is a least norm minimizer of 
$\mathcal{L}_{\text{SCL}}\Bigl(\{\phi(x_i)\}_{i=1}^n,\;\mathcal{T}_\mathcal{H}(C),\;f\Bigr)$
over $\mathbb{R}^d \otimes \mathcal{H}$ satisfies $f\overset{\text{aff}}{\sim}f^*$.
\end{restatable}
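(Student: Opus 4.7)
The plan is to exploit the tight bound $\mathcal{L}_{\text{SCL}} \ge -d$ noted in the Remark and characterize its equality conditions, showing that they force $f$ to be an invertible linear transformation of $f^\ast$ on the training data. The structure closely parallels the VICReg proof (Theorem~\ref{main lemma VICReg}), since $\mathcal{T}_\mathcal{H}(C)$ is the same augmentation distribution.

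\emph{Step 1 (representer reduction).} Since $T_{\mathcal H}\phi(x_i) = \Phi C^\top(CKC^\top)^{-1}CKe_i$ lies in $\operatorname{range}(\Phi)$, any component of $\Theta \in \mathbb{R}^d \otimes \mathcal{H}$ orthogonal to $\operatorname{range}(\Phi)$ contributes nothing to $\mathcal{L}_{\text{SCL}}$ while inflating $\|\Theta\|_{HS}$; the least-norm minimizer thus takes the form $\Theta = \Phi B^\top$ with $B \in \mathbb{R}^{d\times n}$. Writing $Q := C^\top(CKC^\top)^{-1}C$, I denote $a_i := BKe_i$ (non-augmented output) and $b_i := BKQKe_i$ (augmented output), so each of $z_i, z_i'$ is independently $a_i$ or $b_i$ with probability $1/2$.

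\emph{Step 2 (lower bound and equality).} Define $\mu_i := (a_i + b_i)/2 = \mathbb{E}[z_i]$, $\Sigma_i := (a_ia_i^\top + b_ib_i^\top)/2 = \mathbb{E}[z_iz_i^\top]$, and $\hat M := \tfrac{1}{n}\sum_i \mu_i\mu_i^\top$, $\hat\Sigma := \tfrac{1}{n}\sum_i \Sigma_i$. Taking expectations term by term (independence across $i\neq j$), applying the AM--GM-based inequality $\tfrac12(\|a_i\|^4 + \|b_i\|^4) \ge \operatorname{tr}(\Sigma_i^2)$, and completing the square against $I_d$ yield
\begin{align*}
\mathcal{L}_{\text{SCL}} \ge -2\operatorname{tr}(\hat M) + \operatorname{tr}(\hat\Sigma^2) = 2\operatorname{tr}(\hat\Sigma - \hat M) + \operatorname{tr}\bigl((\hat\Sigma - I_d)^2\bigr) - d \ge -d,
\end{align*}
using that $\hat\Sigma - \hat M = \tfrac{1}{n}\sum_i \operatorname{Cov}(z_i) \succeq 0$. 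Equality forces each conditional covariance to vanish, i.e.\ $a_i = b_i$ for all $i$ (augmentation invariance, $M = N$ where $M := BK$, $N := BKQK$), together with $\tfrac{1}{n}MM^\top = I_d$.

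\emph{Step 3 (affine equivalence).} The identity $M = N$, i.e.\ $BK = BKQK$, is equivalent (via invertibility of $K$) to $B = BKQ$, which forces each row of $B$ into $\operatorname{range}(C^\top)$; hence $B = WC$ for some $W \in \mathbb{R}^{d\times d}$. Substituting into $\tfrac{1}{n}MM^\top = I_d$ yields $W(CK^2C^\top)W^\top = nI_d$. Condition~\ref{non_redundant_representation} combined with invertibility of $K$ forces $C$ to have full rank $d$ (since $\operatorname{rank}(CKH_nKC^\top) = d$ requires $\operatorname{rank}(CK) = d$), so $CK^2C^\top \succ 0$ and $W$ must be invertible. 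Therefore $f(x) = B\Phi^\top\phi(x) = Wf^\ast(x)$ everywhere, giving $f \overset{\text{aff}}{\sim} f^\ast$.

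The main obstacle I anticipate is proving the tight bound in Step 2: the regularization term is precisely engineered so that the AM--GM step absorbs the diagonal self-correlations $\tfrac{1}{n^2}\sum_i(z_i^\top z_i')^2$, enabling the completion-of-squares identity to reduce the problem to the positivity of $\hat\Sigma - \hat M$ (average conditional covariance) and of $(\hat\Sigma - I_d)^2$. Without this careful regularizer one would only obtain a weaker bound, and the clean identification of \emph{both} augmentation invariance and the variance normalization as equality conditions would be lost.
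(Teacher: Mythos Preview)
Your argument is correct. The computations in Step~2 check out: for $i\neq j$ independence gives $\mathbb{E}[(z_i^\top z_j')^2]=\operatorname{tr}(\Sigma_i\Sigma_j)$, the regularizer contributes $\tfrac{1}{n^2}\sum_i\tfrac12(\|a_i\|^4+\|b_i\|^4)\ge\tfrac{1}{n^2}\sum_i\operatorname{tr}(\Sigma_i^2)$ via Cauchy--Schwarz and AM--GM, these two diagonal pieces cancel, and the completion-of-squares identity you wrote is exact. The equality case $\operatorname{tr}(\hat\Sigma-\hat M)=0$ forces each $\operatorname{Cov}(z_i)=\tfrac14(a_i-b_i)(a_i-b_i)^\top$ to vanish, hence $a_i=b_i$, which in turn saturates the AM--GM inequality; and Step~3 is the same row-space argument as in the VICReg proof. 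One small point you leave implicit: you should note that the bound $-d$ is actually attained (e.g.\ by $B=WC$ with $W$ a whitening of $\tfrac{1}{n}CK^2C^\top$), so that least-norm minimizers really do satisfy $\mathcal{L}_{\text{SCL}}=-d$.

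Your route differs from the paper's. The paper first proves the \emph{pointwise} bound $L_{\text{SCL}}(Z,Z')\ge -d$ for arbitrary $Z,Z'$ (via a triangle-inequality manipulation, not AM--GM on moments), with equality iff $Z=Z'$ and $\tfrac{1}{n}ZZ^\top=I_d$; it then observes that these equality conditions coincide with those of a cross-correlation variant $\mathcal{L}_{\text{VIC-corr}}$ of VICReg and transfers the VICReg proof verbatim. You instead take expectations first and exploit the binary support of $\mathcal{T}_\mathcal{H}(C)$ to reduce everything to the moment matrices $\hat M,\hat\Sigma$, bypassing the VIC-corr intermediary entirely. Your approach is more self-contained and arguably cleaner for this specific augmentation, but the AM--GM step $\tfrac12(\|a_i\|^4+\|b_i\|^4)\ge\operatorname{tr}(\Sigma_i^2)$ is tailored to a two-point equiprobable augmentation and would not immediately extend to richer $\mathcal{T}$; the paper's pointwise bound is augmentation-agnostic and modular, at the cost of introducing an auxiliary loss.
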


Notably, the transformation $T=\Phi\,C^\top \bigl(C\,K\,C^\top \bigr)^{-1}C\,\Phi^\top $ has rank $d$ and $T^2=T$, whereas the data spans a subspace of dimension $n>d$ in $\mathcal{H}$ as a consequence of Condition \ref{invertable_gram}. Thus, $T$ can be interpreted as a projection to a low dimensional subspace in the feature space $\mathcal{H}$. In particular, the augmentations induced by $\mathcal{T}_\mathcal{H}$ will have a different marginal distribution than the data, no matter what $f^*$ is. This is contrary to previous interpretations (discussed in Section \ref{related work}), which argue that data augmentations should produce views similar to the data to learn a ``good'' $f^*$.

Our results for Barlow Twins take a similar form. We again state the optimal augmentations first.

\begin{definition}[Optimal Barlow Twins Augmentations]\label{definition bt} Given a rank $d$ matrix $C \in \mathbb{R}^{d\times n}$ and assuming Condition \ref{invertable_gram}, we define  $\mathcal{T}_\mathcal{H}^\text{BT}(C)$ as a distribution of transformations yielding $I_\mathcal{H}$ and $\Phi K^{-\frac{1}{2}}B K^{-\frac{1}{2}}\Phi^\top $ with probability $\frac{1}{2}$ each, where $B$ is the unique solution to the continuous-time Lyapunov equation
$KB + BK^\top =2n \cdot K^{\frac{1}{2}} C^\top \Bigl(CKC^\top \Bigr)^{-2}CK^{\frac{1}{2}}$.

\end{definition}
\begin{restatable}[Optimality of Augmentations for Barlow Twins]{theorem}{mainLemmaBt}\label{main lemma Barlow Twins}
Let $f^* = C\,\Phi^\top , C \in \mathbb{R}^{d\times n}$ satisfy Condition \ref{non_redundant_representation} and assume Condition \ref{invertable_gram}. 
Then, $C$ has full rank and any $f$ that is a least norm minimizer of $\mathcal{L}_{\text{BT}}\Bigl(\{\phi(x_i)\}_{i=1}^n,\;\mathcal{T}_\mathcal{H}^\text{BT}(C),\;f\Bigr)$
over $\mathbb{R}^d \otimes \mathcal{H}$ satisfies $f\overset{\text{aff}}{\sim}f^*$.
\end{restatable}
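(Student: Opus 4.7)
The plan is to reduce $\mathcal{L}_{\text{BT}}$ to a deterministic quadratic form in $\Theta$, identify the minimizers at which the loss vanishes, and then single out the least-norm one up to an orthogonal factor.

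First I would verify that $C$ has rank $d$: since the data-side representation $[f^*(x_1),\ldots,f^*(x_n)]$ equals $CK$, Condition~\ref{non_redundant_representation} forces $\text{rank}(CKH)=d$, which by invertibility of $K$ (Condition~\ref{invertable_gram}) forces $\text{rank}(C)=d$. Next, because $\mathcal{T}_\mathcal{H}^\text{BT}(C)$, conditioned on $T_i\neq T_i'$, only produces the ordered pairs $(I_\mathcal{H},T_\mathcal{H})$ and $(T_\mathcal{H},I_\mathcal{H})$, and because $\mathcal{C}=\frac{1}{2n}(ZZ'^\top+Z'Z^\top)$ is symmetric in its two arguments, both pairs yield the same $\mathcal{C}$. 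Hence the expectation trivializes and, using self-adjointness of $T_\mathcal{H}$,
\[
\mathcal{C}=\tfrac{1}{2n}\,\Theta^\top\bigl(\Phi\Phi^\top T_\mathcal{H}+T_\mathcal{H}\Phi\Phi^\top\bigr)\,\Theta.
\]

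The crucial algebraic step is to substitute $T_\mathcal{H}=\Phi K^{-1/2}BK^{-1/2}\Phi^\top$ and use $\Phi^\top\Phi=K$ to rewrite the anti-commutator as $\Phi K^{-1/2}(KB+BK)K^{-1/2}\Phi^\top$. Invoking the Lyapunov equation that defines $B$ collapses this to $\mathcal{C}=M A M^\top$, where $M:=\Theta^\top\Phi$, $A:=C^\top(CKC^\top)^{-2}C=PP^\top$, and $P:=C^\top(CKC^\top)^{-1}$. The loss is now a nonnegative deterministic function of $\Theta$, vanishing precisely when $MP\in O(d)$. The candidate $\Theta^*=\Phi C^\top$ yields $M^*=CK$ and $M^*P=I_d$, so the infimum $0$ is attained.

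Finally, to pin down the least-norm minimizer, I would observe that among all $\Theta$ with $\Theta^\top\Phi=M$ the least-norm choice is $\Theta=\Phi K^{-1}M^\top$, with $\|\Theta\|_{\text{HS}}^2=\text{tr}(MK^{-1}M^\top)$. Writing any admissible $M$ as $QCK+N$ with $Q:=MP\in O(d)$ and $NP=0$, the condition $NP=0$ is equivalent to $NC^\top=0$ (multiplying on the right by $CKC^\top$), so the cross term $\text{tr}(QCN^\top)$ vanishes and $\text{tr}(MK^{-1}M^\top)=\text{tr}(CKC^\top)+\text{tr}(NK^{-1}N^\top)$. Positive definiteness of $K^{-1}$ forces $N=0$, giving $M=QCK$ and hence $f_\Theta(x)=MK^{-1}\Phi^\top\phi(x)=Q\,f^*(x)$, an invertible affine map of $f^*$. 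I expect the main obstacle to be the non-commutative algebra in the Lyapunov step: conjugation by $K^{\pm 1/2}$ is what converts the anti-commutator $\Phi\Phi^\top T_\mathcal{H}+T_\mathcal{H}\Phi\Phi^\top$ into exactly the left-hand side of the Lyapunov equation defining $B$, after which the rest reduces to standard least-norm and quadratic-minimization arguments.
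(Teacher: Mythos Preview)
Your proof is correct and follows essentially the same strategy as the paper: reduce the Barlow Twins loss to the deterministic condition $\mathcal{C}=I_d$ (using that the conditioning $T_i\neq T_i'$ makes each summand independent of the coin flip), collapse the anti-commutator $\Phi\Phi^\top T_\mathcal{H}+T_\mathcal{H}\Phi\Phi^\top$ via the Lyapunov equation, arrive at the condition $M\,C^\top(CKC^\top)^{-2}C\,M^\top=I_d$ on $M=\Theta^\top\Phi$, and finish with a least-norm argument.

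The only organizational difference is in the last step. The paper first invokes a representer-type result (Proposition~\ref{representer}) to force $f=A\Phi^\top$, and then appeals to Lemma~\ref{crazy_lemma} to claim that the solutions of $AKC^\top(CKC^\top)^{-2}CKA^\top=I_d$ are exactly $\{QC:Q\in O(d)\}$. You instead keep $\Theta$ general, note that the least-norm $\Theta$ with prescribed $M=\Theta^\top\Phi$ is $\Phi K^{-1}M^\top$, and then minimize $\mathrm{tr}(MK^{-1}M^\top)$ over the admissible set $M=QCK+N$ with $NC^\top=0$, killing $N$ by positive definiteness of $K^{-1}$. Your route is arguably cleaner: Lemma~\ref{crazy_lemma} as stated in the paper only pins down $AKC^\top$ (not $A$ itself), so solutions of the form $QC+N'$ with $N'KC^\top=0$ are silently excluded there, whereas your decomposition handles them explicitly and eliminates them through the norm minimization.
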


\begin{remark*}
    Assuming the functional form $f^*=C\Phi^\top $ for the target representation is not a restrictive condition. By virtue of the representer theorem \citep{scholkopf_generalized_2001}, any least norm minimizer of a loss function that only depends on the training data is certainly contained in the span of $\{\phi(x_i)\}_{i=1}^n$. 
    For a general function $f^*:~\mathcal{X}\rightarrow\mathbb{R}^d$, for example a pretrained ResNet, Corollary~\ref{main proposition} below allows exactly reconstructing representations $\{f^*(x_i)\}_{i=1}^n$ on any set of training data. 
\end{remark*}

\begin{restatable}[Reconstruction of General Representations]{corollary}{mainProposition}\label{main proposition}
Let $(\mathcal{L}, \mathcal{T})\in \Bigl\{(\mathcal{L}_{\text{SCL}}, \mathcal{T}_\mathcal{H}), (\mathcal{L}_{\text{VIC}}, \mathcal{T}_\mathcal{H}), (\mathcal{L}_{\text{BT}}, \mathcal{T}_\mathcal{H}^\text{BT})\Bigr\}$ and $f^*:\mathcal{X}\rightarrow\mathbb{R}^d$ satisfy Condition \ref{non_redundant_representation} on data $\{x_i\}_{i=1}^n$ satisfying Condition \ref{invertable_gram}. Define $F=\Bigl[f^*(x_1),\ldots, f^*(x_n)\Bigr]$. Then, any $f$ that is a least norm minimizer of 
$\mathcal{L}\Bigl(\{\phi(x_i)\}_{i=1}^n,\;\mathcal{T}_\mathcal{H}(FK^{-1}),\;f\Bigr)$
over $\mathbb{R}^d \otimes \mathcal{H}$ satisfies 
$f\circ\phi\overset{\text{aff}}{\sim}f^*\big\rvert_{\{x_i\}_{i=1}^n}$
\end{restatable}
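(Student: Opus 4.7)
The plan is to reduce Corollary~\ref{main proposition} to the three main theorems, by lifting the general target $f^*$ to an auxiliary target $\tilde f^*\in\mathbb{R}^d\otimes\mathcal{H}$ that interpolates $f^*$ on the training data, and then transferring the affine equivalence produced by the theorems back to $f^*$ restricted to $\{x_i\}_{i=1}^n$. The guiding observation is that, although a general $f^*$ may lie outside the span of $\{\phi(x_i)\}$, only its values on the training set matter for the conclusion stated in the corollary.

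First I would set $C := FK^{-1}$, which is well defined by Condition~\ref{invertable_gram}, and define $\tilde f^* := C\,\Phi^\top : \mathcal{H}\to\mathbb{R}^d$. The reproducing property gives $\Phi^\top \phi(x_j) = K e_j$, hence
\[
\tilde f^*(\phi(x_j)) \;=\; F\,K^{-1}K\,e_j \;=\; F e_j \;=\; f^*(x_j)
\qquad \text{for every } j=1,\ldots,n.
\]
Thus $\bigl[\tilde f^*(\phi(x_1)),\ldots,\tilde f^*(\phi(x_n))\bigr] = F$, and the sample covariance of $\tilde f^*$ over $\{\phi(x_i)\}$ equals $\text{cov}(F)$, which has rank $d$ by Condition~\ref{non_redundant_representation} applied to $f^*$. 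So $\tilde f^*$ satisfies Condition~\ref{non_redundant_representation}, and in particular $C = FK^{-1}$ has full rank $d$, so the hypotheses of Theorems~\ref{main lemma VICReg},~\ref{main lemma SCLNorm} and~\ref{main lemma Barlow Twins} are met for $\tilde f^*$.

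Next I would invoke the relevant theorem depending on $(\mathcal{L},\mathcal{T})$. For $(\mathcal{L}_{\text{VIC}}, \mathcal{T}_\mathcal{H})$ and $(\mathcal{L}_{\text{SCL}}, \mathcal{T}_\mathcal{H})$, Theorems~\ref{main lemma VICReg} and~\ref{main lemma SCLNorm} applied with $\mathcal{T}_\mathcal{H}(C)=\mathcal{T}_\mathcal{H}(FK^{-1})$ yield that any least-norm minimizer $f\in\mathbb{R}^d\otimes\mathcal{H}$ satisfies $f \overset{\text{aff}}{\sim} \tilde f^*$ as maps on $\mathcal{H}$; for $(\mathcal{L}_{\text{BT}}, \mathcal{T}_\mathcal{H}^{\text{BT}})$, the same conclusion follows from Theorem~\ref{main lemma Barlow Twins} with $\mathcal{T}_\mathcal{H}^{\text{BT}}(C)$. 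In either case there exist an invertible matrix $A$ and a vector $b$ with $f(v) = A\,\tilde f^*(v) + b$ for every $v\in\mathcal{H}$.

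The final step is to restrict this equivalence to the training inputs via the interpolation identity established above. Substituting $v = \phi(x_i)$ gives
\[
(f\circ\phi)(x_i) \;=\; A\,\tilde f^*(\phi(x_i)) + b \;=\; A\,f^*(x_i) + b
\qquad \text{for all } i=1,\ldots,n,
\]
which is exactly $f\circ\phi \overset{\text{aff}}{\sim} f^*\big\rvert_{\{x_i\}_{i=1}^n}$. I do not expect a substantive obstacle: the only delicate bookkeeping is ensuring that Condition~\ref{non_redundant_representation} for $f^*$ is inherited by $\tilde f^*$ and that the global affine equivalence on $\mathcal{H}$ delivered by the main theorems specializes cleanly to a pointwise equivalence on the finite set $\{x_i\}$, both of which are immediate consequences of $\tilde f^*\circ\phi = f^*$ on the training data.
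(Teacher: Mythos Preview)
Your argument is correct and follows essentially the same route as the paper: set $C=FK^{-1}$, observe that the associated target $C\Phi^\top$ interpolates $f^*$ on the training data (the paper does this implicitly by evaluating $FK^{-1}\Phi^\top$ on $\Phi$), apply the relevant main theorem to obtain an affine relation $f=M\,C\Phi^\top+b$, and then specialize to the points $\phi(x_i)$. Your explicit verification that $\tilde f^*$ inherits Condition~\ref{non_redundant_representation} is a nice addition that the paper's proof leaves tacit.
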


\section{Algorithm}\label{algorithm_sec}

Based on our theoretical results, we provide Algorithm \ref{algorithm} to compute the augmentations in the input space given an arbitrary target $f^*$ (e.g. a pretrained ResNet or a ViT). Below, we discuss the preimage problem, matching $f^*$ on new data, and how the algorithm can be scaled to large datasets.

\begin{algorithm}[!t]
\caption{Augmentation Learning for SSL}\label{algorithm}
\begin{algorithmic}[1]
\Require Data $\{x_i\}_{i=1}^n$, target function $f^*$, kernel function $\kappa(\cdot, \cdot)$, data to augment $\{\hat{x}_i\}_{i=1}^k$, ridge parameter $\lambda_{\text{ridge}}$, $\text{method}\in\{{\text{VICReg}, \text{Barlow Twins},\text{SCL}}\}$
\Ensure Augmented data $\{\hat{x}'_i\}_{i=1}^k$
\seperator

\State $K\leftarrow$ Kernel matrix on $\{x_i\}_{i=1}^n$; \quad $K_{X,\hat{X}}\leftarrow$ Cross-kernel matrix between $\{x_i\}_{i=1}^n$ and $\{\hat{x}_i\}_{i=1}^k$
\State $C \leftarrow [f^*(x_1),\ldots,f^*(x_n)](K + \lambda_{\text{ridge}} I)^{-1} $\Comment{Solve KRR to obtain representer coefficients.}
\If{method \textbf{is} VICReg \textbf{or} SCL}
\State $C_{aug} \leftarrow C^{T}\left(CKC^{T}\right)^{-1}CK_{X,\hat{X}}$ \Comment{Compute augmentation coefficients (Thms. \ref{main lemma VICReg}, \ref{main lemma SCLNorm}).}
\EndIf
\If{method  \textbf{is} Barlow Twins}
\State $B \leftarrow$ Solution of the Lyapunov equation $KB + BK^\top =2n K^{\frac{1}{2}}C^\top (CKC^\top )^{-2}CK^{\frac{1}{2}}$
\State $C_{aug} \leftarrow K^{-\frac{1}{2}}B K^{-\frac{1}{2}} K_{X,\hat{X}}$ \Comment{Compute augmentation coefficients (Thm. \ref{main lemma Barlow Twins}).} 
\EndIf
\State \Return solution of the pre-image problem for $\phi'=\Phi C_{aug}$.
\end{algorithmic}
\end{algorithm}

\paragraph{The Pre-Image Problem in Kernel Machines.} 
In the previous section, we derived $T_\mathcal{H}$ as maps in the RKHS $\mathcal{H}$, taking the form of $\phi(x_i) \mapsto \Phi M \Phi^\top  \phi(x_i)$ for a matrix $M \in \mathbb{R}^{n \times n}$. In particular, this construction yields augmented versions of original points $\phi(x)$ that are of the form $\Phi \theta$ for some $\theta \in \mathbb{R}^{n}$, and as such lie in the Hilbert space. In this section, we translate these augmentations back to the input space by identifying $x' \in \mathcal{X}$ such that $\phi(x') \approx \Phi \theta$. In general, such $x'$ need not exist \citep{NIPS1998_preimage} --- finding an approximation is a task known as the pre-image problem for kernel machines \citep{1353287}. For our purposes, we use the closed-form approximation proposed by \citet{honeine_closed-form_2011}, which we detail in the appendix. We emphasize that while our augmentations $T_\mathcal{H}$ are formally defined in the Hilbert space, the kernel trick allows all computations to happen in the input space. Moreover, for new data points $x'$, the functional form of the augmentations allows evaluating the transformation of $\phi(x')$.

\paragraph{Generalization on New Data.}
As mentioned in the previous section, our method allows exactly matching any desired target representation $f^*$ on given training data $\{x_i\}_{i=1}^n$ up to equivalence (i.e. up to an invertible affine transformation). It is natural to question the statistical soundness of interpolating on target representations $f^*(x_i)$ obtained e.g. from a ResNet. Therefore, we also incorporate the possibility of using a ridge parameter $\lambda>0$ to avoid overfitting.
We solve 
$\arg\min_{f\in\mathcal{F}}\frac{1}{n} \sum_{i=1}^n \left\| f^*(x_i)-  f(x_i) \right\|^2 + \lambda\Vert f\Vert_\mathcal{F}^2$ 
where $\mathcal{F}$ is, as before, the space of functions of the form $x \mapsto \Theta^\top  \phi(x), \Theta \in \mathbb{R}^d \otimes \mathcal{H}$ equipped with the Hilbert-Schmidt norm on $\Theta$. 
The optimal solution is $
f=F(K + \lambda_{\text{ridge}} I)^{-1}\Phi^\top $, where we denote  $F=\Bigl[f^*(x_1),\ldots,f^*(x_n)\Bigr]$.
Crucially, this is still a function of the form $f = C \Phi^\top $, and so we can use the augmentations $\mathcal{T}_\mathcal{H}\Bigl(F(K + \lambda_{\text{ridge}} I)^{-1}\Bigr)$ defined in Definition \ref{definition vicreg} and \ref{definition bt} respectively to obtain augmentations that achieve the desired target representations --- up to some small error that is introduced by $\lambda_{\text{ridge}} >0$.

\paragraph{Scalability.} Much work has been done to make kernel methods scalable. Of particular importance are random feature methods \citep{random_features} and the Nyström method \citep{Nystrom}. Kernel ridge regression has been adapted to utilize GPU hardware \citep{meanti_kernel_2020,meanti_efficient_2022}. These methods propose approximators of the form $f=C[\phi(x_1),\ldots ,\phi(x_{n'})]^\top $ where where $C$ is calculated as an efficient solution to KRR for $n$ points, $n'<n$. 
These approximate solutions are of the functional form necessitated by Theorems \ref{main lemma VICReg}, \ref{main lemma SCLNorm},\ref{main lemma Barlow Twins} and therefore can be used in Algorithm \ref{algorithm}.

\section{Experiments}

\paragraph{Visualizing the Augmentations.} We apply Algorithm \ref{algorithm} to MNIST \citep{lecun2010mnist} using representations obtained from a ResNet50 \citep{ResNet} architecture pretrained on ImageNet \citep{ImageNet}. We show the results for VICReg and SCL using different kernel functions in Figure \ref{fig:augmentations}. It can be observed that the augmented images are dissimilar from the original data, and that different kernel architectures result in semantically different augmentations. This is a direct consequence of the analytical form of the augmentations, which depend on the kernel matrix $K$.
 
 \paragraph{Evaluating the Reconstruction of $f^*$.} To verify our theoretical results, we measure the similarity between the target function $f^*$ and a minimizer $f$ of $\mathcal{L}$, under the augmentation $\mathcal{T_H}$. The proof of Theorems \ref{main lemma VICReg} and \ref{main lemma SCLNorm} shows that the affine transformation between $f$ and $f^*$ takes the form $f=WQf^\ast + b$, where $W$ is a whitening matrix, $b$ is the mean of the representations and $Q$ is an orthogonal matrix. Thus, in the case where $\text{cov}([f^*(x_1),\ldots  ,f^*(x_n)])=I_d$ and the representations are centered, we expect $f=Qf^*$. We verify this in our next set of experiments by computing the average Procrustes distance between $\{f^*(x_i)\}_{i=1}^n$ and $\{f(x_i)\}_{i=1}^n$. It is defined as
$\min_{Q\in \mathbb{R}^d,Q^\top Q=I_d}\frac{1}{n}\Vert F-QF^*\Vert_F$
where $\Vert\cdot \Vert_F$ is the Frobenius norm, $F^* = [f^*(x_1),\ldots ,f^*(x_n)]$ and $F=[f(x_1),\ldots ,f(x_n)]$.

\begin{figure}[!t]
  \centering
  \begin{subfigure}[b]{0.27\textwidth}
    \centering
    \includegraphics[width=\linewidth]{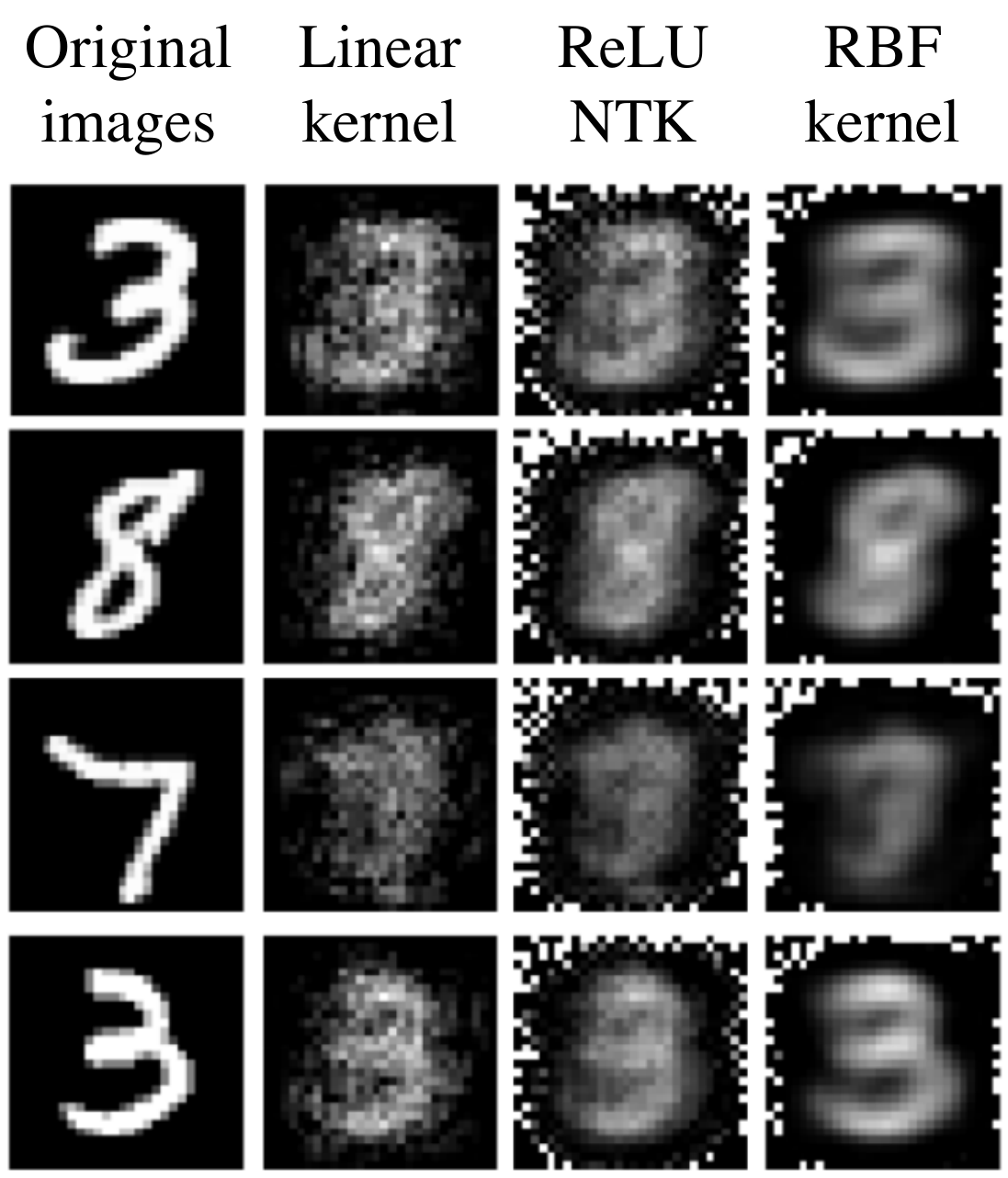}
    \subcaption{Input-space augmentations}\label{fig:augmentations}
  \end{subfigure}
  \hfill
  \begin{subfigure}[b]{0.31\textwidth}
    \centering
    \includegraphics[width=\linewidth]{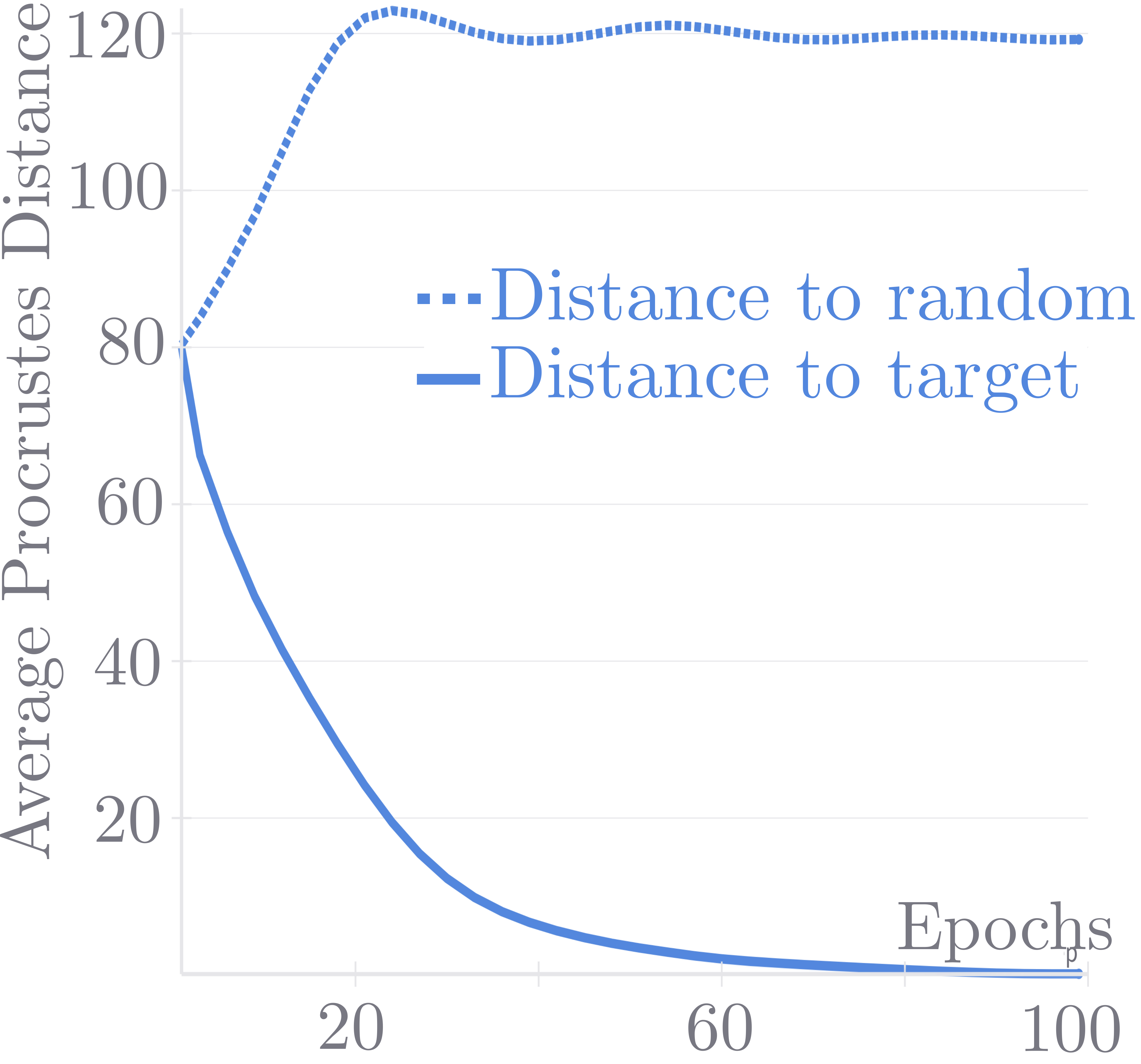}
    \subcaption{Spectral Constrastive Loss}\label{fig:reconstruction_scl}
  \end{subfigure}
  \hfill
  \begin{subfigure}[b]{0.32\textwidth}
        \centering
        \includegraphics[width=     \linewidth]{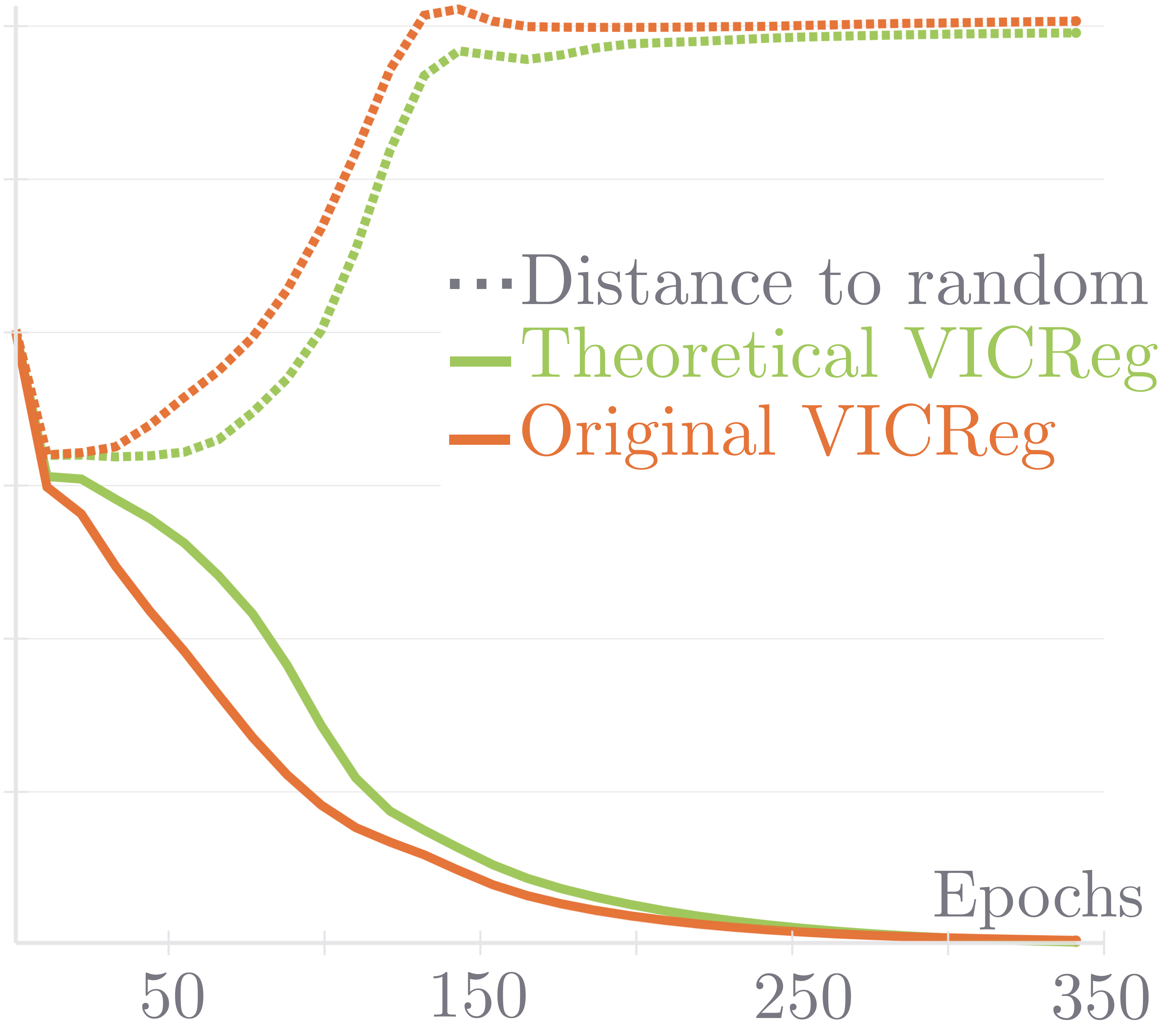}
        \subcaption{VICReg}\label{fig:reconstruction_real_vicreg}
      \end{subfigure}
      \hfill
  \caption{
    (a) Comparison of original and augmented MNIST images for different kernels. Notably, different function classes require different augmentations to achieve the same representations. 
     (b)-(c) The average Procrustes distance between the learned representations and target/random representations during training given the augmentations $\mathcal{T}_\mathcal{H}$. We consider SCL, VICReg as defined in Section~\ref{preliminaries} as well as the original VICReg; we achieve the target representations up to equivalence for all losses.
  }
  \label{fig:combined}
\end{figure}

 \paragraph{SCL, Theoretical VICReg, and VICReg.} In Figures \ref{fig:reconstruction_scl}-\ref{fig:reconstruction_real_vicreg}, We optimize $\mathcal{L}_\text{SCL}$, $\mathcal{L}_\text{VIC}$ and the original VICReg loss \citep{bardes_vicreg_2022} under the augmentation $\mathcal{T}_\mathcal{H}$ with target representations obtained from a ResNet50 pretrained on ImageNet. We compute the target representations for 10,000 MNIST images \citep{lecun2010mnist} and use the RBF kernel. We compare the Procrustes distance of the learned representations to the target representations, and to random representations with the same covariance. We observe that the representations learned with the constructed augmentation indeed achieve the target representations. It may seem surprising that the theoretical VICReg corresponds so closely to the original version, we explain this fact in the appendix. The final average Procrustes distances are 0.6 ± 0.1 for VICReg, 0.4 ± 0.1 for $\mathcal{L}_\text{VIC}$ and 0.004 ± 0.001 for $\mathcal{L}_\text{SCL}$.

\paragraph{Additional Experiments.} We repeat the experiments for a ViT-B/16 \citep{dosovitskiy2021imageworth16x16words} using CIFAR-10 \citep{Krizhevsky2009LearningML} and Tiny ImageNet \citep{Le2015TinyIV}. We optimize VICReg and measure the average procrustes distance after 300 epochs. The random baseline distance is 120.
\begin{table}[!htb]
  \centering
  \begin{tabular}{lccc}
    \toprule
    Model & MNIST & CIFAR-10 & Tiny ImageNet \\
    \midrule
    ResNet50 & 0.3 ± 0.2 & 0.14 ± 0.04 & 0.2 ± 0.2 \\
    ViT-B/16    & 0.7 ± 0.1 & 0.7 ± 0.2 & 0.4 ± 0.2 \\
    \bottomrule
  \end{tabular}
\end{table}

\section{Discussion}\label{discussion_sec}

Our work is the first to “invert” the typical mathematical analysis of SSL, asking what the optimal augmentation for a desired target representation must be. We answer this question for two of the most popular non-contrastive losses (VICReg and Barlow Twins) and the Spectral Contrastive Loss --- a theoretical proxy to SimCLR. Below, we discuss the main implications of our analysis.

\paragraph{The Role of Augmentations.}
Augmentations are commonly thought of as providing different views of the same data. Our theoretical analysis provides a more nuanced look at this matter. Theorems \ref{main lemma VICReg} and \ref{main lemma SCLNorm} allow to interpret augmentations as projections in the feature space to a subspace, which guarantees the minimizer $f$ of the SSL loss to be equivalent a specific $f^*$, meaning that \textit{augmentations can act as projections in the feature space instead of generators of different views.} Moreover, our results show that even one non-trivial augmentation is sufficient to learn good representations, and the distribution induced by this augmentation need not align with the original data distribution.  This provides a first theoretical justification for recent empirical observations \citep{moutakanni2024dontneeddataaugmentationselfsupervised}.

\paragraph{The Impact of Architecture.} 
Figure \ref{fig:augmentations} shows that for different architectures, the augmentations are recognizably different. This follows directly from our theory since the matrix $K$ differs depending on the learnable function class in our setup. 
 \citet{saunshi_understanding_2022} note that architecture has an implicit bias on the representations, and that this affects downstream performance. Our work also sheds light on the architectural influence in SSL, but from a new perspective: \textit{For the same target representation, the function class heavily influences the optimal augmentations needed to learn these representations.} 

\paragraph{The choice of $f^*$.}
Assuming a target $f^*$ is  standard in most theoretical analyses of supervised learning, and also common in SSL theory \citep{pmlr-v202-simon23a, zhai2024understanding}. By assuming an arbitrary target, we show that our results hold for any $f^*$.  While we choose $f^*$ to be a pretrained model in our experiments, there is no general consensus on what constitutes a good representation. 
When there is a natural target, our theoretical framework allows investigating the interplay between augmentations and learned representations from a new angle . As a demonstrating example, suppose the data-generating process is governed by a statistical model, such as the spiked covariance model commonly studied in high-dimensional statistics \citep{wainwright2019high}: $
    x \sim \mathcal{N}_m(0,\Sigma);
    \Sigma = \nu \cdot \theta \theta^\top + I_m$ 
 where $\nu > 0$ and $\theta \in \mathbb{R}^m$ is the (unknown) signal direction hidden in the covariance $\Sigma$. Previous works on SSL would only be able to answer what representations $\hat \theta^\top x$ are learned from a given augmentation. In contrast, our theoretical results from Section \ref{main_sec} would give us analytical expressions for what the ``correct'' augmentation is, i.e. the one that projects onto the signal direction $\theta$.

\paragraph{Connection Between VICReg and SCL.} Our results show that both VICReg and SCL can achieve the same $f^*$ given the same augmentations. The connection between VICReg and SCL has been shown before for the limit $n\rightarrow\infty$ \citep{pmlr-v202-cabannes23a}, and in the finite sample case, through replacing $c(Z)$ from the definition of VICReg with contrastive terms while keeping  the non-contrastive variance term $v(Z)$ \citep{garrido_duality_2022}. To the best of our knowledge, we are the first to show a theoretical connection between VICReg and SCL for finite data by regularizing \textit{only} with contrastive terms, thereby providing strong support to the notion of duality between contrastive and non-contrastive learning \citep{garrido_duality_2022}. Compared to previous works, we do not attempt to achieve functional equivalence between the losses, but instead show that the same augmentations can lead to the same $f^*$ when minimizing both losses, even if the optimization landscape is different.

\paragraph{Extension to Neural Networks.} The most significant limitation of our results is that, although neural networks can serve as the target $f^*$, the function class is assumed to be a kernel model, similar to previous theoretical studies on SSL. Our framework can extend, however, to neural networks as follows: Consider a parameterized function class $\Theta\phi_\theta(\cdot)$ and a target $f^*(x)=\Theta^*\phi_{\theta^*}(x)$. Taking the kernel $k_{\theta^*}(x,x')=\phi_{\theta^*}(x)^\top \phi_{\theta^*}(x')$, our theory provides augmentations such that $f^*$ uniquely optimizes the ``frozen'' neural network $\Theta\phi_{\theta^*}(\cdot)$, up to an affine transformation. Interestingly, $f^*$ will also be a global minimum over the ``unfrozen'' neural network $\Theta\phi_{\theta}(\cdot)$, as the proofs of Theorems~\ref{main lemma VICReg}, \ref{main lemma SCLNorm}, \ref{main lemma Barlow Twins} show that $f^*$ achieves the minimum possible value of the loss. The uniqueness of the minimum can only possibly be guaranteed in the infinite width limit, which is usually the case for neural networks, since otherwise $\phi_{\theta}$ changes during training. Nevertheless, it is encouraging that $f^*$ is a global minimizer also for finite-width neural networks, which motivates future research on the conditions under which $f^*$ or an equivalent function is achieved instead of other minima.

\begin{ack}
This work started as part of Shlomo Libo Feigin's Master's thesis at the Technical University of Munich. It was partially supported by the German Research Foundation (DFG) through the Research Grant GH 257/4-1, and by the DAAD programme Konrad Zuse Schools of Excellence in Artificial Intelligence, sponsored by the Federal Ministry of Education and Research (BMBF). We thank Vladimir Cherkassky, Daniel He, Daniel Bin Schmid, Nimrod de la Vega and Maedeh Zarvandi for the interesting and helpful discussions on this work. We also wish to thank Nil Ayday, Alexandru Craciun and Pascal Esser for the useful comments on the paper.
\end{ack}

\bibliographystyle{plainnat}
\bibliography{bib}

\begin{thebibliography}{55}
\providecommand{\natexlab}[1]{#1}
\providecommand{\url}[1]{\texttt{#1}}
\expandafter\ifx\csname urlstyle\endcsname\relax
  \providecommand{\doi}[1]{doi: #1}\else
  \providecommand{\doi}{doi: \begingroup \urlstyle{rm}\Url}\fi

\bibitem[Arora et~al.(2019)Arora, Khandeparkar, Khodak, Plevrakis, and Saunshi]{arora_theoretical_2019}
Sanjeev Arora, Hrishikesh Khandeparkar, Mikhail Khodak, Orestis Plevrakis, and Nikunj Saunshi.
\newblock A {Theoretical} {Analysis} of {Contrastive} {Unsupervised} {Representation} {Learning}, February 2019.
\newblock URL \url{http://arxiv.org/abs/1902.09229}.
\newblock arXiv:1902.09229 [cs, stat].

\bibitem[Balestriero and LeCun(2022)]{NEURIPS2022_aa56c745}
Randall Balestriero and Yann LeCun.
\newblock Contrastive and non-contrastive self-supervised learning recover global and local spectral embedding methods.
\newblock In S.~Koyejo, S.~Mohamed, A.~Agarwal, D.~Belgrave, K.~Cho, and A.~Oh, editors, \emph{Advances in Neural Information Processing Systems}, volume~35, pages 26671--26685. Curran Associates, Inc., 2022.

\bibitem[Balestriero et~al.(2023)Balestriero, Ibrahim, Sobal, Morcos, Shekhar, Goldstein, Bordes, Bardes, Mialon, Tian, Schwarzschild, Wilson, Geiping, Garrido, Fernandez, Bar, Pirsiavash, LeCun, and Goldblum]{balestriero2023cookbookselfsupervisedlearning}
Randall Balestriero, Mark Ibrahim, Vlad Sobal, Ari Morcos, Shashank Shekhar, Tom Goldstein, Florian Bordes, Adrien Bardes, Gregoire Mialon, Yuandong Tian, Avi Schwarzschild, Andrew~Gordon Wilson, Jonas Geiping, Quentin Garrido, Pierre Fernandez, Amir Bar, Hamed Pirsiavash, Yann LeCun, and Micah Goldblum.
\newblock A cookbook of self-supervised learning, 2023.
\newblock URL \url{https://arxiv.org/abs/2304.12210}.

\bibitem[Bardes et~al.(2022)Bardes, Ponce, and LeCun]{bardes_vicreg_2022}
Adrien Bardes, Jean Ponce, and Yann LeCun.
\newblock {VICReg}: {Variance}-{Invariance}-{Covariance} {Regularization} for {Self}-{Supervised} {Learning}, January 2022.
\newblock URL \url{http://arxiv.org/abs/2105.04906}.
\newblock arXiv:2105.04906 [cs].

\bibitem[Bendidi et~al.(2023)Bendidi, Bardes, Cohen, Lamiable, Bollot, and Genovesio]{bendidi2023freelunchselfsupervised}
Ihab Bendidi, Adrien Bardes, Ethan Cohen, Alexis Lamiable, Guillaume Bollot, and Auguste Genovesio.
\newblock No free lunch in self supervised representation learning, 2023.
\newblock URL \url{https://arxiv.org/abs/2304.11718}.

\bibitem[Cabannes et~al.(2023)Cabannes, Kiani, Balestriero, Lecun, and Bietti]{pmlr-v202-cabannes23a}
Vivien Cabannes, Bobak Kiani, Randall Balestriero, Yann Lecun, and Alberto Bietti.
\newblock The {SSL} interplay: Augmentations, inductive bias, and generalization.
\newblock In Andreas Krause, Emma Brunskill, Kyunghyun Cho, Barbara Engelhardt, Sivan Sabato, and Jonathan Scarlett, editors, \emph{Proceedings of the 40th International Conference on Machine Learning}, volume 202 of \emph{Proceedings of Machine Learning Research}, pages 3252--3298. PMLR, 23--29 Jul 2023.
\newblock URL \url{https://proceedings.mlr.press/v202/cabannes23a.html}.

\bibitem[Chen et~al.(2020{\natexlab{a}})Chen, Dobriban, and Lee]{group}
Shuxiao Chen, Edgar Dobriban, and Jane Lee.
\newblock A group-theoretic framework for data augmentation.
\newblock In H.~Larochelle, M.~Ranzato, R.~Hadsell, M.F. Balcan, and H.~Lin, editors, \emph{Advances in Neural Information Processing Systems}, volume~33, pages 21321--21333. Curran Associates, Inc., 2020{\natexlab{a}}.
\newblock URL \url{https://proceedings.neurips.cc/paper_files/paper/2020/file/f4573fc71c731d5c362f0d7860945b88-Paper.pdf}.

\bibitem[Chen et~al.(2020{\natexlab{b}})Chen, Kornblith, Norouzi, and Hinton]{chen_simple_2020}
Ting Chen, Simon Kornblith, Mohammad Norouzi, and Geoffrey Hinton.
\newblock A {Simple} {Framework} for {Contrastive} {Learning} of {Visual} {Representations}, June 2020{\natexlab{b}}.
\newblock URL \url{http://arxiv.org/abs/2002.05709}.
\newblock arXiv:2002.05709 [cs, stat].

\bibitem[Chen and He(2020)]{chen2020exploringsimplesiameserepresentation}
Xinlei Chen and Kaiming He.
\newblock Exploring simple siamese representation learning, 2020.
\newblock URL \url{https://arxiv.org/abs/2011.10566}.

\bibitem[Deng et~al.(2009)Deng, Dong, Socher, Li, Li, and Fei-Fei]{ImageNet}
Jia Deng, Wei Dong, Richard Socher, Li-Jia Li, Kai Li, and Li~Fei-Fei.
\newblock Imagenet: A large-scale hierarchical image database.
\newblock In \emph{2009 IEEE Conference on Computer Vision and Pattern Recognition}, pages 248--255, 2009.
\newblock \doi{10.1109/CVPR.2009.5206848}.

\bibitem[Dosovitskiy et~al.(2021)Dosovitskiy, Beyer, Kolesnikov, Weissenborn, Zhai, Unterthiner, Dehghani, Minderer, Heigold, Gelly, Uszkoreit, and Houlsby]{dosovitskiy2021imageworth16x16words}
Alexey Dosovitskiy, Lucas Beyer, Alexander Kolesnikov, Dirk Weissenborn, Xiaohua Zhai, Thomas Unterthiner, Mostafa Dehghani, Matthias Minderer, Georg Heigold, Sylvain Gelly, Jakob Uszkoreit, and Neil Houlsby.
\newblock An image is worth 16x16 words: Transformers for image recognition at scale, 2021.
\newblock URL \url{https://arxiv.org/abs/2010.11929}.

\bibitem[Ericsson et~al.(2022)Ericsson, Gouk, and Hospedales]{ericsson2022selfsupervisedmodelstransferinvestigating}
Linus Ericsson, Henry Gouk, and Timothy~M. Hospedales.
\newblock Why do self-supervised models transfer? investigating the impact of invariance on downstream tasks, 2022.
\newblock URL \url{https://arxiv.org/abs/2111.11398}.

\bibitem[Esser et~al.(2024)Esser, Fleissner, and Ghoshdastidar]{esser2024non}
Pascal Esser, Maximilian Fleissner, and Debarghya Ghoshdastidar.
\newblock Non-parametric representation learning with kernels.
\newblock In \emph{Proceedings of the AAAI Conference on Artificial Intelligence}, volume~38, pages 11910--11918, 2024.

\bibitem[Fleissner et~al.(2024)Fleissner, Anil, and Ghoshdastidar]{fleissner2024infinite}
Maximilian Fleissner, Gautham~Govind Anil, and Debarghya Ghoshdastidar.
\newblock Infinite width limits of self supervised neural networks.
\newblock \emph{arXiv preprint arXiv:2411.11176}, 2024.

\bibitem[Garrido et~al.(2022)Garrido, Chen, Bardes, Najman, and Lecun]{garrido_duality_2022}
Quentin Garrido, Yubei Chen, Adrien Bardes, Laurent Najman, and Yann Lecun.
\newblock On the duality between contrastive and non-contrastive self-supervised learning.
\newblock 2022.
\newblock \doi{10.48550/arXiv.2206.02574}.
\newblock URL \url{http://arxiv.org/abs/2206.02574}.
\newblock arXiv:2206.02574 [cs].

\bibitem[Geiping et~al.(2023)Geiping, Goldblum, Somepalli, Shwartz-Ziv, Goldstein, and Wilson]{geiping2023how}
Jonas Geiping, Micah Goldblum, Gowthami Somepalli, Ravid Shwartz-Ziv, Tom Goldstein, and Andrew~Gordon Wilson.
\newblock How much data are augmentations worth? an investigation into scaling laws, invariance, and implicit regularization.
\newblock In \emph{The Eleventh International Conference on Learning Representations}, 2023.
\newblock URL \url{https://openreview.net/forum?id=3aQs3MCSexD}.

\bibitem[Gontijo-Lopes et~al.(2020)Gontijo-Lopes, Smullin, Cubuk, and Dyer]{gontijolopes2020affinitydiversityquantifyingmechanisms}
Raphael Gontijo-Lopes, Sylvia~J. Smullin, Ekin~D. Cubuk, and Ethan Dyer.
\newblock Affinity and diversity: Quantifying mechanisms of data augmentation, 2020.
\newblock URL \url{https://arxiv.org/abs/2002.08973}.

\bibitem[Grill et~al.(2020)Grill, Strub, Altch\'{e}, Tallec, Richemond, Buchatskaya, Doersch, Avila~Pires, Guo, Gheshlaghi~Azar, Piot, kavukcuoglu, Munos, and Valko]{BYOL}
Jean-Bastien Grill, Florian Strub, Florent Altch\'{e}, Corentin Tallec, Pierre Richemond, Elena Buchatskaya, Carl Doersch, Bernardo Avila~Pires, Zhaohan Guo, Mohammad Gheshlaghi~Azar, Bilal Piot, koray kavukcuoglu, Remi Munos, and Michal Valko.
\newblock Bootstrap your own latent - a new approach to self-supervised learning.
\newblock In H.~Larochelle, M.~Ranzato, R.~Hadsell, M.F. Balcan, and H.~Lin, editors, \emph{Advances in Neural Information Processing Systems}, volume~33, pages 21271--21284. Curran Associates, Inc., 2020.
\newblock URL \url{https://proceedings.neurips.cc/paper_files/paper/2020/file/f3ada80d5c4ee70142b17b8192b2958e-Paper.pdf}.

\bibitem[HaoChen et~al.(2022)HaoChen, Wei, Gaidon, and Ma]{haochen_provable_2022}
Jeff~Z. HaoChen, Colin Wei, Adrien Gaidon, and Tengyu Ma.
\newblock Provable {Guarantees} for {Self}-{Supervised} {Deep} {Learning} with {Spectral} {Contrastive} {Loss}, June 2022.
\newblock URL \url{http://arxiv.org/abs/2106.04156}.
\newblock arXiv:2106.04156 [cs, stat].

\bibitem[He et~al.(2016)He, Zhang, Ren, and Sun]{ResNet}
Kaiming He, Xiangyu Zhang, Shaoqing Ren, and Jian Sun.
\newblock Deep residual learning for image recognition.
\newblock In \emph{2016 IEEE Conference on Computer Vision and Pattern Recognition (CVPR)}, pages 770--778, 2016.
\newblock \doi{10.1109/CVPR.2016.90}.

\bibitem[He et~al.(2020)He, Fan, Wu, Xie, and Girshick]{he_momentum_2020}
Kaiming He, Haoqi Fan, Yuxin Wu, Saining Xie, and Ross Girshick.
\newblock Momentum {Contrast} for {Unsupervised} {Visual} {Representation} {Learning}, March 2020.
\newblock URL \url{http://arxiv.org/abs/1911.05722}.
\newblock arXiv:1911.05722 [cs].

\bibitem[Honeine and Richard(2011)]{honeine_closed-form_2011}
Paul Honeine and Cédric Richard.
\newblock A {Closed}-form {Solution} for the {Pre}-image {Problem} in {Kernel}-based {Machines}.
\newblock \emph{Journal of Signal Processing Systems}, 65\penalty0 (3):\penalty0 289--299, December 2011.
\newblock ISSN 1939-8115.
\newblock \doi{10.1007/s11265-010-0482-9}.
\newblock URL \url{https://doi.org/10.1007/s11265-010-0482-9}.

\bibitem[Huang et~al.(2023)Huang, Pareek, Jensen, Lungren, Yeung, and Chaudhari]{huang_self-supervised_2023}
Shih-Cheng Huang, Anuj Pareek, Malte Jensen, Matthew~P. Lungren, Serena Yeung, and Akshay~S. Chaudhari.
\newblock Self-supervised learning for medical image classification: a systematic review and implementation guidelines.
\newblock \emph{npj Digital Medicine}, 6\penalty0 (1):\penalty0 1--16, April 2023.
\newblock ISSN 2398-6352.
\newblock \doi{10.1038/s41746-023-00811-0}.
\newblock URL \url{https://www.nature.com/articles/s41746-023-00811-0}.
\newblock Publisher: Nature Publishing Group.

\bibitem[Jacot et~al.(2018)Jacot, Gabriel, and Hongler]{jacot2018neural}
Arthur Jacot, Franck Gabriel, and Cl{\'e}ment Hongler.
\newblock Neural tangent kernel: Convergence and generalization in neural networks.
\newblock \emph{Advances in neural information processing systems}, 31, 2018.

\bibitem[Jing et~al.(2022)Jing, Vincent, LeCun, and Tian]{jing_understanding_2022}
Li~Jing, Pascal Vincent, Yann LeCun, and Yuandong Tian.
\newblock Understanding {Dimensional} {Collapse} in {Contrastive} {Self}-supervised {Learning}, April 2022.
\newblock URL \url{http://arxiv.org/abs/2110.09348}.
\newblock arXiv:2110.09348 [cs].

\bibitem[Johnson et~al.(2022)Johnson, Hanchi, and Maddison]{johnson2022contrastive}
Daniel~D Johnson, Ayoub~El Hanchi, and Chris~J Maddison.
\newblock Contrastive learning can find an optimal basis for approximately view-invariant functions.
\newblock \emph{arXiv preprint arXiv:2210.01883}, 2022.

\bibitem[Kiani et~al.(2022)Kiani, Balestriero, Chen, Lloyd, and LeCun]{kiani_joint_2022}
Bobak~T. Kiani, Randall Balestriero, Yubei Chen, Seth Lloyd, and Yann LeCun.
\newblock Joint {Embedding} {Self}-{Supervised} {Learning} in the {Kernel} {Regime}, September 2022.
\newblock URL \url{http://arxiv.org/abs/2209.14884}.
\newblock arXiv:2209.14884 [cs].

\bibitem[Kim et~al.(2022)Kim, Kang, Ahn, and Shin]{kim2022what}
Jaehyung Kim, Dongyeop Kang, Sungsoo Ahn, and Jinwoo Shin.
\newblock What makes better augmentation strategies? augment difficult but not too different.
\newblock In \emph{International Conference on Learning Representations}, 2022.
\newblock URL \url{https://openreview.net/forum?id=Ucx3DQbC9GH}.

\bibitem[Kingma and Ba(2017)]{kingma2017adammethodstochasticoptimization}
Diederik~P. Kingma and Jimmy Ba.
\newblock Adam: A method for stochastic optimization, 2017.
\newblock URL \url{https://arxiv.org/abs/1412.6980}.

\bibitem[Krizhevsky(2009)]{Krizhevsky2009LearningML}
Alex Krizhevsky.
\newblock Learning multiple layers of features from tiny images.
\newblock 2009.
\newblock URL \url{https://api.semanticscholar.org/CorpusID:18268744}.

\bibitem[K{\"u}gelgen et~al.(2021)K{\"u}gelgen, Sharma, Gresele, Brendel, Sch{\"o}lkopf, Besserve, and Locatello]{content_style}
Julius~Von K{\"u}gelgen, Yash Sharma, Luigi Gresele, Wieland Brendel, Bernhard Sch{\"o}lkopf, Michel Besserve, and Francesco Locatello.
\newblock Self-supervised learning with data augmentations provably isolates content from style.
\newblock In A.~Beygelzimer, Y.~Dauphin, P.~Liang, and J.~Wortman Vaughan, editors, \emph{Advances in Neural Information Processing Systems}, 2021.
\newblock URL \url{https://openreview.net/forum?id=4pf_pOo0Dt}.

\bibitem[Kwok and Tsang(2004)]{1353287}
J.T.-Y. Kwok and I.W.-H. Tsang.
\newblock The pre-image problem in kernel methods.
\newblock \emph{IEEE Transactions on Neural Networks}, 15\penalty0 (6):\penalty0 1517--1525, 2004.
\newblock \doi{10.1109/TNN.2004.837781}.

\bibitem[Le and Yang(2015)]{Le2015TinyIV}
Ya~Le and Xuan~S. Yang.
\newblock Tiny imagenet visual recognition challenge.
\newblock 2015.
\newblock URL \url{https://api.semanticscholar.org/CorpusID:16664790}.

\bibitem[LeCun et~al.(2010)LeCun, Cortes, and Burges]{lecun2010mnist}
Yann LeCun, Corinna Cortes, and CJ~Burges.
\newblock Mnist handwritten digit database.
\newblock \emph{ATT Labs [Online]. Available: http://yann.lecun.com/exdb/mnist}, 2, 2010.

\bibitem[Lee et~al.(2021)Lee, Lee, Lee, Lee, and Shin]{DBLP:journals/corr/abs-2111-09613}
Hankook Lee, Kibok Lee, Kimin Lee, Honglak Lee, and Jinwoo Shin.
\newblock Improving transferability of representations via augmentation-aware self-supervision.
\newblock \emph{CoRR}, abs/2111.09613, 2021.
\newblock URL \url{https://arxiv.org/abs/2111.09613}.

\bibitem[Meanti et~al.(2020)Meanti, Carratino, Rosasco, and Rudi]{meanti_kernel_2020}
Giacomo Meanti, Luigi Carratino, Lorenzo Rosasco, and Alessandro Rudi.
\newblock Kernel methods through the roof: handling billions of points efficiently, November 2020.
\newblock URL \url{http://arxiv.org/abs/2006.10350}.
\newblock arXiv:2006.10350.

\bibitem[Meanti et~al.(2022)Meanti, Carratino, Vito, and Rosasco]{meanti_efficient_2022}
Giacomo Meanti, Luigi Carratino, Ernesto~De Vito, and Lorenzo Rosasco.
\newblock Efficient {Hyperparameter} {Tuning} for {Large} {Scale} {Kernel} {Ridge} {Regression}, January 2022.
\newblock URL \url{http://arxiv.org/abs/2201.06314}.
\newblock arXiv:2201.06314.

\bibitem[Mika et~al.(1998)Mika, Sch\"{o}lkopf, Smola, M\"{u}ller, Scholz, and R\"{a}tsch]{NIPS1998_preimage}
Sebastian Mika, Bernhard Sch\"{o}lkopf, Alex Smola, Klaus-Robert M\"{u}ller, Matthias Scholz, and Gunnar R\"{a}tsch.
\newblock Kernel pca and de-noising in feature spaces.
\newblock In M.~Kearns, S.~Solla, and D.~Cohn, editors, \emph{Advances in Neural Information Processing Systems}, volume~11. MIT Press, 1998.
\newblock URL \url{https://proceedings.neurips.cc/paper_files/paper/1998/file/226d1f15ecd35f784d2a20c3ecf56d7f-Paper.pdf}.

\bibitem[Moutakanni et~al.(2024)Moutakanni, Oquab, Szafraniec, Vakalopoulou, and Bojanowski]{moutakanni2024dontneeddataaugmentationselfsupervised}
Théo Moutakanni, Maxime Oquab, Marc Szafraniec, Maria Vakalopoulou, and Piotr Bojanowski.
\newblock You don't need data-augmentation in self-supervised learning, 2024.
\newblock URL \url{https://arxiv.org/abs/2406.09294}.

\bibitem[Ortega(1987)]{ortega_other_1987}
James~M. Ortega.
\newblock Other {Topics}.
\newblock In James~M. Ortega, editor, \emph{Matrix {Theory}: {A} {Second} {Course}}, pages 215--256. Springer US, Boston, MA, 1987.
\newblock ISBN 978-1-4899-0471-3.
\newblock \doi{10.1007/978-1-4899-0471-3_6}.
\newblock URL \url{https://doi.org/10.1007/978-1-4899-0471-3_6}.

\bibitem[Paszke et~al.(2019)Paszke, Gross, Massa, Lerer, Bradbury, Chanan, Killeen, Lin, Gimelshein, Antiga, Desmaison, Köpf, Yang, DeVito, Raison, Tejani, Chilamkurthy, Steiner, Fang, Bai, and Chintala]{paszke2019pytorchimperativestylehighperformance}
Adam Paszke, Sam Gross, Francisco Massa, Adam Lerer, James Bradbury, Gregory Chanan, Trevor Killeen, Zeming Lin, Natalia Gimelshein, Luca Antiga, Alban Desmaison, Andreas Köpf, Edward Yang, Zach DeVito, Martin Raison, Alykhan Tejani, Sasank Chilamkurthy, Benoit Steiner, Lu~Fang, Junjie Bai, and Soumith Chintala.
\newblock Pytorch: An imperative style, high-performance deep learning library, 2019.
\newblock URL \url{https://arxiv.org/abs/1912.01703}.

\bibitem[Purushwalkam and Gupta(2020)]{purushwalkam}
Senthil Purushwalkam and Abhinav Gupta.
\newblock Demystifying contrastive self-supervised learning: Invariances, augmentations and dataset biases.
\newblock In H.~Larochelle, M.~Ranzato, R.~Hadsell, M.F. Balcan, and H.~Lin, editors, \emph{Advances in Neural Information Processing Systems}, volume~33, pages 3407--3418. Curran Associates, Inc., 2020.
\newblock URL \url{https://proceedings.neurips.cc/paper_files/paper/2020/file/22f791da07b0d8a2504c2537c560001c-Paper.pdf}.

\bibitem[Rahimi and Recht(2007)]{random_features}
Ali Rahimi and Benjamin Recht.
\newblock Random features for large-scale kernel machines.
\newblock In J.~Platt, D.~Koller, Y.~Singer, and S.~Roweis, editors, \emph{Advances in Neural Information Processing Systems}, volume~20. Curran Associates, Inc., 2007.
\newblock URL \url{https://proceedings.neurips.cc/paper_files/paper/2007/file/013a006f03dbc5392effeb8f18fda755-Paper.pdf}.

\bibitem[Saunshi et~al.(2022)Saunshi, Ash, Goel, Misra, Zhang, Arora, Kakade, and Krishnamurthy]{saunshi_understanding_2022}
Nikunj Saunshi, Jordan Ash, Surbhi Goel, Dipendra Misra, Cyril Zhang, Sanjeev Arora, Sham Kakade, and Akshay Krishnamurthy.
\newblock Understanding {Contrastive} {Learning} {Requires} {Incorporating} {Inductive} {Biases}, February 2022.
\newblock URL \url{http://arxiv.org/abs/2202.14037}.
\newblock arXiv:2202.14037 [cs].

\bibitem[Schölkopf et~al.(2001)Schölkopf, Herbrich, and Smola]{scholkopf_generalized_2001}
Bernhard Schölkopf, Ralf Herbrich, and Alex~J. Smola.
\newblock A {Generalized} {Representer} {Theorem}.
\newblock In David Helmbold and Bob Williamson, editors, \emph{Computational {Learning} {Theory}}, pages 416--426, Berlin, Heidelberg, 2001. Springer.
\newblock ISBN 978-3-540-44581-4.
\newblock \doi{10.1007/3-540-44581-1_27}.

\bibitem[Simon et~al.(2023)Simon, Knutins, Ziyin, Geisz, Fetterman, and Albrecht]{pmlr-v202-simon23a}
James~B Simon, Maksis Knutins, Liu Ziyin, Daniel Geisz, Abraham~J Fetterman, and Joshua Albrecht.
\newblock On the stepwise nature of self-supervised learning.
\newblock In Andreas Krause, Emma Brunskill, Kyunghyun Cho, Barbara Engelhardt, Sivan Sabato, and Jonathan Scarlett, editors, \emph{Proceedings of the 40th International Conference on Machine Learning}, volume 202 of \emph{Proceedings of Machine Learning Research}, pages 31852--31876. PMLR, 23--29 Jul 2023.
\newblock URL \url{https://proceedings.mlr.press/v202/simon23a.html}.

\bibitem[Tian et~al.(2020)Tian, Sun, Poole, Krishnan, Schmid, and Isola]{tian_what_2020}
Yonglong Tian, Chen Sun, Ben Poole, Dilip Krishnan, Cordelia Schmid, and Phillip Isola.
\newblock What {Makes} for {Good} {Views} for {Contrastive} {Learning}?, December 2020.
\newblock URL \url{http://arxiv.org/abs/2005.10243}.
\newblock arXiv:2005.10243 [cs].

\bibitem[Wainwright(2019)]{wainwright2019high}
Martin~J Wainwright.
\newblock \emph{High-dimensional statistics: A non-asymptotic viewpoint}, volume~48.
\newblock Cambridge university press, 2019.

\bibitem[Wang and Isola(2020)]{wang2020understanding}
Tongzhou Wang and Phillip Isola.
\newblock Understanding contrastive representation learning through alignment and uniformity on the hypersphere.
\newblock In \emph{International conference on machine learning}, pages 9929--9939. PMLR, 2020.

\bibitem[Wang et~al.(2022)Wang, Zhang, Wang, Yang, and Lin]{wang2022chaos}
Yifei Wang, Qi~Zhang, Yisen Wang, Jiansheng Yang, and Zhouchen Lin.
\newblock Chaos is a ladder: A new theoretical understanding of contrastive learning via augmentation overlap.
\newblock In \emph{International Conference on Learning Representations}, 2022.
\newblock URL \url{https://openreview.net/forum?id=ECvgmYVyeUz}.

\bibitem[Williams and Seeger(2000)]{Nystrom}
Christopher Williams and Matthias Seeger.
\newblock Using the nystr\"{o}m method to speed up kernel machines.
\newblock In T.~Leen, T.~Dietterich, and V.~Tresp, editors, \emph{Advances in Neural Information Processing Systems}, volume~13. MIT Press, 2000.
\newblock URL \url{https://proceedings.neurips.cc/paper_files/paper/2000/file/19de10adbaa1b2ee13f77f679fa1483a-Paper.pdf}.

\bibitem[Xiao et~al.(2020)Xiao, Wang, Efros, and Darrell]{shouldnt_be_contrastive}
Tete Xiao, Xiaolong Wang, Alexei~A. Efros, and Trevor Darrell.
\newblock What should not be contrastive in contrastive learning.
\newblock \emph{CoRR}, abs/2008.05659, 2020.
\newblock URL \url{https://arxiv.org/abs/2008.05659}.

\bibitem[Zbontar et~al.(2021)Zbontar, Jing, Misra, LeCun, and Deny]{zbontar_barlow_2021}
Jure Zbontar, Li~Jing, Ishan Misra, Yann LeCun, and Stéphane Deny.
\newblock Barlow {Twins}: {Self}-{Supervised} {Learning} via {Redundancy} {Reduction}, June 2021.
\newblock URL \url{http://arxiv.org/abs/2103.03230}.
\newblock arXiv:2103.03230 [cs, q-bio].

\bibitem[Zhai et~al.(2024)Zhai, Liu, Risteski, Kolter, and Ravikumar]{zhai2024understanding}
Runtian Zhai, Bingbin Liu, Andrej Risteski, J~Zico Kolter, and Pradeep~Kumar Ravikumar.
\newblock Understanding augmentation-based self-supervised representation learning via {RKHS} approximation and regression.
\newblock In \emph{The Twelfth International Conference on Learning Representations}, 2024.
\newblock URL \url{https://openreview.net/forum?id=Ax2yRhCQr1}.

\bibitem[Zhang and Ma(2022)]{zhang2022rethinkingaugmentationmodulecontrastive}
Junbo Zhang and Kaisheng Ma.
\newblock Rethinking the augmentation module in contrastive learning: Learning hierarchical augmentation invariance with expanded views, 2022.
\newblock URL \url{https://arxiv.org/abs/2206.00227}.

\end{thebibliography}

\appendix

\section{Definition of the VICReg Loss}\label{appendix_def_vicreg}
Although in Section 3 we use the variance in the definition of $v$ as part of the VICReg loss, we can use a variety of other definitions for our results. For example, we can use the standard deviation similar to the original VICReg definition \citep{bardes_vicreg_2022}. The part of the proof where the definition of VICReg comes to play is Lemma \ref{vicreg_insight}, where we use the fact that:
\[
v(Z)=0\iff \forall_{i\in [d]} \text{cov}(Z)_{i,i} =1
\]
Which is, of course, also true for:
\[
v(Z) = \frac{1}{d}\sum_{i=1}^{d}\left(1-\sqrt{[\text{cov}\left(Z\right)]_{i,i}}\right)^{2}  
\]
Similar to the definition in \citet{bardes_vicreg_2022}, except we use the $L_2$ loss instead of the hinge loss. ,In fact, any loss function $l(\cdot)$ where $l(y) = 0 \iff y=0$ would work for our results.

In addition, the unnormalized cross-correlation can be used instead of the covariance (Theorem \ref{mainLemmaVICRegCorr}). 

\section{Proof of Statements in Section 3.2}

\subsection{$\mathcal{L}_\text{SCL}$ Can Diverge Without Regularization}
The basic intuition behind this argument is that if we manage align the negative examples such that they have a close to $0$ inner-product, then the norm can grow and the contibution of the term $-\frac{2}n{}\sum_{i=1}^nz_i^\top z'_i$ will outweigh the contribution of the term $\frac{1}{n^2}\sum_{i\neq j}(z_i^\top z'_j)^2$.

More formally, to demonstrate that $\mathcal{L}_\text{SCL}$ can diverge to $-\infty$ without any regularization, we look at the following setup:

Let $\{x_i\}_{i=1}^n\subseteq \mathcal{X}$ be the data divided into $b=\frac{n}{d}\in\mathbb{N}$ batches $B_j=\{x_i\}_{i=j\cdot n+1}^{(j+1)\cdot n }$, where $d$ is the dimension of the representation and $\mathcal{X}$ is assumed to be a compact metric space. Let $\mathcal{T}$ be a distribution of augmentations $T:\mathcal{X}\rightarrow\mathcal{X}$ with a finite support and let $\mathcal{F}$ be a universal function class, i.e. for every continuous function $g:\mathcal{X}\rightarrow\mathbb{R}^d$ in $\mathcal{X}\rightarrow\mathbb{R}^d$ and every $\epsilon >0$, there exists $f\in \mathcal{F}$ such that $\sup_{x\in\mathcal{X}}\vert f(x)-g(x) \vert _2<\epsilon$. Assume the augmentations do not overlap, i.e. $\forall_{T,T'\in \mathcal{T}}\forall_{i\neq j}T(x_i)\neq T'(x_j)$. 

We define the unregularized loss as:

\begin{align*}
\mathcal{L}_\text{SCL-unreg}(f)&=\sum_{j=1}^b\mathbb{E}_{Z_j,Z_j'}[L_{\text{SCL-unreg}}(Z_j,Z_j)]\quad \text{, where for a batch size $\nu$:}\\
L_{\text{SCL-unreg}}(Z,Z')&=-\frac{2}{\nu}\sum_{i=1}^{\nu}z_i^\top z'_i+\frac{1}{\nu^2}\sum_{i\neq j}(z_i^\top z'_j)^2
\end{align*}

 and $Z_j$ corresponds to the $j$'s batch. Namely, for $\left(T_i, T'_i\right) {\sim} \mathcal{T}\otimes\mathcal{T}$:
\begin{align*}
  Z_j&=[f(T_{j\cdot d+1}(x_{j\cdot d+1})),\ldots, f(T_{(j+1)\cdot d}(x_{(j+1)\cdot d}))]  \\
  Z_j'&=[f(T_{j\cdot d+1}'(x_{j\cdot d+1})),\ldots, f(T_{(j+1)\cdot d}'(x_{(j+1)\cdot d}))]
\end{align*}

\begin{claim*}
   In the setting above, $\inf_{f\in\mathcal{F}}\mathcal{L}_\text{SCL-unreg}(f)=-\infty$ 
\end{claim*}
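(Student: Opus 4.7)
The plan is to construct, for each scale $c > 0$, a function $f_c \in \mathcal{F}$ whose output on every augmented training point $T(x_i)$ is (approximately) the rescaled standard basis vector $c \cdot e_{((i-1) \bmod d) + 1} \in \mathbb{R}^d$. Within each batch of size $d$ this makes the positive pairs contribute $-2c^2$ while making every negative inner product vanish exactly. Summing over the $b$ batches and then sending $c \to \infty$ drives the loss to $-\infty$, which is the desired conclusion.

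\paragraph{Step 1: Construct the target.}
First I would define a target map $g_c$ on the finite set $S := \{T(x_i) : T \in \mathrm{supp}(\mathcal{T}),\; i \in [n]\} \subseteq \mathcal{X}$ by setting $g_c(T(x_i)) = c \cdot e_{((i-1) \bmod d)+1}$ for every $T \in \mathrm{supp}(\mathcal{T})$. The non-overlap assumption $T(x_i) \neq T'(x_j)$ for $i \neq j$ ensures $g_c$ is well-defined on $S$. Since $S$ is a finite (hence closed) subset of the metric space $\mathcal{X}$, Tietze's extension theorem lets me extend $g_c$ to a continuous function $\bar g_c : \mathcal{X} \to \mathbb{R}^d$.

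\paragraph{Step 2: Use universality to approximate.}
By universality of $\mathcal{F}$, for any $\varepsilon > 0$ there exists $f_c \in \mathcal{F}$ with $\sup_{x \in \mathcal{X}} \|f_c(x) - \bar g_c(x)\|_2 < \varepsilon$. In particular, for every $T \in \mathrm{supp}(\mathcal{T})$ and every $i$, $f_c(T(x_i)) = c \cdot e_{((i-1) \bmod d)+1} + \eta_{T,i}$ with $\|\eta_{T,i}\|_2 < \varepsilon$.

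\paragraph{Step 3: Estimate the loss.}
Next I would evaluate a single batch contribution. For fixed augmentations, writing $z_k = c e_{k} + \eta_k$ and $z'_k = c e_{k} + \eta'_k$ with $\|\eta_k\|, \|\eta'_k\| < \varepsilon$, we have $z_k^\top z'_k = c^2 + O(c\varepsilon)$ and $z_k^\top z'_\ell = O(c\varepsilon) + O(\varepsilon^2)$ for $k \neq \ell$ (since $e_k \perp e_\ell$). Hence
\begin{align*}
L_{\text{SCL-unreg}}(Z_j, Z'_j) &= -\tfrac{2}{d}\sum_{k=1}^d z_k^\top z'_k + \tfrac{1}{d^2}\sum_{k \neq \ell}(z_k^\top z'_\ell)^2 \\
&= -2c^2 + O(c\varepsilon) + O(c^2 \varepsilon^2),
\end{align*}
uniformly in the random augmentation choices (which only affect the $\eta_k, \eta'_k$ but not their norm bound). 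Taking the expectation and summing over the $b$ batches yields
\[
\mathcal{L}_{\text{SCL-unreg}}(f_c) \le -2bc^2 + C_1 bc\varepsilon + C_2 bc^2\varepsilon^2
\]
for absolute constants $C_1, C_2$.

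\paragraph{Step 4: Conclude.}
Finally, choosing $\varepsilon = 1/c^2$ makes the error terms $o(c^2)$, so $\mathcal{L}_{\text{SCL-unreg}}(f_c) \le -bc^2$ for all sufficiently large $c$. Letting $c \to \infty$ gives $\inf_{f \in \mathcal{F}} \mathcal{L}_{\text{SCL-unreg}}(f) = -\infty$. The only nontrivial step is Step~1, where one must use both the finiteness of $\mathrm{supp}(\mathcal{T})$ and the non-overlap hypothesis to ensure the target is well-defined and continuously extendable; everything else is book-keeping on the approximation error.
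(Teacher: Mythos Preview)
Your proposal is correct and follows essentially the same approach as the paper: define a target assigning $c\cdot e_{((i-1)\bmod d)+1}$ to every augmentation of $x_i$, invoke Tietze extension plus universality to realize it in $\mathcal{F}$, and let the scale blow up. The only cosmetic difference is that the paper appeals to continuity of the loss as a function of the finitely many values $\{f(T(x_i))\}$ to pass from $g_\lambda$ to an approximant in $\mathcal{F}$, whereas you track the $O(c\varepsilon)$ and $O(c^2\varepsilon^2)$ error terms explicitly and pick $\varepsilon=1/c^2$; both routes are equivalent.
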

\begin{proof}

For a $\lambda>0$ Let $g(x)$ be a function such that $\forall_{T\in \text{Support}(\mathcal{T})} g(T(x_i))=\sqrt\frac{\lambda}{2b}\cdot e_{i\bmod d}$, where $e_k$ is the $k$'th standard basis vector. $g_\lambda(\cdot)$ is well defined on the finite set $S=\{T(x_i):i\in [n], T\in \text{Support}(\mathcal{T})\}$ since we assumed  the augmentations do not overlap, i.e. $\forall_{T,T'\in \mathcal{T}}\forall_{i\neq j}T(x_i)\neq T'(x_j)$. The function $g_\lambda$ can be extended from the space $S$ to $\mathcal{X}$, such that $g_\lambda$ is continuous, via the Tietze extension theorem. Therefore, the universality of $\mathcal{F}$ implies that for every $\epsilon >0$ there exists an $f\in \mathcal{F}$ such that $\sup_{x\in\mathcal{X}}\vert f(x)-g(x)\vert _2 <\epsilon $

Now, we can prove $\mathcal{L}_\text{SCL-unreg}(g_\lambda)=-\lambda$:
\begin{align*}
  Z_j&=[g_\lambda(T_{j\cdot d+1}(x_{j\cdot d+1})),\ldots, g_\lambda(T_{(j+1)\cdot d}(x_{(j+1)\cdot d}))]  \\
&=[\sqrt\frac{\lambda}{2b}\cdot e_1,\ldots \sqrt\frac{\lambda}{2b}\cdot e_d] =\sqrt\frac{\lambda}{2b}I_d
\end{align*}

and similarly $Z_j'=\sqrt\frac{\lambda}{2b}I_d$. Therefore, $L_{\text{SCL-unreg}}(Z,Z')=-\frac{2}{d}\sum_{i=1}^d e_i ^\top e_i+\frac{1}{d^2}\sum_{i\neq j}e_i^ \top e_j =-\frac{\lambda}{b}$ and $\mathcal{L}_\text{SCL-unreg}(g_{\lambda})=\sum_{j=1}^b\mathbb{E}_{Z_j,Z_j'}[L_{\text{SCL-unreg}}(Z_j,Z_j)]=-b\cdot \frac{\lambda}{b}=-\lambda$.

$\mathcal{L}_{\text{SCL-unreg}}(f)$ can be viewed as a continuous function from the set $\{f(T(x_i)):T\in \text{Support}(\mathcal{T}),i\in [n]\}$ and therefore the universality of $\mathcal{F}$ implies that for every $\delta>0$ there is a $f\in\mathcal{F}$ such that $\mathcal{L}_{\text{SCL-unreg}}(f_\lambda)\leq \mathcal{L}_{\text{SCL-unreg}}(g_{\lambda})+\delta$, in particular, we can fix $\delta=1$ and get that for every $\lambda>0$ there exists $f_\lambda\in\mathcal{F}$ such that $\mathcal{L}_{\text{SCL-unreg}}(f_\lambda)\leq -\lambda+1$. This allows to prove our claim that $\inf_{f\in\mathcal{F}}\mathcal{L}_\text{SCL-unreg}(f)=-\infty$ by the following standard argument: Assume by contradiction $\inf_{f\in\mathcal{F}}\mathcal{L}_\text{SCL-unreg}(f)=M$ where $M\in \mathbb{R}$. For $\lambda=M-2$ we get $\exists_{f\in\mathcal{F}}\mathcal{L}_\text{SCL-unreg}(f)\leq M-1$ which contradicts the assumption $\inf_{f\in\mathcal{F}}\mathcal{L}_\text{SCL-unreg}(f)=M$ and therefore proving $\inf_{f\in\mathcal{F}}\mathcal{L}_\text{SCL-unreg}(f)=-\infty$.
\end{proof}

\subsection{Lower Bound on $\mathcal{L}_\text{SCL}$}\label{lower_bound}

We begin by defining:
\[
L_{\text{SCL}}(Z,Z')\coloneqq\Vert Z^\top Z'-\text{diag}(Z^\top Z')\Vert_F^2 - \frac{2}{n}\text{Tr}(Z^\top Z')+\frac{1}{2n^2}\Big(\sum_{i=1}^n \Vert z_i\Vert^4 + \Vert z_i'\Vert^4 \Big)
\]

It can easily be verified that $\mathcal{L_{\text{SCL}}}=\mathbb{E}_{Z,Z'}\big[L_{\text{SCL}}(Z,Z')\big]$. We will show that $\mathcal{L}_{\text{SCL}}\geq -d$ by showing that $L_{\text{SCL}}(Z,Z')\geq -d$. 

\begin{align*}
    &L_{\text{SCL}}(Z,Z') =\frac{1}{n^2}\Vert Z^\top Z'-\text{diag}(Z^\top Z')\Vert_F^2 - \frac{2}{n}\text{Tr}(Z^\top Z')+\frac{1}{2n^2}\Big(\sum_{i=1}^n \Vert z_i\Vert^4 + \Vert z_i'\Vert^4 \Big)\\
    &=\frac{1}{n^2}\Vert Z^\top Z'\Vert_F^2-\frac{2}{n^2}\underbrace{\text{Tr}\big((Z^\top Z')^\top \text{diag}(Z^\top Z')\big)}_{\Vert\text{diag}(Z^\top Z')\Vert_F^2} +\frac{1}{n^2}\Vert\text{diag}(Z^\top Z')\Vert_F^2 \\& -\frac{2}{n}\text{Tr}(Z^\top Z')+\frac{1}{2n^2}\Big(\sum_{i=1}^n \Vert z_i\Vert^4 + \Vert z_i'\Vert^4 \Big)\\
    &=\frac{1}{n^2}\Vert Z^\top Z'\Vert_F^2 - \frac{2}{n}\text{Tr}(Z^\top Z')+\frac{1}{2n^2}\Big(\sum_{i=1}^n \Vert z_i\Vert^4 + \Vert z_i'\Vert^4 \Big)-\frac{1}{n^2}\Vert\text{diag}(Z^\top Z')\Vert_F^2\\
    &=\frac{1}{n^2}\Vert Z^\top Z'\Vert_F^2 - \frac{2}{n}\text{Tr}(Z^\top Z')+\frac{1}{2n^2}\Big(\sum_{i=1}^n \Vert z_i\Vert^4 -2\Vert z_i^\top  z'_i\Vert^2+ \Vert z_i'\Vert^4 \Big)\\
    &=\frac{1}{n^2}\Vert Z^\top Z'\Vert_F^2 - \frac{2}{n}\text{Tr}(Z^\top Z')+\frac{1}{2n^2}\Big(\sum_{i=1}^n \Vert z_i z_i^\top -z_i'z_i'^\top \Vert^2_F \Big)\\
    &\geq \frac{1}{n^2}\Vert Z^\top Z'\Vert_F^2 - \frac{2}{n}\text{Tr}(Z^\top Z')+\frac{1}{2n^2}\Big(\Vert \sum_{i=1}^n z_i z_i^\top -z_i'z_i'^\top \Vert^2_F \Big)\quad \numberthis\label{inequality}\\
    &=\frac{1}{n^2}\Vert Z^\top Z'\Vert_F^2 - \frac{2}{n}\text{Tr}(Z^\top Z')+\frac{1}{2}\Big(\Vert \frac{1}{n}ZZ^\top -\frac{1}{n}Z'Z'^\top  \Vert^2_F \Big)\\
    &=\frac{1}{n^2}\Vert Z^\top Z'\Vert_F^2 - \frac{2}{n}\text{Tr}(Z^\top Z')+\frac{1}{2}\Big(\Vert (\frac{1}{n}ZZ^\top  - I_d) - (\frac{1}{n}Z'Z'^\top -I_d) \Vert^2_F \Big)\\
    &=\frac{1}{n^2}\Vert Z^\top Z'\Vert_F^2 - \frac{2}{n}\text{Tr}(Z^\top Z')+\frac{1}{2}\Vert \frac{1}{n}ZZ^\top  - I\Vert_F^2  + \frac{1}{2}\Vert \frac{1}{n}Z'Z'^\top  - I_d\Vert_F^2  \\& -\text{Tr}\Big((\frac{1}{n}ZZ^\top -I_d)^\top (\frac{1}{n}Z'Z'^\top -I_d) \Big)
\end{align*}

\eqref{inequality} is a consequence of the triangle inequality.

\begin{align*}
&\text{Tr}\Big((\frac{1}{n}ZZ^\top -I)^\top (\frac{1}{n}Z'Z'^\top -I) \Big)\\&=\frac{1}{n^2}\underbrace{\text{Tr}(ZZ^\top Z'Z'^\top )}_{=\text{Tr}(Z'^\top ZZ^\top Z')} -\frac{1}{n}\text{Tr}(ZZ^\top )-\frac{1}{n}\text{Tr}(Z'Z'^\top )+\text{Tr}(I_d)\\
&=\frac{1}{n^2}\text{Tr}\big((Z^\top Z')^\top (Z^\top Z')\big)-\frac{1}{n}\text{Tr}(ZZ^\top )-\frac{1}{n}\text{Tr}(Z'Z'^\top )+\text{Tr}(I_d)\\
&=\frac{1}{n^2}\Vert Z^\top Z'\Vert_F^2 -\frac{1}{n}\text{Tr}(ZZ^\top )-\frac{1}{n}\text{Tr}(Z'Z'^\top )+d\\
&=\frac{1}{n^2}\Vert Z^\top Z'\Vert_F^2 -\frac{1}{n}\big(\text{Tr}(ZZ^\top )-2\text{Tr}(Z^\top Z') +  \text{Tr}(Z'Z'^\top )\big) +\frac{2}{n}\text{Tr}(Z^\top Z') +d\\
&=\frac{1}{n^2}\Vert Z^\top Z'\Vert_F^2 -\frac{1}{n}\Vert Z-Z' \Vert_F^2 +\frac{2}{n}\text{Tr}(Z^\top Z') +d\\
\end{align*}

Combining everything together, we get:
\begin{align}
    L_{\text{SCL}}(Z,Z')\geq\frac{1}{2}\Vert \frac{1}{n}ZZ^\top  - I\Vert_F^2  + \frac{1}{2}\Vert \frac{1}{n}Z'Z'^\top  - I_d\Vert_F^2+\frac{1}{n}\Vert Z-Z'\Vert_F^2 -d \geq -d\label{second_ineq} 
\end{align}

The inequality is tight if and only if $\frac{1}{n}ZZ^\top =I_d$ and $Z=Z'$, because only then both \eqref{inequality}  and the right side of \eqref{second_ineq} are equalities.

\section{Proofs of the Main Results}\label{appendix:proofs}

\paragraph{Additional Notation and Definitions.} In addition to the notation in Section 3, we denote $\phi_i\coloneqq\phi(x_i)$, $\Vert\cdot\Vert_\text{HS}$ to be the Hilbert-Schmidt norm, and $O(n)$ to be the set of orthogonal $n\times n$ matrices. Moreover, we define $L_{\text{VIC}}, L_{\text{BT}},L_{\text{SCL}}$ such that the loss functions in Section 3.2 are of the form $\mathcal{L}=\mathbb{E}_{Z,Z'}\big[L(Z,Z')\big]$, namely:
\begin{align*}
L_{\text{VIC}}(Z,Z')&\coloneqq \lambda s(Z, Z') + \mu [ v(Z) + v(Z')] + \nu [ c(Z) + c(Z')]   \\
L_{\text{BT}}(Z,Z')&\coloneqq \sum_i^n (1- \mathcal{C}_{ii})^2+ \lambda\sum_{i\neq j} (1- \mathcal{C}_{ij})^2 ;\quad \mathcal{C}=\frac{1}{2n}(ZZ'^\top  + Z'Z^\top )\\
L_{\text{SCL}}(Z,Z')&\coloneqq\Vert Z^\top Z'-\text{diag}(Z^\top Z')\Vert_F^2 - \frac{2}{n}\text{Tr}(Z^\top Z')+\frac{1}{2n^2}\Big(\sum_{i=1}^n \Vert z_i\Vert^4 + \Vert z_i'\Vert^4 \Big)
\end{align*}

\subsection{Proof of Theorem \ref{main lemma VICReg} (VICReg)}
\label{vic_appendix}
 To prove Theorem \ref{main lemma VICReg} we need a number of auxiliary results. The main part of the proof is in Proposition \ref{aux_vicreg}, where we prove our main result except we assume $\text{cov}\left([{f}^*(\phi_1), \ldots, {f}^*(\phi_n)]\right)=I_d$ instead of the covariance being full rank (Condition \ref{non_redundant_representation}), which is easily relaxed afterwards. 
 
 First, we derive a representer-like theorem that guarantees \textbf{any} $f$ that minimizes VICReg with least norm to be of the form $f = A \Phi^\top $, provided a suitable augmentation.
 
\begin{restatable}{proposition}{representer}\label{representer}
    Let $\mathcal{L}$ be a joint embedding loss as described in Section 3. Let $\mathcal{T}$ be a distribution of augmentations such that $\forall_{T\in\text{Support}(\mathcal{T})}T\Phi \in \{\Phi A K:A\in\mathbb{R}^{n\times n}\}$. Suppose $ f \in\arg\min\{\Vert f \Vert_{HS}:\mathcal{L}(\{\phi_i\}_{i=1}^{n}, \mathcal{T}, f)=0\}$. Then, $f=A\Phi^\top $ for some $A\in\mathbb{R}^{d\times n}$.
\end{restatable}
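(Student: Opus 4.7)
The plan is to adapt the classical representer-theorem argument to the joint-embedding setting. Writing $f = \Theta^\top \phi(\cdot)$ with $\Theta \in \mathbb{R}^d \otimes \mathcal{H}$, the goal is to show that under the stated augmentation hypothesis, a least-norm minimizer $\Theta$ must have all of its ``rows'' lying in $V \coloneqq \mathrm{span}\{\phi_1,\ldots,\phi_n\} \subseteq \mathcal{H}$. Once this is established, expanding each row in the $\phi_i$'s immediately yields $\Theta^\top = \Phi A^\top$ for some $A \in \mathbb{R}^{d\times n}$, which is the claim $f = A\Phi^\top$.

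First I would orthogonally decompose $\Theta = \Theta_V + \Theta_{V^\perp}$, where $\Theta_V$ has rows in $V$ and $\Theta_{V^\perp}$ has rows in $V^\perp$. Because the Hilbert-Schmidt inner product on $\mathbb{R}^d \otimes \mathcal{H}$ is the sum of the row-wise inner products in $\mathcal{H}$, the two components are orthogonal and $\|\Theta\|_{\mathrm{HS}}^2 = \|\Theta_V\|_{\mathrm{HS}}^2 + \|\Theta_{V^\perp}\|_{\mathrm{HS}}^2$. Next I would use the augmentation assumption: $T\Phi \in \{\Phi A K : A \in \mathbb{R}^{n\times n}\}$ for every $T$ in the support of $\mathcal{T}$ means that each augmented feature $T\phi_i$ is a linear combination of $\phi_1,\ldots,\phi_n$ and hence lies in $V$. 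Since $\mathcal{L}(\{\phi_i\}, \mathcal{T}, f)$ evaluates $f$ only at points of the form $T\phi_i$, and since $\langle \theta, T\phi_i \rangle = \langle \theta_V, T\phi_i \rangle$ for every row $\theta$ of $\Theta$ (because $\theta_{V^\perp} \perp V$), the value of $\mathcal{L}(f)$ depends only on $\Theta_V$.

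To conclude, I would observe that since $\mathcal{L}(f) = 0$ by hypothesis and the loss depends only on $\Theta_V$, replacing $\Theta$ by $\Theta_V$ preserves the loss at zero while strictly decreasing the Hilbert-Schmidt norm whenever $\Theta_{V^\perp} \neq 0$. The assumed minimality of $\|\Theta\|_{\mathrm{HS}}$ therefore forces $\Theta_{V^\perp} = 0$, so every row of $\Theta$ lies in the finite-dimensional subspace $V$ and can be written as $\sum_{j} A_{k,j} \phi_j$ for suitable $A_{k,j}$; this yields $\Theta^\top = \Phi A^\top$ and the conclusion $f(x) = A \Phi^\top \phi(x)$. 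I do not expect a substantive obstacle beyond bookkeeping: the only care needed is in handling the identifications between $\Theta$, $\Theta^\top$, and the tensor spaces $\mathbb{R}^d \otimes \mathcal{H}$, $\mathcal{H} \otimes \mathbb{R}^d$, and in verifying that the orthogonal projection onto the finite-dimensional $V$ is well-defined even when $\mathcal{H}$ is infinite-dimensional, which is standard.
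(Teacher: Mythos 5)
Your proposal is correct and follows essentially the same route as the paper's proof: an orthogonal decomposition of $\Theta$ into a part with rows in $\mathrm{span}\{\phi_1,\ldots,\phi_n\}$ and an orthogonal remainder, the observation that the augmentation hypothesis keeps all evaluation points $T\phi_i$ inside that span so the loss depends only on the parallel part, and the Pythagorean norm identity to force the orthogonal part to vanish under least-norm minimality. The only cosmetic difference is that the paper first shows $f_\perp(\phi_i)=0$ via a trace computation and then invokes the assumption on $\mathcal{T}$, whereas you argue directly that each $T\phi_i$ lies in the span; the content is identical.
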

\begin{proof}
	Let $\mathcal{T}$ be a distribution of augmentions s.t. $\forall_{T\in\text{Support}(\mathcal{T})}T\Phi \in \{\Phi A K:A\in\mathbb{R}^{n\times n}\}$ and let $f\in \mathbb{R}^d\otimes\mathcal{H}$ be a solution to $\mathcal{L}(\{\phi_i\}_{i=1}^{n}, \mathcal{T}, f)=0$ . Define $S_\parallel \coloneqq\{A\Phi^\top : A\in \mathbb{R}^{d\times n }\}$. We decompose $f=f_\parallel+f_\perp$ where $f_\parallel\in S_\parallel$ and $<f_\perp, \hat{f}>_{\text{HS}}=0$ for every $\hat{f}\in S_\parallel$. Since $S_\perp$ is finite-dimensional, such a decomposition exists via the Gram-Schmidt process. 
	
	$\mathcal{L}$ is a function of $\{f(T(\phi_i): T\in\text{Support}(\mathcal{T}), i\in [n]\}$, and since  we assume $\forall_{T\in\text{Support}(\mathcal{T})}T\Phi \in \{\Phi A K:A\in\mathbb{R}^{n\times n}\}$, we get that $\mathcal{L}$ is a function of $\{f(\phi_i)\}_{i=1}^n$. 
	
	We first prove that $\forall_{i\in [n]}f_{\perp}(\phi_i)=0$. Let $A\in\mathbb{R}^{d\times n}$ be an arbitrary matrix. 
	\begin{gather*}
		0=<f_\perp, A\Phi^\top >_{\text{HS}} = \text{Tr}(f_\perp^\top  A\Phi^\top ) = \text{Tr}(A\Phi^\top  f_\perp^\top )= \sum_{i=1}^n <Ae_i, \Phi^\top  f_\perp^\top  e_i> 
	\end{gather*}
	Where $\Phi^\top  f_\perp^\top $ is a $n\times n$ matrix.  Since $A$ is arbitrary it means that $\Phi^\top  f_\perp^\top  = 0$ and hence $(\Phi^\top  f_\perp^\top )^\top =f_\perp\Phi=0$ . Therefore: $\forall_{i\in [n]}f_{\perp}(\phi_i)=0$. We get that for every $i\in [n]$:
	\begin{align*}
		f(\phi_i)&=f_\parallel(\phi_i) \\
		\Vert f \Vert_{\text{HS}}^2  &= \Vert f_\parallel \Vert_{\text{HS}}^2 +\Vert f_\perp \Vert_{\text{HS}}^2
	\end{align*}
	And since $f$ is a min-norm solution of $\mathcal{L}(\{\phi_i\}_{i=1}^{n}, \mathcal{T}, f)=0$ it must satisfy $f=f_\parallel\in S_\parallel$.
	
\end{proof}

We continue by deriving two conditions that are equivalent to obtaining zero loss in VICReg.

\begin{restatable}{lemma}{vicregInsight}\label{vicreg_insight}
 $L_{\text{VIC}}(Z,Z')=0$ if and only if $Z=Z'$ and $\text{cov}(Z)=n^{-1}ZHH^\top Z^\top =I$.
\end{restatable}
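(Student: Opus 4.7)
The statement is essentially a direct unpacking of the definition of $L_{\text{VIC}}$, so the plan is simply to separate the loss into its component nonnegative terms and read off the zero conditions.

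First I would observe that $L_{\text{VIC}}(Z,Z') = \lambda s(Z,Z') + \mu[v(Z)+v(Z')] + \nu[c(Z)+c(Z')]$ with $\lambda,\mu,\nu>0$, and that each of $s$, $v$, $c$ is itself a sum of squares and therefore nonnegative. Consequently, $L_{\text{VIC}}(Z,Z') = 0$ if and only if every one of $s(Z,Z')$, $v(Z)$, $v(Z')$, $c(Z)$, $c(Z')$ vanishes individually. This reduces the claim to interpreting each of these vanishing conditions.

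Next I would handle the three types of terms:
\begin{itemize}
    \item From $s(Z,Z') = \frac{1}{n}\sum_{i=1}^n \|z_i - z_i'\|_2^2 = 0$, I conclude $z_i = z_i'$ for all $i$, i.e.\ $Z = Z'$.
    \item From $v(Z) = \frac{1}{d}\sum_{i=1}^d (1-[\text{cov}(Z)]_{i,i})^2 = 0$, I conclude $[\text{cov}(Z)]_{i,i} = 1$ for every diagonal index $i$.
    \item From $c(Z) = \frac{1}{d}\sum_{i\neq j}[\text{cov}(Z)]_{i,j}^2 = 0$, I conclude $[\text{cov}(Z)]_{i,j} = 0$ for every off-diagonal pair.
\end{itemize}
Combining the last two gives $\text{cov}(Z) = I_d$, and since we have already derived $Z = Z'$, the analogous conditions for $Z'$ are automatic. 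The identity $\text{cov}(Z) = n^{-1} Z H H^\top Z^\top$ is then just the definition of the sample covariance stated in Section~3.

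For the converse, if $Z = Z'$ and $\text{cov}(Z) = I_d$, then $s$, $v$, $c$ each evaluate to zero by direct substitution, so $L_{\text{VIC}}(Z,Z') = 0$. There is no genuine obstacle in this argument; the only mildly subtle point is ensuring that one is allowed to decouple the vanishing of the sum into vanishing of each summand, which follows purely from the strict positivity of $\lambda,\mu,\nu$ and the nonnegativity of each component.
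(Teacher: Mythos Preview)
Your proof is correct and follows essentially the same approach as the paper: both arguments use the nonnegativity of each component together with $\lambda,\mu,\nu>0$ to reduce $L_{\text{VIC}}(Z,Z')=0$ to $s(Z,Z')=v(Z)=c(Z)=0$, and then read off $Z=Z'$ and $\text{cov}(Z)=I_d$.
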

\begin{proof}
	$L_{\text{VIC}}(Z,Z')=0$ if and only if:
	\begin{align*}
		s(Z,Z') &= \frac{1}{n}\sum_{i=1}^{n}\left\Vert z_{i}-z_{i}'\right\Vert _{2}^{2} = 0\\
		v(Z) &= \frac{1}{d}\sum_{i=1}^{d}\left(1-[\text{cov}\left(Z\right)]_{i,i}\right)^{2} = 0 \\
		c(Z) &= \frac{1}{d}\sum_{i\neq j}^{d}[\text{cov}(Z)]_{i,j}^2 = 0
	\end{align*}
	
	Which is satisfied if and only if $Z=Z'$ and $\text{cov}(Z) = I$.
	
\end{proof}

Next, we show a statement that is already very close to Theorem \ref{main lemma VICReg}. The only difference here is that we still assume the target representations to have identity covariance. This simplifies the proof, but it will be easy to relax.

\begin{proposition}\label{aux_vicreg}
  Suppose Condition \ref{invertable_gram} is satisfied. Let $f^*\in\mathbb{R}^d\otimes\mathcal{H}$ be of the form $f^*=C\Phi^\top,C \in \mathbb{R}^{d\times n}$ s.t. $\text{cov}\left([f^*(\phi_1), \ldots, f^*(\phi_n)]\right)=`I_d$. Then, if $f$ minimizes $\mathcal{L}_{\text{VIC}}\left(\left\{ \phi_{i}\right\} _{i=1}^{n}, \mathcal{T}_\mathcal{H}(C),f\right)$ over $\mathbb{R}^d\otimes\mathcal{H}$ with least Hilbert-Schmidt-norm, there exists $Q\in O(d)$ s.t. $f=Qf^*$.

\end{proposition}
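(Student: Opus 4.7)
My first step is to show that $f^\ast$ itself achieves zero loss. Writing out the nontrivial augmentation $T_\mathcal{H}=\Phi C^\top(CKC^\top)^{-1}C\Phi^\top$, a direct computation using $\Phi^\top\Phi=K$ collapses the middle factor and gives $f^\ast(T_\mathcal{H}\phi_j)=CKC^\top(CKC^\top)^{-1}CKe_j=CKe_j=f^\ast(\phi_j)$, so the alignment term $s$ vanishes for every realization of $(T_i,T_i')$. Combined with the hypothesis $\text{cov}([f^\ast(\phi_1),\ldots,f^\ast(\phi_n)])=I_d$, which drives $v$ and $c$ to zero, this shows $L_{\text{VIC}}(Z,Z')=0$ always, so the infimum of $\mathcal{L}_{\text{VIC}}$ equals $0$ and every minimizer---in particular the least-norm one $f$---attains it. Since both $I_\mathcal{H}$ and $T_\mathcal{H}$ lie in $\{\Phi AK:A\in\mathbb{R}^{n\times n}\}$ (taking $A=K^{-1}$ and $A=C^\top(CKC^\top)^{-1}C$ respectively), Proposition~\ref{representer} then forces $f=A\Phi^\top$ for some $A\in\mathbb{R}^{d\times n}$.

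Next I would exploit the zero-loss condition via Lemma~\ref{vicreg_insight}. Because $\mathcal{T}_\mathcal{H}(C)$ has finite support $\{I_\mathcal{H},T_\mathcal{H}\}$ and every combination $(T_i,T_i')$ has positive probability, $\mathcal{L}_{\text{VIC}}=0$ forces $L_{\text{VIC}}(Z,Z')=0$ on every such combination. In particular, the pair $(I_\mathcal{H},T_\mathcal{H})$ yields the alignment identity $f(T_\mathcal{H}\phi_j)=f(\phi_j)$ for all $j$. Substituting $f=A\Phi^\top$ and $T_\mathcal{H}\Phi=\Phi C^\top(CKC^\top)^{-1}CK$, this becomes $AKC^\top(CKC^\top)^{-1}CK=AK$; cancelling the invertible $K$ and setting $D\coloneqq AKC^\top(CKC^\top)^{-1}\in\mathbb{R}^{d\times d}$ gives $A=DC$, hence $f=Df^\ast$. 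The second condition from the lemma, $\text{cov}(Z)=I_d$, reduces by the alignment identity to $\text{cov}([f(\phi_1),\ldots,f(\phi_n)])=I_d$; propagating through $f=Df^\ast$ gives $D\,\text{cov}([f^\ast(\phi_1),\ldots,f^\ast(\phi_n)])D^\top=DD^\top=I_d$, so $D\in O(d)$ and $f=Qf^\ast$ with $Q\coloneqq D$.

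The main obstacle I anticipate is ensuring the individual alignment identities hold exactly rather than only in expectation: this is handled precisely because $\mathcal{T}_\mathcal{H}(C)$ is a discrete two-point distribution, but any weakening to augmentations with continuous support would require almost-sure arguments and a careful choice of function-class topology. A secondary subtlety is that Proposition~\ref{representer} is stated for least-norm \emph{zero-loss} solutions rather than least-norm minimizers of $\mathcal{L}_{\text{VIC}}$; the initial observation that $\mathcal{L}_{\text{VIC}}$ actually attains $0$ via $f^\ast$ is what lets me apply it. The remaining algebra is essentially the manipulation of the idempotent $KC^\top(CKC^\top)^{-1}C$, which I expect to be routine once the framing above is in place.
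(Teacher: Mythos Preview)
Your proposal is correct and follows essentially the same route as the paper's proof: show $f^\ast$ attains zero loss so the least-norm minimizer does too, invoke Proposition~\ref{representer} to write $f=A\Phi^\top$, then combine the alignment identity $AKC^\top(CKC^\top)^{-1}CK=AK$ with the covariance constraint $\text{cov}(AK)=I_d$ to force $A=QC$ with $Q\in O(d)$. The only cosmetic difference is bookkeeping order---you cancel $K$ first to obtain $A=DC$ and then verify $DD^\top=I_d$, whereas the paper substitutes the alignment identity into the covariance equation, simplifies via $n^{-1}CKHH^\top KC^\top=I_d$ to obtain $QQ^\top=I_d$ for $Q=AKC^\top(CKC^\top)^{-1}$, and only then reads off $A=QC$; both are the same computation.
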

 \begin{proof}  
 Our goal in this proof is to show that the set of least norm minimizers of $\mathcal{L}_{\text{VIC}}$, which will be denoted by $\mathcal{F^*}$, is the same as $\{Qf^*: Q\in O(d)\}$. The general structure of the proof is as follows: We first prove that any $Qf^*$ indeed achieves minimum loss (Claim \ref{first_direction}) and then prove that any least norm minimizer must be of this form (Claim \ref{claim_2}) --- which is the crux of the proof. Claims \ref{first_direction} and \ref{claim_2}
will not be sufficient by themselves to prove the equivalence between $\{Qf^*: Q\in O(d)\}$ and $\mathcal{F^*}$, since it could still be that there is a $Qf^*$ that minimizes the loss (which we know by Claim \ref{first_direction}) but is not a least norm solution. We prove that this cannot be the case in Claim \ref{claim_3} and thereby prove the proposition. 
 
 We first show that $\mathcal{T}_\mathcal{H}(C)$ is well defined by proving that $CKC^\top $ has full rank. This follows from the fact that if $K$ is a positive definite symmetric matrix (Condition \ref{invertable_gram}), and if $K=LL^\top $ is the Cholesky decomposition of $K$, $CKC^\top =(CL)(CL)^\top $ and $rank(CKC^\top )=rank(CL)=rank(C)$. The fact that $rank(C)=d$ follows our assumption that: $\text{cov}\left([f^*(\phi_1), \ldots, f^*(\phi_n)]\right)=\text{cov}(C\Phi^\top \Phi)=\frac{1}{n}(C\Phi^\top \Phi H)(C\Phi^\top \Phi H)^\top =I_d$ and because of the submultiplicativity of the matrix rank $rank(C)\geq rank(I_d)=d$. Of course $rank(C)\leq d$ because $C$ is a  $d\times n$ matrix. To summarize, we get $rank(CKC^\top) = rank(C)= d$.

For simplicity of notation we denote $\mathcal{T}=\mathcal{T}_\mathcal{H}(C)$. Define:
	\[
	\mathcal{F}^\ast\coloneqq\arg\min_{f\in\mathbb{R}^d\otimes\mathcal{H}}\{\Vert f\Vert_{\text{HS}}: \mathcal{L}_{\text{VIC}}\left(\left\{ \phi_{i}\right\} _{i=1}^{n},\mathcal{T},f\right)=0\}
	\]
	We will prove that $\mathcal{F}^*=\{QC\Phi^\top : Q\in O(d)\}$. Since $\mathcal{L}_{\text{VIC}}$ is non-negative, the property $\mathcal{L}_{\text{VIC}}=0$ also implies that  $\mathcal{F}^\ast$ is the set of minimizers of $\mathcal{L}_{\text{VIC}}\left(\left\{ \phi_{i}\right\} _{i=1}^{n},\mathcal{T},f\right)$ with least Hilbert-Schmidt-norm. 
	
	The fact that $\mathcal{L}_{\text{VIC}}$ is non-negative also implies that $\mathcal{L}_{\text{VIC}}(\{\phi_i\}_{i=1}^{n}, \mathcal{T}, f)=\mathbb{E}_{Z,Z'}\big[L_{\text{VIC}}(Z,Z')\big]=0$ if and only if every term inside the expectation is exactly $0$. i.e. for every $(T_1,T'_1,\ldots, T_n, T'_n)$ in the support of $(\mathcal{T})^{\otimes 2n}$
	\begin{gather}
		L_{\text{VIC}}([f(T_1(\phi_1)),\ldots, f(T_n(\phi_n))],[f(T'_1(\phi_1)),\ldots, f(T'_n(\phi_n))])=0\label{expectation_result}
	\end{gather}
	
	\begin{claim} \label{first_direction}
		We prove that $\{QC\Phi^\top : Q\in O(d)\} \subseteq \{f: \mathcal{L}_{\text{VIC}}\left(\left\{ \phi_{i}\right\} _{i=1}^{n},\mathcal{T},f\right)=0\}$.
	\end{claim}
	Let $f=QC\Phi^\top $ where $Q\in O(d)$. We prove that for every $T, T'\in \text{Support}(\mathcal{T})$ and for every $i\in[n]$ it holds that $f(T(\phi_i))=f(T'(\phi_i))$. Since $\text{Support}(\mathcal{T})=\{I_\mathcal{H}, \Phi C^{T}\left(CKC^{T}\right)^{-1}C\Phi^{T}\}$, it is enough to check that the following holds for every $i\in[n]$:
	\[
	QC\Phi^\top I_\mathcal{H}\phi_i= QC\Phi^\top  (\Phi C^{T}\left(CKC^{T}\right)^{-1}C\Phi^{T})\phi_i
	\]
	Which indeed holds since:
	\begin{gather*}
		QC\Phi^\top  \Phi C^{T}\left(CKC^{T}\right)^{-1}C\Phi^{T}\phi_i = Q\underbrace{CKC^{T}\left(CKC^{T}\right)^{-1}}_{I}C\Phi^{T}\phi_i  =QC\Phi^{T}\phi_i
	\end{gather*}
	Therefore, \eqref{expectation_result} is exactly:
	\begin{gather*}
		L_{\text{VIC}}(QC\Phi^{T}\Phi,QC\Phi^{T}\Phi)=0 \\
		L_{\text{VIC}}(QCK,QCK)=0 
	\end{gather*}
	Which according to Lemma \eqref{vicreg_insight} holds if $\text{cov}(QCK)=I$, which is true since we assumed that $\text{cov}(CK)=I$ and therefore indeed $L_{\text{VIC}}(QCK,QCK)=0$, \eqref{expectation_result} holds and Claim \ref{first_direction} is proven.
	
	\begin{claim}\label{claim_2}
		We prove that $\mathcal{F}^\ast\subseteq \{QC\Phi^\top : Q\in O(d)\}$.
	\end{claim}
	
	Let $f\in\mathcal{F}^\ast$. The structure of the proof will be as follows: First, we prove that $f$ is of the form $f=A\Phi^\top $ via the version of the representer theorem we proved (Proposition \ref{representer}). Then, we use Lemma \ref{vicreg_insight} which states the conditions under which $L(Z,Z')=0$ and substituting different possible augmentations combinations that could occur under $\mathcal{T}$ into $Z,Z'$ and get equations \eqref{first_equality} and \eqref{covariance_a} (we use the fact that no matter the combination, $L(Z,Z')=0$, which is formalized by eq. \ref{expectation_result}). Combining \eqref{first_direction}, \eqref{covariance_a} and the assumption that the covariance is identity will force $A$ to be of the form $A=QC$, which proves the claim.
    
    To start, to prove $f=A\Phi^\top $ we would like to use Proposition \ref{representer}, which requires $\forall_{T\in\text{Support}(\mathcal{T})}T\Phi \in \{\Phi \bar{A} K:\bar{A}\in\mathbb{R}^{n\times n}\}$. For $T=\Phi C^{T}\left(CKC^{T}\right)^{-1}C\Phi^{T}$ the condition is satisfied since by definition $K=\Phi^\top \Phi$. For $T=I_\mathcal{H}$ the condition is satisfied for $\bar{A}=K^{-1}$.
	Therefore, according to Proposition \ref{representer}  $f=A\Phi^\top $.

	We now use \eqref{expectation_result}. According to Lemma \ref{vicreg_insight} for every $i\in[n]$ and $T,T'\in \text{Support}(\mathcal{T})$:
	\begin{gather*}
		A\Phi^\top  T_i(\phi_i)=A\Phi^\top  T_i'(\phi_i) 
	\end{gather*}
	Taking $T_i=I_\mathcal{H}$ and $T_i'=\Phi C^{T}\left(CKC^{T}\right)^{-1}C\Phi^{T}$ results in the following:
	\begin{gather*}
		A\Phi^\top  I_\mathcal{H}(\phi_i)=A\Phi^\top  \Phi C^{T}\left(CKC^{T}\right)^{-1}C\Phi^{T}\phi_i 
	\end{gather*}
	Which holds for every $i\in[n]$, meaning:
	\begin{gather}
		AK=AK C^{T}\left(CKC^{T}\right)^{-1}CK \label{first_equality}
	\end{gather}
	We use Lemma \ref{vicreg_insight} again for \eqref{expectation_result} to obtain that for $T_1 = \ldots = T_n = T'_1 = \ldots = T'_n = I_\mathcal{H}$
	\begin{gather*}
		\text{cov}(A\Phi^\top  T_1(\phi_1),\ldots, A\Phi^\top  T_n(\phi_n))= \text{cov}(A\Phi^\top \Phi)= \text{cov}(AK) =I\\
		n^{-1}(AKH)(AKH)^\top =I\\
		n^{-1}AKHH^\top KA^\top =I \numberthis \label{covariance_a}
	\end{gather*}
	In addition, we assumed that $\text{cov}(CK)=I$, Therefore:
	\begin{gather*}
		n^{-1}CKHH^\top KC^\top =I \numberthis \label{covariance_c}
	\end{gather*}

We substitute \eqref{first_equality} into \eqref{covariance_a} and get:
\begin{gather*}
	n^{-1}AK C^{T}\left(CKC^{T}\right)^{-1}CKHH^\top KC^\top \left(CKC^{T}\right)^{-1}CKA^\top =I
\end{gather*}
We substitute \eqref{covariance_c} into the above equation and get:

\begin{gather*}
	AK C^{T}\left(CKC^{T}\right)^{-1}\left(CKC^{T}\right)^{-1}CKA^\top =I \numberthis \label{orthogonal}
\end{gather*}

We denote $Q=AK C^{T}\left(CKC^{T}\right)^{-1}$, the above equation means that $Q\in O(d)$. Substituting $Q$ back to \eqref{first_equality} we get:
\begin{gather*}
	AK=QCK \\
	A=QC
\end{gather*}
Therefore, $f\in \{QC\Phi^\top : Q\in O(d)\}$ and Claim \ref{claim_2} is proven.

\begin{claim}\label{claim_3}
	$\{QC\Phi^\top : Q\in O(d)\}\subseteq \mathcal{F}^\ast$.
\end{claim}

We assume by way of contradiction that there exists $f=\tilde{Q}C\Phi^\top \in \{QC\Phi^\top : Q\in O(d)\}$ such that $f\notin \mathcal{F}^*$. We know via Claim \ref{first_direction} that $\mathcal{L}_{\text{VIC}}\left(\left\{ \phi_{i}\right\} _{i=1}^{n},\mathcal{T},\tilde{f}\right)=0$, therefore the only way $f\notin \mathcal{F}^*$ is by not being a least-norm minimizer, i.e. there must be a $f'\in\mathcal{F}^\ast$ such that $\Vert\ f'\Vert_{\text{HS}}<\Vert f\Vert_{\text{HS}}$. Applying Claim \ref{claim_2} we know that $f'=\tilde{Q}'C\Phi^\top $ for $\tilde{Q}'\in O(d)$. We get that $\Vert\tilde{Q}C\Phi^\top \Vert_{\text{HS}}<\Vert\tilde{Q}'C\Phi^\top \Vert_{\text{HS}}$ for two matrices $\tilde{Q},\tilde{Q}'\in O(d)$. Which is a contradiction since for every $Q\in O(d)$:
\begin{gather*}
	\Vert QC\Phi^\top \Vert_{\text{HS}}^2=Tr((QC\Phi^\top )^\top (QC\Phi^\top ))=Tr((Q\Phi^\top )^\top Q^\top Q(C\Phi^\top ))\\ =Tr((C\Phi^\top )^\top (C\Phi^\top )) =  \Vert C\Phi^\top \Vert_{\text{HS}}^2 
\end{gather*}

Therefore, we have proven Claim \ref{claim_3}. Combining Claims \ref{claim_2} and \ref{claim_3} we get:
\[
\{QC\Phi^\top : Q\in O(d)\}= \mathcal{F}^\ast
\]
\end{proof}

With Proposition \ref{aux_vicreg} in hand, we prove Theorem \ref{main lemma VICReg}; this time, we do not assume a whitened covariance:

\mainLemmaVICReg*

\begin{proof}
The only difference between Theorem \ref{main lemma VICReg} and Proposition \ref{aux_vicreg} is that now instead of assuming  $f^*=C\Phi^\top,C \in \mathbb{R}^{d\times n}$ s.t. $\text{cov}\left([f^*(\phi_1), \ldots, f^*(\phi_n)]\right)=I_d$ we just assume it is full rank. However, since the covariance is full rank and symmetric, there is a whitening matrix $W$ s.t. $W\text{cov}\big([f^*(\phi_1), \ldots, f^*(\phi_n)]\big)W^\top =\text{cov}\big([Wf^*(\phi_1), \ldots, Wf^*(\phi_n)]=I$ and $W$ is invertible. From Proposition \ref{aux_vicreg} we get that if $f$ is a least norm minimizer of 
$\mathcal{L}_{\text{VIC}}\Bigl(\{\phi(x_i)\}_{i=1}^n,\;\mathcal{T}_\mathcal{H}(C),\;f\Bigr)$
over $\mathbb{R}^d \otimes \mathcal{H}$, it satisfies $f=QWf^*$. In particular, $f\overset{\text{aff}}{\sim}f^*$, where the affine transformation $QW$ is invertible as a product of two invertible matrices.
\end{proof}

\subsection{Proof of Theorem \ref{main lemma SCLNorm} (Spectral Contrastive Loss)}

The structure of our proof is as follows: First, we define a VICReg loss $\mathcal{L}_{\text{VIC-corr}}$ that uses the unnormalized cross-correlation instead of the covariance. We will verify that an equivalent version of Theorem \ref{main lemma VICReg} holds for $\mathcal{L}_{\text{VIC-corr}}$ with the same augmentations $\mathcal{T_H}$ as in Definition \ref{definition vicreg}. Then, we will use the inequality derived in Section \ref{lower_bound} to show that $L(Z,Z')_{\text{VIC-corr}}=0\iff {L}_{\text{SCL}}(Z,Z')=-d$. 

We start by defining:
\begin{align*}
L_{\text{VIC-corr}}(Z,Z')=\lambda s(Z, Z') + \mu [ &v_{\text{corr}}(Z) + v_{\text{corr}}(Z')] + \nu [ c_{\text{corr}}(Z) + c_{\text{corr}}(Z')] \text{, where}
\\
s(Z,Z') &= \frac{1}{n}\sum_{i=1}^{n}\Vert z_{i}-z_{i}' \Vert_{2}^{2},\\
v_{\text{corr}}(Z) &= \frac{1}{d}\sum_{i=1}^{d}\left(1-[\frac{1}{n}ZZ^\top ]_{i,i}\right)^{2},
\\
c_{\text{corr}}(Z)&=\frac{1}{d}\sum_{i\neq j}^{d}[\frac{1}{n}ZZ^\top ]_{i,j}^2
\end{align*}
and $\lambda,\mu, \nu > 0$ are hyper-parameters.  We then define 
$\mathcal{L}_{\text{VIC-corr}}\Bigl(\{x_i\}_{i=1}^{n}, \mathcal{T},f\Bigl)=\mathbb{E}_{Z,Z'}\Bigl[L_{\text{VIC-corr}}(Z,Z')\Bigr]$, $z_i=f\Bigl(T_i(x_i)\Bigl),z_i'=f\Bigl(T'_i(x_i)\Bigl)$, where $\left(T_i, T'_i\right) {\sim} \mathcal{T}\otimes\mathcal{T}$.

An equivalent version of Theorem \ref{main lemma VICReg} holds for $\mathcal{L}_{\text{VIC-corr}}$, which we state below as Theorem \ref{mainLemmaVICRegCorr}. Since we use Theorem \ref{mainLemmaVICRegCorr} as an auxiliary result for proving the optimality of augmentation for the Spectral Contrastive Loss, we formulate a stronger statement, where the loss of the minimizer is also guaranteed to be $0$. However, as we will see, this does not significantly change the proof.

\begin{theorem}[Optimality of Augmentations for VICReg-corr]\label{mainLemmaVICRegCorr}\
Let $f^* = C\,\Phi^\top , C \in \mathbb{R}^{d\times n}$ satisfy Condition \ref{non_redundant_representation} and assume Condition \ref{invertable_gram}.  
Then, $C$ has full rank and any $f$ that is a least norm minimizer of 
$\mathcal{L}_{\text{VIC-corr}}\Bigl(\{\phi(x_i)\}_{i=1}^n,\;\mathcal{T}_\mathcal{H}(C),\;f\Bigr)$
over $\mathbb{R}^d \otimes \mathcal{H}$ satisfies $\mathcal{L}_{\text{VIC-corr}}\Bigl(\{\phi(x_i)\}_{i=1}^n,\;\mathcal{T}_\mathcal{H}(C),\;f\Bigr)=0$ and $f\overset{\text{aff}}{\sim}f^*$.
\end{theorem}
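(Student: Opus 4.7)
The plan is to closely mirror the proof of Proposition~\ref{aux_vicreg} (which established Theorem~\ref{main lemma VICReg}), with the single structural modification that the centering matrix $H$ is replaced everywhere by the identity, since $\mathcal{L}_{\text{VIC-corr}}$ penalizes the unnormalized second moment $\frac{1}{n}ZZ^\top$ rather than $\text{cov}(Z)$. First I would reduce to the normalized case where $\frac{1}{n}CK^2C^\top = I_d$: Condition~\ref{non_redundant_representation} guarantees $CKH$ has rank $d$, hence $CK$ has rank $d$ (since $H$ drops at most one dimension and $CK$ is $d \times n$), hence $(CK)(CK)^\top$ is rank $d$ and invertible, so there is an invertible whitening matrix $W$ with $\frac{1}{n}(WC)K^2(WC)^\top = I_d$. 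Working with $\tilde{f}^* = WC\Phi^\top$ in place of $f^*$ is harmless: any $\tilde{f}\overset{\text{aff}}{\sim}\tilde{f}^*$ is automatically $\overset{\text{aff}}{\sim}f^*$ via the composed invertible affine map.

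In the normalized setting, the argument splits into three claims, exactly as in the proof of Proposition~\ref{aux_vicreg}. For the first direction, each $f = Q\tilde{C}\Phi^\top$ with $Q \in O(d)$ achieves zero loss: by the kernel identity $\Phi^\top \Phi = K$ one verifies that $f$ is invariant under every $T \in \text{Support}(\mathcal{T}_\mathcal{H}(\tilde{C}))$, so $Z = Z'$, and the normalization hypothesis gives $\frac{1}{n}ZZ^\top = I_d$ --- the condition now playing the role that the covariance condition $\text{cov}(Z) = I_d$ played before. This makes $s$, $v_{\text{corr}}$, and $c_{\text{corr}}$ all vanish. For the converse direction, the representer-style result of Proposition~\ref{representer} applies unchanged since the augmentation support is the same, forcing any least-norm minimizer to be of the form $A\Phi^\top$, and the alignment constraint $f(\phi_i) = f(T_\mathcal{H}(\phi_i))$ still yields
\begin{equation*}
AK = AKC^\top(CKC^\top)^{-1}CK.
\end{equation*}

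The key algebraic step --- which I expect to be the main thing worth checking --- is combining this with the correlation condition $\frac{1}{n}(AK)(AK)^\top = I_d$ (the analog of equation (\ref{covariance_a}) without the $H$'s) and the normalization $\frac{1}{n}CK^2C^\top = I_d$. Substituting the invariance equation into the correlation condition and then using the normalization to collapse the middle block gives
\begin{align*}
I_d &= \tfrac{1}{n}\,AKC^\top(CKC^\top)^{-1}\,CK^2C^\top\,(CKC^\top)^{-1}CKA^\top \\
&= AKC^\top(CKC^\top)^{-1}(CKC^\top)^{-1}CKA^\top,
\end{align*}
so $Q := AKC^\top(CKC^\top)^{-1}$ lies in $O(d)$, and plugging back into the invariance equation yields $A = Q\tilde{C}$. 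Reassuringly, the substitution works cleanly precisely because the block $CK^2C^\top$ now appears in the middle, whereas in the original proof the block $CKHH^\top KC^\top$ appeared there --- both collapse to $nI_d$ by the respective normalization hypothesis.

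The least-norm direction (analog of Claim~\ref{claim_3}) is unchanged: $\Vert Q\tilde{C}\Phi^\top\Vert_\text{HS}$ is invariant under orthogonal $Q$, so every such candidate is simultaneously a minimizer. Since $L_{\text{VIC-corr}} \geq 0$ and the minimizers $Q\tilde{C}\Phi^\top$ achieve $L_{\text{VIC-corr}} = 0$ pointwise in the augmentation, we obtain $\mathcal{L}_{\text{VIC-corr}}\bigl(\{\phi_i\}_{i=1}^n, \mathcal{T}_\mathcal{H}(C), f\bigr) = 0$ for every least-norm minimizer. Finally, undoing the whitening gives $f = QW f^*$, which is an invertible affine transformation of $f^*$, proving $f \overset{\text{aff}}{\sim} f^*$.
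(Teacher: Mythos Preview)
Your proposal is correct and follows essentially the same approach as the paper: reduce to the normalized case $\tfrac{1}{n}CK^2C^\top = I_d$ via an invertible whitening, replace Lemma~\ref{vicreg_insight} by its correlation analogue ($L_{\text{VIC-corr}}(Z,Z')=0 \iff Z=Z'$ and $\tfrac{1}{n}ZZ^\top=I_d$), and run the three-claim argument of Proposition~\ref{aux_vicreg} with the centering matrix $H$ removed, arriving at exactly the same orthogonality equation~\eqref{orthogonal}. One cosmetic point: you switch between $C$ and $\tilde C = WC$ midway (the invariance equation and normalization should both use $\tilde C$), but since $\mathcal{T}_\mathcal{H}(C) = \mathcal{T}_\mathcal{H}(WC)$ this is harmless and the algebra goes through as written.
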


The proof of this theorem is very similar to the proof of Theorem \ref{main lemma VICReg}, and we will just note the exact differences. 

The main difference between the proofs of Theorems 4.2 and \ref{mainLemmaVICRegCorr} lies in Lemma \ref{vicreg_insight}, which is the only place the definition of $v(Z)$ and $c(Z)$ come into play. In the proof of Theorem \ref{mainLemmaVICRegCorr}, Lemma \ref{vicreg_insight} is replaced with the following lemma:

\begin{restatable}{lemma}{vicregCorrInsight}\label{vicreg_corr_insight}
 $L_{\text{VIC-corr}}(Z,Z')=0$ if and only if $Z=Z'$ and $\frac{1}{n}ZZ^\top =I$.
\end{restatable}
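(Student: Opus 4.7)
The plan is to observe that $L_{\text{VIC-corr}}(Z,Z')$ is a weighted sum of three components, each of which is manifestly a nonnegative sum of squares, and that with strictly positive weights $\lambda,\mu,\nu>0$ the total vanishes if and only if each component vanishes. I would state this as the first step and reduce the lemma to analyzing the three vanishing conditions separately.

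Next, I would decode each of the three conditions individually. For the reverse direction, if $Z=Z'$ and $\tfrac{1}{n}ZZ^\top=I$, then $s(Z,Z')=0$ is immediate; the diagonal entries of $\tfrac{1}{n}ZZ^\top$ all equal $1$ so $v_{\text{corr}}(Z)=v_{\text{corr}}(Z')=0$; and the off-diagonal entries all equal $0$ so $c_{\text{corr}}(Z)=c_{\text{corr}}(Z')=0$. For the forward direction, $s(Z,Z')=\tfrac{1}{n}\sum_{i=1}^n \|z_i-z_i'\|_2^2=0$ gives $Z=Z'$ directly; $v_{\text{corr}}(Z)=0$ forces $[\tfrac{1}{n}ZZ^\top]_{ii}=1$ for every $i$; and $c_{\text{corr}}(Z)=0$ forces $[\tfrac{1}{n}ZZ^\top]_{ij}=0$ for every $i\neq j$. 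Combining these yields $\tfrac{1}{n}ZZ^\top=I$.

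There is no real obstacle here beyond being careful about the strict positivity of $\lambda,\mu,\nu$, which is assumed in the definition of $L_{\text{VIC-corr}}$ and ensures that the vanishing of the weighted sum is equivalent to the simultaneous vanishing of each term. The proof is purely algebraic and parallels the proof of Lemma~\ref{vicreg_insight}, the only difference being that the condition $[\text{cov}(Z)]_{ii}=1$ and $[\text{cov}(Z)]_{ij}=0$ is replaced by the corresponding condition on the unnormalized correlation matrix $\tfrac{1}{n}ZZ^\top$. Consequently, the lemma follows by exactly the same structural argument, and it plugs in transparently to replace Lemma~\ref{vicreg_insight} in the proof of Theorem~\ref{mainLemmaVICRegCorr}.
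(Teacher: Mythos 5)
Your proposal is correct and follows exactly the paper's approach: since $\lambda,\mu,\nu>0$ and each of $s$, $v_{\text{corr}}$, $c_{\text{corr}}$ is a sum of squares, the loss vanishes iff each term does, and decoding the three conditions gives $Z=Z'$ and $\tfrac{1}{n}ZZ^\top=I$. This mirrors the paper's proof of Lemma~\ref{vicreg_insight}, which is precisely how the paper justifies Lemma~\ref{vicreg_corr_insight}.
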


In the proof of Theorem \ref{main lemma VICReg}, Lemma \ref{vicreg_insight} is used only in the proof of Proposition \ref{aux_vicreg}, which in the proof of Theorem \ref{mainLemmaVICRegCorr} is replaced with the following proposition. Here, the covariance condition is replaced with a condition on the correlation, and the statement is stronger:

\begin{proposition}\label{aux_vicreg_corr}
  Suppose Condition \ref{invertable_gram} is satisfied. Let $f^*\in\mathbb{R}^d\otimes\mathcal{H}$ be of the form $f^*=C\Phi^\top,C \in \mathbb{R}^{d\times n}$ s.t. $\frac{1}{n}[f^*(\phi_1), \ldots, f^*(\phi_n)][f^*(\phi_1), \ldots, f^*(\phi_n)]^\top =I_d$. Then, if $f$ minimizes $\mathcal{L}_{\text{VIC-corr}}\left(\left\{ \phi_{i}\right\} _{i=1}^{n}, \mathcal{T}_\mathcal{H}(C),f\right)$ with least Hilbert-Schmidt-norm over $\mathbb{R}^d\otimes\mathcal{H}$, $f$ satisfies $\mathcal{L}_{\text{VIC-corr}}\left(\left\{ \phi_{i}\right\} _{i=1}^{n}, \mathcal{T}_\mathcal{H}(C),f\right)=0$ and there exists $Q\in O(d)$ such that $f=Qf^*$.
\end{proposition}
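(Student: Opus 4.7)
The plan is to mirror the three-step structure of the proof of Proposition \ref{aux_vicreg}, substituting Lemma \ref{vicreg_corr_insight} for Lemma \ref{vicreg_insight} and replacing the centered-covariance identities with uncentered correlation identities. First I would verify that $\mathcal{T}_\mathcal{H}(C)$ is well defined. Since $[f^*(\phi_1),\ldots,f^*(\phi_n)] = C\Phi^\top\Phi = CK$, the hypothesis reads $\frac{1}{n}CK^2 C^\top = I_d$, forcing $\mathrm{rank}(C) \geq d$ and hence $\mathrm{rank}(C) = d$. Combined with positive definiteness of $K$ from Condition \ref{invertable_gram}, this makes $CKC^\top$ invertible. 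Non-negativity of $\mathcal{L}_{\text{VIC-corr}}$ again implies a zero-loss minimizer must satisfy $L_{\text{VIC-corr}}(Z,Z') = 0$ for every realization of augmentations in the support.

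For the inclusion $\{QC\Phi^\top : Q \in O(d)\} \subseteq \mathcal{F}^*$, taking $f = QC\Phi^\top$ and applying the cancellation $CKC^\top(CKC^\top)^{-1} = I_d$ shows that both augmentations in the support of $\mathcal{T}_\mathcal{H}(C)$ produce the identical image $f(T\phi_i) = QCk_i$, so $Z = Z' = QCK$. Lemma \ref{vicreg_corr_insight} then requires only $\frac{1}{n}(QCK)(QCK)^\top = I_d$, which is immediate from $QQ^\top = I_d$ and the hypothesis. In particular this establishes that the optimal loss value is $0$, yielding the strengthened statement of the proposition.

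For the converse direction, Proposition \ref{representer} reduces any minimizer to the form $f = A\Phi^\top$, since both $I_\mathcal{H}$ (with $\bar A = K^{-1}$) and $\Phi C^\top(CKC^\top)^{-1}C\Phi^\top$ map $\Phi$ into $\{\Phi \bar A K : \bar A \in \mathbb{R}^{n \times n}\}$. Applying Lemma \ref{vicreg_corr_insight} across the two augmentation choices with $Z = Z'$ yields $AK = AKC^\top(CKC^\top)^{-1}CK$, and applying it to the $T = T' = I_\mathcal{H}$ case yields the correlation identity $\frac{1}{n}AK^2 A^\top = I_d$. The main technical step, which I expect to be the principal subtlety, is to substitute the first identity into both factors of the correlation identity and invoke the hypothesis $CK^2C^\top = nI_d$ to conclude $AKC^\top(CKC^\top)^{-2}CKA^\top = I_d$. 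Setting $Q := AKC^\top(CKC^\top)^{-1}$ then reads $QQ^\top = I_d$, and resubstitution gives $AK = QCK$, so $A = QC$ by invertibility of $K$.

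Finally, every $QC\Phi^\top$ is indeed a least-norm minimizer because $\|QC\Phi^\top\|_{\text{HS}}^2 = \mathrm{Tr}(\Phi C^\top Q^\top Q C \Phi^\top) = \|C\Phi^\top\|_{\text{HS}}^2$ is independent of $Q$; so all such elements share the same norm and the previous claim forces them all into $\mathcal{F}^*$. The main obstacle is the algebraic substitution in the converse direction, where the absence of the centering matrix $H$ actually simplifies matters relative to Proposition \ref{aux_vicreg}: the identity $(CK)(CK)^\top = nI_d$ plays the role of $CKHH^\top KC^\top = nI_d$, and no commutativity between $H$ and $K$ needs to be exploited.
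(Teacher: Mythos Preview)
Your proposal is correct and follows essentially the same route as the paper: you mirror the three-claim structure of Proposition~\ref{aux_vicreg}, swap Lemma~\ref{vicreg_insight} for Lemma~\ref{vicreg_corr_insight}, and replace the centered identities \eqref{covariance_a}--\eqref{covariance_c} by the uncentered ones $n^{-1}AK^2A^\top=I_d$ and $n^{-1}CK^2C^\top=I_d$, landing at the same orthogonality equation \eqref{orthogonal}. Your observation that the absence of $H$ simplifies the algebra, and that the forward inclusion already certifies the optimal loss is zero, matches what the paper records in Remark~\ref{zero-loss}.
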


\begin{remark}\label{zero-loss}The statement ``$\mathcal{L}_{\text{VIC-corr}}\left(\left\{ \phi_{i}\right\} _{i=1}^{n}, \mathcal{T}_\mathcal{H}(C),f\right)=0$ and there exists $Q\in O(d)$ such that $f=Qf^*$'' might seem stronger than the statement in Proposition \ref{aux_vicreg}, which only guarantees there exists $Q\in O(d)$ such that $f=Qf^*$. However, the proof of Proposition \ref{aux_vicreg} showed that: 

\[\{QC\Phi^\top : Q\in O(d)\}= \mathcal{F}^\ast, \quad\mathcal{F}^\ast\coloneqq\arg\min_{f\in\mathbb{R}^d\otimes\mathcal{H}}\{\Vert f\Vert_{\text{HS}}: \mathcal{L}_{\text{VIC}}\left(\left\{ \phi_{i}\right\} _{i=1}^{n},\mathcal{T},f\right)=0\}\] 

and therefore because $\mathcal{F}^*$ is not empty, the proof also shows that if $f$ is a least norm minimizer then  $\mathcal{L}_{\text{VIC}}\left(\left\{ \phi_{i}\right\} _{i=1}^{n},\mathcal{T},f\right)=0\}$. Therefore, the stronger statement does not change the rest of the proof. 
\end{remark}

We proceed with detailing the changes that are caused by 
using Lemma \ref{vicreg_corr_insight} instead of Lemma \ref{vicreg_insight}. Namely, we replace eq. \eqref{covariance_a} with:
\begin{gather*}
		n^{-1}AK KA^\top =I \numberthis \label{corr_a}
\end{gather*}
	In addition, we assumed that $\frac{1}{n}[f^*(\phi_1), \ldots, f^*(\phi_n)][f^*(\phi_1), \ldots, f^*(\phi_n)]^\top =I_d$ instead of $\text{cov}(CK)=I$, therefore insread of \eqref{covariance_c} we get:
	\begin{gather*}
		n^{-1}CK KC^\top =I \numberthis \label{corr_c}
	\end{gather*}

    We substitute \eqref{first_equality} into \eqref{corr_a} and get:
\begin{gather*}
	n^{-1}AK C^{T}\left(CKC^{T}\right)^{-1}CK KC^\top \left(CKC^{T}\right)^{-1}CKA^\top =I
\end{gather*}
We substitute \eqref{corr_c} into the above equation and get:

\begin{gather*}
	AK C^{T}\left(CKC^{T}\right)^{-1}\left(CKC^{T}\right)^{-1}CKA^\top =I
\end{gather*}
This is exactly eq. \eqref{orthogonal}, therefore the rest of the proof of Proposition \ref{aux_vicreg_corr} goes exactly as Proposition \ref{aux_vicreg}.

We now finish the proof of Theorem \ref{mainLemmaVICRegCorr}.
\begin{proof}
    
The only difference between Theorem \ref{mainLemmaVICRegCorr} and Proposition \ref{aux_vicreg_corr} is that now instead of assuming  $f^*=C\Phi^\top,C \in \mathbb{R}^{d\times n}$ s.t. $\frac{1}{n}[f^*(\phi_1), \ldots, f^*(\phi_n)][f^*(\phi_1), \ldots, f^*(\phi_n)]^\top =I_d$ we just assume $\text{cov}\left([f^*(\phi_1), \ldots, f^*(\phi_n)]\right)$ it is full rank. The covariance being full rank implies $\frac{1}{n}[f^*(\phi_1), \ldots, f^*(\phi_n)][f^*(\phi_1), \ldots, f^*(\phi_n)]^\top $ is full rank:
\begin{align*}
&rank(\frac{1}{n}[f^*(\phi_1), \ldots, f^*(\phi_n)][f^*(\phi_1), \ldots, f^*(\phi_n)]^\top )\\&=rank([f^*(\phi_1), \ldots, f^*(\phi_n)])
\\&\geq rank([f^*(\phi_1), \ldots, f^*(\phi_n)]H_n)\\&=rank(\text{cov}\left([f^*(\phi_1), \ldots, f^*(\phi_n)]\right)
\end{align*}

since the $\frac{1}{n}[f^*(\phi_1), \ldots, f^*(\phi_n)][f^*(\phi_1), \ldots, f^*(\phi_n)]^\top $ is full rank and symmetric, there is a whitening matrix $W$ s.t. $W\frac{1}{n}[f^*(\phi_1), \ldots, f^*(\phi_n)][f^*(\phi_1), \ldots, f^*(\phi_n)]^\top 
W^\top =\frac{1}{n}[Wf^*(\phi_1), \ldots, Wf^*(\phi_n)][Wf^*(\phi_1), \ldots, Wf^*(\phi_n)]^\top =I$ and $W$ is invertible. From Proposition \ref{aux_vicreg_corr} we get that if $f$ is a least norm minimizer of 
$\mathcal{L}_{\text{VIC}-corr}\Bigl(\{\phi(x_i)\}_{i=1}^n,\;\mathcal{T}_\mathcal{H}(C),\;f\Bigr)$
over $\mathbb{R}^d \otimes \mathcal{H}$, it satisfies $\mathcal{L}_{\text{VIC-corr}}\left(\left\{ \phi_{i}\right\} _{i=1}^{n}, \mathcal{T}_\mathcal{H}(C),f\right)=0$ and $f=QWf^*$. In particular, $f\overset{\text{aff}}{\sim}f^*$, where the affine transformation $QW$ is invertible as a product of two invertible matrices.
\end{proof}

We thereby proved the optimality of augmentations for $\mathcal{L}_{\text{VIC}}$ and we proceed by connecting $\mathcal{L}_{\text{VIC-corr}}$ to $\mathcal{L}_{\text{SCL}}$.

\begin{lemma}\label{equivalence_scl_vicregcorr}
$ \mathcal{L}_{\text{SCL}}\Bigl(\{(\phi_i)\}_{i=1}^n,\;\mathcal{T},\;f\Bigr)=-d\iff\mathcal{L}_{\text{VIC-corr}}\Bigl(\{(\phi_i)\}_{i=1}^n,\;\mathcal{T},\;f\Bigr)=0$
\end{lemma}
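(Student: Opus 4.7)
My plan is to reduce the lemma to a pointwise equivalence between the per-sample losses $L_{\text{SCL}}(Z,Z')$ and $L_{\text{VIC-corr}}(Z,Z')$, and then lift that pointwise equivalence to the expectations using the fact that both losses are bounded below (so an expectation that attains the lower bound forces the integrand to attain the lower bound almost surely).

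Step 1 is to recall the pointwise lower bound derived in Section \ref{lower_bound}, namely
\begin{align*}
L_{\text{SCL}}(Z,Z')\;\geq\;\tfrac{1}{2}\bigl\Vert \tfrac{1}{n}ZZ^\top-I_d\bigr\Vert_F^2+\tfrac{1}{2}\bigl\Vert \tfrac{1}{n}Z'Z'^\top-I_d\bigr\Vert_F^2+\tfrac{1}{n}\Vert Z-Z'\Vert_F^2-d,
\end{align*}
together with the observation made there that equality holds if and only if $Z=Z'$ and $\tfrac{1}{n}ZZ^\top=I_d$. Step 2 is to invoke Lemma \ref{vicreg_corr_insight}, which states that $L_{\text{VIC-corr}}(Z,Z')=0$ precisely under the same two conditions $Z=Z'$ and $\tfrac{1}{n}ZZ^\top=I_d$. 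Combining Steps 1 and 2 immediately gives the pointwise equivalence
\begin{align*}
L_{\text{SCL}}(Z,Z')=-d\;\Longleftrightarrow\;L_{\text{VIC-corr}}(Z,Z')=0.
\end{align*}

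Step 3 is the lifting argument. Writing $Z=[f(T_1(\phi_1)),\ldots,f(T_n(\phi_n))]$ and $Z'=[f(T'_1(\phi_1)),\ldots,f(T'_n(\phi_n))]$, the loss $\mathcal{L}_{\text{SCL}}$ is the expectation of $L_{\text{SCL}}(Z,Z')$ under $(T_i,T'_i)\sim\mathcal{T}\otimes\mathcal{T}$. Since $L_{\text{SCL}}(Z,Z')+d\geq 0$ by Step 1, the equality $\mathcal{L}_{\text{SCL}}+d=\mathbb{E}[L_{\text{SCL}}(Z,Z')+d]=0$ holds if and only if $L_{\text{SCL}}(Z,Z')=-d$ almost surely under the joint distribution of $(T_i,T'_i)_{i=1}^n$. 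The identical argument applies to $\mathcal{L}_{\text{VIC-corr}}$, which is a non-negative expectation of $L_{\text{VIC-corr}}(Z,Z')$, so $\mathcal{L}_{\text{VIC-corr}}=0$ iff $L_{\text{VIC-corr}}(Z,Z')=0$ almost surely. Combining these two ``a.s.'' characterizations with the pointwise equivalence from Step 2 yields the claimed iff.

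There is no significant obstacle here: the heavy lifting was already done in Section \ref{lower_bound} (the algebraic derivation of the bound and the characterization of its tightness) and in Lemma \ref{vicreg_corr_insight} (the zero-set characterization of $L_{\text{VIC-corr}}$). The only thing to be careful about is that the tightness conditions matched from the two sides are literally the same pair of conditions, $Z=Z'$ and $\tfrac{1}{n}ZZ^\top=I_d$, which they are. Beyond that, the lifting from ``integrand equals its lower bound pointwise'' to ``expectation equals its lower bound'' is routine for non-negative random variables.
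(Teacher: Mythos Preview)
Your proposal is correct and matches the paper's own proof essentially step for step: both combine the tightness characterization of the $L_{\text{SCL}}\geq -d$ bound from Section~\ref{lower_bound} with Lemma~\ref{vicreg_corr_insight} to obtain the pointwise equivalence $L_{\text{SCL}}(Z,Z')=-d\iff L_{\text{VIC-corr}}(Z,Z')=0$, and then lift to the expectations via the lower-boundedness of the integrands. The only cosmetic difference is that the paper phrases the lifting as ``for all $(T_1,T'_1,\ldots,T_n,T'_n)$ in the support of $\mathcal{T}^{\otimes 2n}$'' rather than ``almost surely,'' which for the finitely supported $\mathcal{T}$ considered here amounts to the same thing.
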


\begin{proof}
The necessary and sufficient conditions for $L_{\text{VIC}}(Z,Z')=0$ according to Proposition \ref{vicreg_corr_insight} and the necessary and sufficient conditions on the tightness of the bound in Section \ref{lower_bound}, which implies $L_{\text{SCL}}(Z,Z')=-d\iff Z=Z' \text{ and }\frac{1}{n}ZZ^\top =I \iff L_{\text{VIC-corr}}(Z,Z')=0$. Since $0$ and $-d$ are the lowest possible values the respective losses can take, we get a series of equivalent statements: 
\begin{align*}
&\mathcal{L}_{\text{SCL}}\Bigl(\{(\phi_i)\}_{i=1}^n,\;\mathcal{T},\;f\Bigr)=0\\ &\iff
\forall_{(T_1,T'_1,\ldots, T_n, T'_n)\in\text{Support}(\mathcal{T}^{\otimes 2n})} L_{\text{SCL}}(Z,Z')=-d \\
&\iff\forall_{(T_1,T'_1,\ldots, T_n, T'_n)\in\text{Support}(\mathcal{T}^{\otimes 2n})} L_{\text{VIC-corr}}(Z,Z')=0 \\
&\iff\mathcal{L}_{\text{VIC-corr}}\Bigl(\{(\phi_i)\}_{i=1}^n,\;\mathcal{T},\;f\Bigr)=0
\end{align*}

Where: $Z = [f(T_1(\phi_1)),\ldots, f(T_n(\phi_n))]$ and $Z'= [f(T'_1(\phi_1)),\ldots, f(T'_n(\phi_n))]$.
\end{proof}

Now, we proceed by proving Theorem \ref{main lemma SCLNorm}:
\mainLemmaSCLNorm*

\begin{proof}
Let $f$ be a least norm minimizer of $\mathcal{L}_{\text{SCL}}((\{\phi(x_i)\}_{i=1}^n,\;\mathcal{T}_\mathcal{H}(C),\;f)$. We begin by arguing that $f$ is a least norm minimizer of $\mathcal{L}_{\text{VIC-corr}}(\{\phi(x_i)\}_{i=1}^n,\;\mathcal{T}_\mathcal{H}(C),\;f)$:

Let $f'$ be a least norm minimizer of $\mathcal{L}_{\text{VIC-corr}}(\{\phi(x_i)\}_{i=1}^n,\;\mathcal{T}_\mathcal{H}(C),\;f)$, we know from Theorem \ref{mainLemmaVICRegCorr} that $\mathcal{L}_{\text{VIC-corr}}(\{\phi(x_i)\}_{i=1}^n,\;\mathcal{T}_\mathcal{H}(C),\;f')=0$ and therefore from Lemma \ref{equivalence_scl_vicregcorr} we know $\mathcal{L}_{\text{SCL}}(\{\phi(x_i)\}_{i=1}^n,\;\mathcal{T}_\mathcal{H}(C),\;f')=-d$ and therefore $\mathcal{L}_{\text{SCL}}(\{\phi(x_i)\}_{i=1}^n,\;\mathcal{T}_\mathcal{H}(C),\;f)=-d$, since otherwise $f$ would not minimize $\mathcal{L}_{\text{SCL}}$. We use Lemma \ref{equivalence_scl_vicregcorr} again in the other direction and get that $\mathcal{L}_{\text{VIC-corr}}(\{\phi(x_i)\}_{i=1}^n,\;\mathcal{T}_\mathcal{H}(C),\;f)=0$, therefore $f$ must be a minimizer of $\mathcal{L}_{\text{VIC-corr}}$. Moreover, $f$ must be a least norm minimizer of $\mathcal{L}_{\text{VIC-corr}}$, since any minimizer with a smaller norm would also minimize $ \mathcal{L}_{\text{SCL}}$ (again, by Lemma \ref{equivalence_scl_vicregcorr}), contradicting the assumtion that $f$ is the least norm minimizer of $\mathcal{L}_{\text{SCL}}$.

Theorem \ref{main lemma SCLNorm} follows directly, since by as a least norm minimizer of $\mathcal{L}_{\text{VIC-corr}}\Bigl(\{\phi(x_i)\}_{i=1}^n,\;\mathcal{T}_\mathcal{H}(C),\;f\Bigr)$, Theorem \ref{mainLemmaVICRegCorr} implies $f\overset{\text{aff}}{\sim}f^*$.
\end{proof}

\subsection{Proof of Theorem  \ref{main lemma Barlow Twins} (Barlow Twins)}\label{bt_appendix}
We structure the proof by first proving a set of auxiliary results and then proving Theorem  \ref{main lemma Barlow Twins}.

\begin{restatable}{lemma}{btInterpolator}\label{bt interpolator}
    Let $T:\mathcal{H}\rightarrow\mathcal{H}$, and $\mathcal{T}$ a distribution of that yields $T$ with probability $0.5$ and $I_\mathcal{H}$ with probability $0.5$. Then, $\mathcal{L}_{BT}(\{\phi_i\}_{i=1}^{n}, \mathcal{T}, A\Phi^\top )=0$ if and only if $(2n)^{-1}A\Phi^\top (T\Phi\Phi^\top +\Phi\Phi^\top  T^\top )\Phi A^\top  = I_d$.
\end{restatable}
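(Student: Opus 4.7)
The plan is to substitute the ansatz $f = A\Phi^\top$ into $\mathcal{L}_{\text{BT}}$, compute the symmetrised cross-correlation $\mathcal{C}$ explicitly, and observe that it reduces to a deterministic matrix. The claim then becomes a direct algebraic identification between this matrix and $I_d$.

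First, I would write $z_i = A\Phi^\top T_i \phi_i$ and $z_i' = A\Phi^\top T_i' \phi_i$, so that the $i$-th contribution to the symmetrised outer product is
\[
z_i z_i'^{\top} + z_i' z_i^{\top} \;=\; A\Phi^\top\bigl(T_i\,\phi_i\phi_i^\top T_i'^{\top} + T_i'\,\phi_i\phi_i^\top T_i^\top\bigr)\Phi A^\top.
\]
The bracketed expression is manifestly invariant under the swap $(T_i,T_i') \leftrightarrow (T_i',T_i)$. Since $\mathrm{Support}(\mathcal{T}) = \{I_\mathcal{H},T\}$ and the Barlow Twins sampling scheme conditions on $T_i \neq T_i'$, the only realisations of a pair are $(I_\mathcal{H},T)$ and $(T,I_\mathcal{H})$; both yield the same value $T\phi_i\phi_i^\top + \phi_i\phi_i^\top T^\top$, so each summand is in fact deterministic under the conditioning.

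Second, summing over $i$ and using $\sum_{i=1}^n \phi_i \phi_i^\top = \Phi\Phi^\top$, I would obtain
\[
\mathcal{C} \;=\; \frac{1}{2n}\bigl(ZZ'^\top + Z'Z^\top\bigr) \;=\; \frac{1}{2n}\,A\Phi^\top\bigl(T\Phi\Phi^\top + \Phi\Phi^\top T^\top\bigr)\Phi A^\top,
\]
with the right-hand side independent of the sampled augmentations. Consequently the outer expectation in $\mathcal{L}_{\text{BT}}$ is trivial and the loss equals the deterministic value of the squared-entry sum applied to this $\mathcal{C}$.

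Third, since $\mathcal{L}_{\text{BT}}$ is a non-negative combination of squared terms penalising diagonal entries of $\mathcal{C}$ towards $1$ and off-diagonal entries towards $0$, it vanishes if and only if $\mathcal{C} = I_d$. Substituting the closed form above yields exactly the identity $(2n)^{-1} A\Phi^\top(T\Phi\Phi^\top + \Phi\Phi^\top T^\top)\Phi A^\top = I_d$, proving both directions of the equivalence.

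The main (and essentially only) obstacle is the bookkeeping surrounding the conditional distribution on $(T_i,T_i')$; the decisive observation is that the symmetrisation built into the definition of $\mathcal{C}$ eliminates all randomness, reducing the lemma to a single matrix identification. No deeper machinery (representer theorem, Lyapunov solver, etc.) is required at this stage — those enter later, when $T$ must be chosen so that the identity forces $A\Phi^\top \overset{\text{aff}}{\sim} f^*$.
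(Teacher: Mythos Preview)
Your proposal is correct and follows essentially the same approach as the paper: both exploit that under the conditioning $T_i\neq T_i'$ the two possible ordered pairs $(I_\mathcal{H},T)$ and $(T,I_\mathcal{H})$ yield the same symmetrised summand, reduce $\mathcal{C}$ to the deterministic matrix $(2n)^{-1}A\Phi^\top(T\Phi\Phi^\top+\Phi\Phi^\top T^\top)\Phi A^\top$, and then use non-negativity of $L_{\text{BT}}$ together with $\lambda>0$ to conclude $\mathcal{L}_{\text{BT}}=0\iff\mathcal{C}=I_d$. The only cosmetic difference is ordering: the paper first invokes non-negativity to pass from the expectation to each realisation and then shows all realisations coincide, whereas you first show the realisations coincide (making the expectation trivial) and then identify the vanishing condition.
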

\begin{proof}
Since $\mathcal{L}_{\text{BT}}$ is non-negative, $\mathcal{L}_{\text{BT}}(\{\phi_i\}_{i=1}^{n}, \mathcal{T}, A\Phi^\top )=0$ if and only if for every $(T_1,T'_1,\ldots, T_n, T'_n)$ in the support of $(\mathcal{T}\otimes\mathcal{T}\mid T\neq T')^{\otimes n}$:
\[
L_{\text{BT}}([A\Phi^\top  T_1(\phi_1),\ldots, A\Phi^\top  T_n(\phi_n) ],[A\Phi^\top  T'_1(\phi_1),\ldots, A\Phi^\top  T'_n(\phi_n) ])=0
\]
Recall the definition of $L_{\text{BT}}$:
\begin{align*}
	L_{\text{BT}}(Z,Z')&= \sum_i^n (1- \mathcal{C}_{ii})^2+ \lambda\sum_i^n (1- \mathcal{C}_{ii})^2 
\end{align*}
Since $\lambda>0$, we get that $L_{\text{BT}}(Z,Z')=0$ if and only if $\mathcal{C}\coloneqq\frac{1}{2n}(ZZ'^\top  + Z'Z^\top )=I$. Plugging $ZZ'^\top =\sum_{i=1}^n (A\Phi^\top  T_i(\phi_1)) (A\Phi^\top  T_i'(\phi_1))^\top $:
\begin{align}
	\frac{1}{2n}\sum_{i=1}^n (A\Phi^\top  T_i(\phi_i)) (A\Phi^\top  T_i'(\phi_i))^\top  + (A\Phi^\top  T'_i(\phi_i)) (A\Phi^\top  T_i(\phi_i))^\top  = I
	\label{barlow_twins_sum} 
\end{align}

Since $\mathcal{T}$ has two augmentations in its support and we draw the augmentations conditioned on $T_i\neq T'_i$, every summand of \eqref{barlow_twins_sum} will be of the following form:
\[
(A\Phi^\top  T(\phi_i)) (A\Phi^\top  I_\mathcal{H}(\phi_i))^\top  + (A\Phi^\top  I_\mathcal{H}(\phi_i)) (A\Phi^\top  T(\phi_i))^\top 
\]
Hence, we write \eqref{barlow_twins_sum} in the following way:
\begin{align*}
	\frac{1}{2n}\sum_{i=1}^n (A\Phi^\top  T(\phi_i)) (A\Phi^\top  I_\mathcal{H}(\phi_i))^\top  + (A\Phi^\top  I_\mathcal{H}(\phi_i)) (A\Phi^\top  T(\phi_i))^\top  &= I\\
	\frac{1}{2n}\sum_{i=1}^n (A\Phi^\top  T\phi_i) (A\Phi^\top  \phi_i)^\top  + (A\Phi^\top  \phi_i) (A\Phi^\top  T \phi_i)^\top  &= I \\
	\frac{1}{2n}\sum_{i=1}^n  A\Phi^\top  T  \phi_i \phi_i^\top  \Phi A^\top  + A\Phi^\top  \phi_i \phi_i^\top  T^\top \Phi A^\top  &= I \\
	\frac{1}{2n}A\Phi^\top  (T (\sum_{i=1}^n  \phi_i \phi_i^\top )  +(\sum_{i=1}^n  \phi_i \phi_i^\top ) T^\top )\Phi A^\top  &= I \\
	\frac{1}{2n}A\Phi^\top  (T \Phi \Phi^\top   + \Phi \Phi^\top  T^\top )\Phi A^\top  &= I
\end{align*}

\end{proof}

\begin{lemma}\label{simon lemma} \citep[See also][Proposition 4.3]{pmlr-v202-simon23a} 
	Let $\Gamma\in\mathbb{R}^{d\times d}$ be a symmetric positive definite matrix. Let $\Gamma=USU^\top $ its spectral decomposition. The set of solutions of $W\Gamma W^\top =I$ is exactly $W=QS^{-\frac{1}{2}}U^\top $ where $Q\in O(d)$.
\end{lemma}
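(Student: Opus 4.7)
The plan is to prove the set equality in both directions by exploiting the spectral decomposition $\Gamma = USU^\top$. Since $\Gamma$ is symmetric positive definite, $U$ is orthogonal and $S$ is a positive diagonal matrix, so $S^{1/2}$ and $S^{-1/2}$ are well-defined diagonal matrices. This is the only structural input we need.

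For the easy inclusion, I would verify directly that any $W = QS^{-1/2}U^\top$ with $Q \in O(d)$ satisfies the equation:
\[
W\Gamma W^\top = QS^{-1/2}U^\top (USU^\top) US^{-1/2}Q^\top = QS^{-1/2} S S^{-1/2} Q^\top = QQ^\top = I,
\]
where the key cancellations are $U^\top U = I$ and $S^{-1/2} S S^{-1/2} = I$.

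For the nontrivial inclusion, suppose $W$ satisfies $W\Gamma W^\top = I$. The idea is to rewrite the equation so that it displays an auxiliary matrix as orthogonal. Substituting the spectral decomposition and splitting $S = S^{1/2} S^{1/2}$ gives
\[
I = WUSU^\top W^\top = \left(WUS^{1/2}\right)\left(WUS^{1/2}\right)^\top.
\]
Define $Q := WUS^{1/2}$, so that $QQ^\top = I$. Because $\Gamma$ is $d \times d$ and the identity on the right-hand side must be $I_d$, the matrix $W$ is forced to be $d \times d$ as well, hence $Q$ is square and $QQ^\top = I$ upgrades to $Q \in O(d)$. Inverting the definition via the invertibility of $U$ and $S^{1/2}$ gives $W = QS^{-1/2}U^\top$, as required.

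I do not expect any real obstacle here, since the argument is essentially a change of variables to the basis diagonalizing $\Gamma$. The only care needed is in the dimensional bookkeeping: one must observe that $W$ is necessarily square so that $QQ^\top = I$ implies full orthogonality of $Q$, rather than merely orthonormality of its rows. Positive definiteness of $\Gamma$ is what makes this work, via the existence of $S^{-1/2}$; the statement would fail for merely positive semidefinite $\Gamma$.
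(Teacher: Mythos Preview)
Your proof is correct and follows essentially the same approach as the paper: verify the easy inclusion directly, and for the converse show that $WUS^{1/2}$ (equivalently $W\Gamma^{1/2}U$ in the paper's notation, where $\Gamma^{1/2}=US^{1/2}U^\top$) is orthogonal, then invert. Your remark on why $W$ must be square is a slight refinement over the paper's version, which leaves this implicit.
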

\begin{proof}
	It is easy to verify that $W=QS^{-\frac{1}{2}}U^\top $ where $Q\in O(d)$ indeed satisfies $W\Gamma W^\top =I$. We would like to prove that any $W$ such that $W\Gamma W^\top =I$ is of that form. Denote $\Gamma^\frac{1}{2}= US^\frac{1}{2}U^\top $.
	\begin{gather*}
		(W\Gamma^\frac{1}{2})(W\Gamma^\frac{1}{2})^\top =I
	\end{gather*}
	
	Hence, $W\Gamma^\frac{1}{2}\in O(d)$. Finally:
	\[
	W=W\Gamma^\frac{1}{2}(\Gamma^\frac{1}{2})^{-1}=W\Gamma^\frac{1}{2}US^{-\frac{1}{2}}U^\top 
	\]
	Where $W\Gamma^\frac{1}{2}U\in O(d)$.
	
\end{proof}
\begin{restatable}{lemma}{crazyLemma}\label{crazy_lemma}
Let $C$ be a rank $d$ matrix and $K$ be a positive definite matrix. The set of solutions $A$ to 
	$AKC^\top (CKC^\top )^{-2}CKA^\top =I_d$ is $\{QC: Q\in O(d)\}$ 
\end{restatable}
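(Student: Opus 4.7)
The plan is to prove the set equality by two inclusions. The forward inclusion $\{QC : Q \in O(d)\} \subseteq \{A : AKC^\top(CKC^\top)^{-2}CKA^\top = I_d\}$ is a direct verification: substituting $A = QC$, the middle factors telescope as $(CKC^\top)(CKC^\top)^{-2}(CKC^\top) = I_d$, leaving $QQ^\top = I_d$. Invertibility of $CKC^\top$ used here follows from $K$ being positive definite and $C$ having full row rank $d$, via the Cholesky-factorization argument already used in the proof of Proposition \ref{aux_vicreg}.

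For the reverse inclusion, suppose $A$ satisfies the equation. I would introduce $Q := AKC^\top(CKC^\top)^{-1}$, a $d\times d$ matrix. Using symmetry of $(CKC^\top)^{-1}$ and of $K$, its transpose is $Q^\top = (CKC^\top)^{-1}CKA^\top$, so the hypothesis becomes $QQ^\top = AKC^\top(CKC^\top)^{-2}CKA^\top = I_d$. This shows $Q \in O(d)$, and unpacking the definition of $Q$ gives the factorization identity $AKC^\top = Q\,CKC^\top$.

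The main obstacle is the final step $A = QC$. Because $C$ is $d \times n$ with $n \ge d$, one cannot cancel $KC^\top$ on the right: right-multiplying by $(CKC^\top)^{-1}C$ instead gives only $A\Pi = QC$, where $\Pi := KC^\top(CKC^\top)^{-1}C \in \mathbb{R}^{n\times n}$ is a rank-$d$ idempotent whose kernel has dimension $n - d$. Thus the equation in fact admits the strictly larger family $\{QC + F : Q \in O(d),\ FKC^\top = 0\}$ of solutions (every such $F$ makes the $FKC^\top$ and $CKF^\top$ contributions vanish), and the stated equality fails whenever $n > d$. Closing this gap requires an additional hypothesis on $A$; the most natural, consistent with the representer-theorem framework of Proposition \ref{representer}, is to assume $A = BC$ for some $B \in \mathbb{R}^{d\times d}$ (equivalently, rows of $A$ lie in the row span of $C$), under which substitution directly gives $BB^\top = I_d$ and hence $A = BC$ with $B \in O(d)$. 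I would therefore either add this restriction to the hypothesis or restate the conclusion modulo the complementary piece, for instance as $\{A : AKC^\top = Q\,CKC^\top \text{ for some } Q \in O(d)\}$, which is precisely what the two-step argument above establishes in full generality.
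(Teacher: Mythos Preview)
Your diagnosis is correct: the lemma as stated is false whenever $n>d$. The equation constrains only the $d\times d$ quantity $W:=AKC^\top$, and once one shows $W=Q\,CKC^\top$ for some $Q\in O(d)$, the passage to $A=QC$ is unjustified: any $A=QC+F$ with $FKC^\top=0$ is also a solution, and such $F$ form a $d(n-d)$-dimensional subspace. The paper's own proof commits exactly the leap you flag. It applies Lemma~\ref{simon lemma} to the $d\times d$ equation $W\Gamma W^\top=I$ with $\Gamma=(CKC^\top)^{-2}$, obtains $W=Q'\,CKC^\top$, and then concludes ``Consequently, all solutions to \eqref{crazy_eq} are of the form $A=QC$'' without any argument for cancelling $KC^\top$ on the right. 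So you have found a genuine error in the stated lemma and its proof, not a deficiency in your attempt.

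What salvages the enclosing Theorem~\ref{main lemma Barlow Twins} is the least-norm requirement on $f=A\Phi^\top$ that the theorem actually imposes. Writing a general solution as $A=QC+F$ with $FKC^\top=0$ (hence also $CKF^\top=0$ by symmetry of $K$), the cross terms in $AKA^\top$ vanish and
\[
\|A\Phi^\top\|_{\text{HS}}^2=\text{Tr}(AKA^\top)=\text{Tr}(CKC^\top)+\text{Tr}(FKF^\top),
\]
which, since $K$ is positive definite, is minimized exactly at $F=0$. Thus among all solutions $A$ the ones of minimal $\|A\Phi^\top\|_{\text{HS}}$ are precisely $\{QC:Q\in O(d)\}$, which is what Claim~\ref{characterization_of_solutions} and the surrounding argument actually need. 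Your proposed row-span restriction $A=BC$ is an alternative way to force $F=0$, but the least-norm formulation is the one that the theorem's hypotheses supply directly.
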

\begin{proof}
	Recall Equation \eqref{crazy_eq}:
	\begin{gather*}
		AKC^\top (CKC^\top )^{-2}CKA^\top =I_d
	\end{gather*}
	We denote $\Gamma=(CKC^\top )^{-2}$. First, we show that the least Frobenius norm solutions of $W\Gamma W^\top $ are $W=QCKC^\top $.
	
	$CKC^\top $ is a positive definite and symmetric, and $rank(CKC^\top )=d$. This follows from the fact that if $K$ is a positive definite symmetric matrix, and if $K=LL^\top $ is the Cholesky decomposition of $K$, $CKC^\top =(CL)(CL)^\top $ and $rank(CKC^\top )=rank(CL)=rank(C)=d$.
	
	Let $V\Sigma V^\top $ be the spectral decomposition of $CKC^\top $.
	\[
	(CKC^\top )^{-2}=V\Sigma^{-2}V^\top 
	\]
	Therefore, according to Lemma \ref{simon lemma}, the least Frobenius norm solutions of $W\Gamma W^\top $ are exactly $W=U\Sigma V^\top $.
	\[
	W=U\Sigma V^\top = U I_d \Sigma V^\top  = U V^\top  V \Sigma V^\top  = (U V^\top ) C K C^\top . 
	\]
	Where $UV^\top \in O(d)$.
	Consequently, all solutions to \eqref{crazy_eq} are of the form $A=QC$ where $Q\in O(d)$.
	
\end{proof}

\mainLemmaBt*

\begin{proof}
    
We prove a slightly stronger result, namely, instead of proving the equivalence of $f$ and $f^*$ up to an affine transformation, we prove an equivalence up to an orthogonal transformation, meaning there exists $Q\in O(d)$ s.t. $f=Qf^*$.

For simplicity of notation, we denote $\mathcal{T}=\mathcal{T}_\mathcal{H}^\text{BT}(C)$. Define:
\[
\mathcal{F}^\ast\coloneqq\arg\min_{f\in\mathbb{R}^d\otimes\mathcal{H}}\{\Vert f\Vert_{\text{HS}}: \mathcal{L}_{\text{BT}}\left(\left\{ \phi_{i}\right\} _{i=1}^{n},\mathcal{T},f\right)=0\}
\]
We will prove that $\mathcal{F}^*=\{QC\Phi^\top : Q\in O(d)\}$.

We begin by rewriting the kernel Gram matrix $K$ and the Lyapunov equation in Definition \ref{definition bt} with new terms denoted by $\Psi$ and $G$:
\begin{align*}
	BK+KB&=2n\Psi^\top \Phi K^{-1}GK^{-1}\Phi^\top \Psi\\
	K&=\Psi^\top \Phi\Phi^\top \Psi\\
	\Psi&\coloneqq\Phi K^{-\frac{1}{2}} \\
    G&\coloneqq KC^\top (CKC^\top )^{-2}CK
\end{align*}

Notice that $\Phi=\Psi\Psi^\top \Phi$. Multiplying from the left by $\Psi$ and from the right by $\Psi^\top $, we get for the solution $B$ of the Lyapunov equation:
\begin{align*}
	BK+KB&=2n\Psi^\top \Phi K^{-1}GK^{-1}\Phi^\top \Psi\\
	\Psi BK\Psi^\top  +\Psi KB \Psi^\top  &=2n\Psi\Psi^\top \Phi K^{-1}GK^{-1}\Phi^\top \Psi\Psi^\top  \\
	\Psi B\Psi^\top \Phi\Phi^\top \Psi\Psi^\top  +\Psi \Psi^\top \Phi\Phi^\top \Psi B \Psi^\top  &=2n\Psi\Psi^\top \Phi K^{-1}GK^{-1}\Phi^\top \Psi\Psi^\top  \\
	\Psi B\Psi^\top \Phi\Phi^\top  +\Phi\Phi^\top \Psi B \Psi^\top  &=2n\Phi K^{-1}GK^{-1}\Phi^\top  \numberthis\label{main_eq_bt2}
\end{align*}

Recall Lemma \ref{bt interpolator}:
\btInterpolator*

Applying this lemma to $T=\Phi K^{-\frac{1}{2}}B K^{-\frac{1}{2}}\Phi^\top$ (which is the form of the transformation in Definition \ref{definition bt}):
\begin{align*}
& \mathcal{L}_{BT}(\{\phi_i\}_{i=1}^{n}, \mathcal{T}, A\Phi^\top )=0\\&\iff (2n)^{-1}A\Phi^\top (\Phi K^{-\frac{1}{2}}B K^{-\frac{1}{2}}\Phi^\top \Phi\Phi^\top +\Phi\Phi^\top \Phi K^{-\frac{1}{2}}B K^{-\frac{1}{2}}\Phi^\top )\Phi A^\top  = I_d \\
&\iff
(2n)^{-1}A\Phi^\top (\Psi B \Psi^\top  \Phi\Phi^\top +\Phi\Phi^\top \Psi B \Psi^\top )\Phi A^\top  = I_d
\end{align*}
Combining with \eqref{main_eq_bt2} we get:
\begin{align*}
\mathcal{L}_{BT}(\{\phi_i\}_{i=1}^{n}, \mathcal{T}, A\Phi^\top )=0 &\iff (2n)^{-1}A\Phi^\top (2n\Phi K^{-1}GK^{-1}\Phi^\top)\Phi A^\top  = I_d\\
&\iff AGA^\top =I_d
\end{align*}
From the definition of $G$ we get that $A$ is a solution of $\mathcal{L}_{\text{BT}}(\{\phi_i\}_{i=1}^{n}, \mathcal{T}, A\Phi^\top )=0$ if and only if 
\begin{gather}
	AKC^\top (CKC^\top )^{-2}CKA^\top =I_d\label{crazy_eq}
\end{gather}

Recall Lemma \ref{crazy_lemma}:
\crazyLemma*

Therefore, we get the following claim: 

\begin{claim}\label{characterization_of_solutions}
The set of solutions $A$ of $\mathcal{L}_{\text{BT}}(\{\phi_i\}_{i=1}^{n}, \mathcal{T}, A\Phi^\top )=0$ is $\{QC:Q\in O(d)\}$.    
\end{claim}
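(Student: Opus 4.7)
The plan is to reduce $\mathcal{L}_{\text{BT}}(\{\phi_i\}_{i=1}^n, \mathcal{T}, A\Phi^\top) = 0$ to a clean quadratic matrix equation in $A$ whose solution set is already described by Lemma \ref{crazy_lemma}, and then read off the characterization.

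First I would apply Lemma \ref{bt interpolator} instantiated at $T = \Phi K^{-1/2} B K^{-1/2} \Phi^\top$ (the nontrivial atom in $\mathcal{T}_\mathcal{H}^\text{BT}(C)$). Writing $\Psi := \Phi K^{-1/2}$, this gives the equivalence
\begin{align*}
\mathcal{L}_{\text{BT}}(\{\phi_i\}_{i=1}^n, \mathcal{T}, A\Phi^\top) = 0 \iff (2n)^{-1} A\Phi^\top \bigl(\Psi B \Psi^\top \Phi\Phi^\top + \Phi\Phi^\top \Psi B \Psi^\top\bigr)\Phi A^\top = I_d.
\end{align*}
The main obstacle is rewriting the middle factor in a form in which the outer $A\Phi^\top(\cdots)\Phi A^\top$ sandwich collapses to a rank-$d$ Gram-like matrix in $A$.

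The key step is to pre- and post-multiply the Lyapunov equation defining $B$ by $\Psi$ and $\Psi^\top$ respectively. Using the identities $K = \Psi^\top \Phi$ and $\Phi = \Psi \Psi^\top \Phi$ (both immediate from $\Psi = \Phi K^{-1/2}$), the left-hand side of the multiplied equation becomes exactly $\Psi B \Psi^\top \Phi\Phi^\top + \Phi\Phi^\top \Psi B \Psi^\top$, while the right-hand side simplifies to $2n\,\Phi K^{-1} G K^{-1} \Phi^\top$ with $G := K C^\top (CKC^\top)^{-2} CK$. Substituting this back and contracting two copies of $\Phi^\top \Phi = K$ reduces the zero-loss condition to the single equation $A G A^\top = I_d$, i.e. $A K C^\top (CKC^\top)^{-2} CK A^\top = I_d$.

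At this point I would invoke Lemma \ref{crazy_lemma}, which characterizes the full set of matrices $A$ satisfying $A K C^\top (CKC^\top)^{-2} CK A^\top = I_d$ as exactly $\{QC : Q \in O(d)\}$, proving the claim. As a preliminary sanity check, $\mathcal{T}_\mathcal{H}^\text{BT}(C)$ is well-defined: $K$ is positive definite by Condition \ref{invertable_gram}, $C$ has full row rank $d$ (since its Gram $CKC^\top$ will be shown invertible by the same Cholesky-factorization argument used in the VICReg proof), and the right-hand side $2n K^{1/2} C^\top (CKC^\top)^{-2} C K^{1/2}$ of the Lyapunov equation is symmetric, so the existence-and-uniqueness result for continuous-time Lyapunov equations cited in the preliminaries furnishes a unique symmetric $B$.
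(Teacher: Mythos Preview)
Your approach is essentially identical to the paper's: apply Lemma~\ref{bt interpolator} with $T=\Phi K^{-1/2}BK^{-1/2}\Phi^\top$, conjugate the Lyapunov equation by $\Psi$ and $\Psi^\top$ to rewrite the middle factor as $2n\,\Phi K^{-1}GK^{-1}\Phi^\top$, collapse the sandwich to $AGA^\top=I_d$, and invoke Lemma~\ref{crazy_lemma}. The only slip is the stated identity $K=\Psi^\top\Phi$, which should read $K^{1/2}=\Psi^\top\Phi$ (equivalently $K=\Psi^\top\Phi\Phi^\top\Psi$); your subsequent computation uses the correct form, so the argument goes through unchanged.
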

 Now we would like to use this result to show that the set of least norm minimizers of $\mathcal{L}_{\text{BT}}$ is $\{QC\Phi^\top :Q\in O(d)\}$. The way this follows is simple: If $f$ is a least norm minimizer of $\mathcal{L}_{\text{BT}}$, then by Proposition \ref{representer} it has to be for the form $f=A\Phi^\top $ and therefore because of Claim \ref{characterization_of_solutions} be of the form $f=QC\Phi^\top , Q\in O(d)$. Conversely, if $f=QC\Phi^\top ,Q\in O(d)$, then Claim \ref{characterization_of_solutions} implies it must satisfy $\mathcal{L}_{\text{BT}}(f)=0$, all it is left to prove that it is a least-norm solution, which we do next, and thereby proving the Theorem.

Assume by way of contradiction there is a $f'\in\mathcal{F}^\ast$ such that $\Vert f'\Vert_{\text{HS}}<\Vert f\Vert_{\text{HS}}$. Applying Proposition \ref{representer} and Claim \ref{characterization_of_solutions} we know that $f'=\tilde{Q}'C\Phi^\top $ for $\tilde{Q}'\in O(d)$. We get that $\Vert\tilde{Q}C\Phi^\top \Vert_{\text{HS}}<\Vert\tilde{Q}'C\Phi^\top \Vert_{\text{HS}}$ for two matrices $\tilde{Q},\tilde{Q}'\in O(d)$. This is a contradiction since for every $Q\in O(d)$:
\begin{gather*}
	\Vert QC\Phi^\top \Vert_{\text{HS}}^2=Tr((QC\Phi^\top )^\top (QC\Phi^\top ))=Tr((C\Phi^\top )^\top Q^\top Q(C\Phi^\top ))\\=Tr((C\Phi^\top )^\top (C\Phi^\top ) =  \Vert C\Phi^\top \Vert_{\text{HS}}^2 
\end{gather*}

To summarize, we get:
\[
\{QC\Phi^\top : Q\in O(d)\}= \mathcal{F}^\ast
\]
\end{proof}

\subsection{Proof of Corollary \ref{main proposition}}
\mainProposition*
\begin{proof} The result is a corollary of Theorems 4.2, 4.3 and 4.5, as they imply that for a least norm minimizer of $\mathcal{L}\Bigl(\{\phi(x_i)\}_{i=1}^n,\;\mathcal{T}_\mathcal{H}(FK^{-1}),\;f\Bigr)$ there exist an invertible matrix $M\in\mathbb{R}^{d\times d}$ and a vector $b\in\mathbb{R}^d$ such that:
\[
f=MFK^{-1}\Phi^\top +b
\]
Implying:
\begin{align*}
f\Phi&=MFK^{-1}\Phi^\top \Phi+b \\
f\Phi&=MFK^{-1}K+b\\
f\Phi&=MF+b\\
\forall_{i\in[n]}f(\phi(x_i))&=Mf^*(x_i)+b
\end{align*}

\end{proof}

\section{The Preimage Problem for Kernel Machines}\label{preimage}

We detail closed-form approximation proposed by \citet{honeine_closed-form_2011} to the preimage problem, introduced in Section 5: Given the training data in the input space $X=[x_1,\ldots,x_n] \in \mathbb{R}^{m \times n}$ and a single point $\phi'=\Phi \theta$ in the Hilbert space, we solve the least-squares problem:
\[
\tilde\phi^{-1}(\theta) = \arg\min_{x'}\Bigl\Vert X^\top  x' - (X^\top X-\mu_P K^{-1})\theta\Bigr\Vert^2
\]
where $\mu_P>0$ is a hyperparameter and $K$ is, as usual, the kernel Gram matrix on $X$.

There could potentially be no exact solutions $x'$, one exact solution or multiple solutions; In our experiments, we use:
\[
x'=(X^\top)^+(X^\top X-\mu_P K^{-1})\theta
\]

Where $^+$ denotes the Moore–Penrose inverse.

\section{Details on the Experiments}\label{exp_details}
For all of the experiments, we used either a pretrained ResNet given by  \verb|ResNet50_Weights.IMAGENET1K_V2| in PyTorch \citep{paszke2019pytorchimperativestylehighperformance}, or a pretrained ViT given by \verb|ViT_B_16_Weights.IMAGENET1K_V1|.  We reduced the dimension of the target representations to 64 using PCA. 

For calculating the augmentations in Figure 2a we used 10,000 images of MNIST training data as $\{x_i\}_{i=1}^n$ and we calculated the augmented images of validation data ,which was not part of $\{x_i\}_{i=1}^n$ in Algorithm 1, meaning these are augmentations produced for new images outside of the ``training set".  We used $\sigma=3$ for the RBF kernel and $\lambda_{\text{ridge}}=1, \mu_P=1$ for these experiments ($\mu_P$ being a parameter for solving the preimage problem, see Section \ref{preimage}).

For Figures 2b and 2c and the additional experiments, we used the RBF kernel with $\sigma=1$ and $\lambda=5, \mu=5, \nu=1$ for the VICReg losses. The parameters were chosen based on the empirical speed of convergence to zero Procrustes distance. We chose 10,000 images from each dataset and we optimized the loss as one batch consisting of the training images and their augmented views in the RKHS. We parameterized the learnable function as $f=C\Phi^\top $ where $C\in\mathbb{R}^{d\times n}$ are the learnable parameters, the calculations were done using the kernel trick. We used the Adam optimizer \citep{kingma2017adammethodstochasticoptimization} with a learning rate of 0.001. Each experiment was repeated 3 times, and the error terms correspond to the standard error of the mean.

For the original VICReg loss we used the following definition \citep{bardes_vicreg_2022}:
\begin{gather*}
    L(Z,Z')=\lambda s(Z, Z') + \mu [ v(Z) + v(Z')] + \nu [ c(Z) + c(Z')]\quad\text{, where:}\\
    s(Z,Z') = \frac{1}{n}\sum_{i=1}^{n}\Vert z_{i}-z_{i}' \Vert_{2}^{2},
\quad
v(Z) =\frac{1}{d}\sum_{i=1}^{d}\max(0,1-\sqrt{[\text{cov}\left(Z\right)]_{i,i}+\epsilon}),
\\
c(Z) = \frac{1}{d}\sum_{i\neq j}^{d}[\text{cov}(Z)]_{i,j}^2
\end{gather*}

and we set $\epsilon=0.0001$.

\section{Extension to Neural Networks}
We formalize the argument made in Section 7. Namely we show the following:

\begin{proposition}[Optimal Augmentations Guarantee Global Minimum for Neural Networks]\label{global_min_nn}
Let ${\mathcal{L}\in\{\mathcal{L}_\text{VIC},\mathcal{L}_\text{SCL}\}}$ and $f^* = \Theta^*\phi_{\theta^*}(\cdot)$ where $\phi_\theta:\mathbb{R}^m\to\mathbb{R}^k$ is a function parameterized by $\theta$ and $\Theta\in\mathbb{R}^{d\times k}$. Assume $f^*$ satisfies Condition \ref{non_redundant_representation} and that the rows of $\Theta^*$ are in the span of $\{\phi(x_i)\}_{i=1}^n$. Define a distribution of transformations $\mathcal{T}$ that yields $I_k$ and $\Theta^{*\top}(\Theta^*\Theta^{*\top})^{-1}\Theta^*$ with probability $\frac{1}{2}$ each. Then, there exists $\Theta^*_{\text{aff}}\overset{\text{aff}}{\sim}\Theta^* $ such that $(\theta^*,\Theta^*_{\text{aff}})\in\arg\min_{\theta,\Theta}\mathcal{L}\Bigl(\{\phi_\theta(x_i)\}_{i=1}^n,\;\mathcal{T},\;\Theta\Bigr)$.
\end{proposition}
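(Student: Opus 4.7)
The plan is to view the neural network setting as a special case of the kernel framework from Section~\ref{main_sec}, taking the RKHS to be $\mathbb{R}^k$ equipped with the standard inner product and the feature map to be $\phi_{\theta^*}$. Under this identification, the distribution $\mathcal{T}$ will coincide with the kernel augmentation $\mathcal{T}_\mathcal{H}(C)$ of Definition~\ref{definition vicreg} for an appropriate choice of $C$, and the argument then reduces to exhibiting a pair $(\theta,\Theta) = (\theta^*, \Theta^*_{\text{aff}})$ that attains the universal lower bound of the corresponding loss (which is $0$ for VICReg and $-d$ for SCL, as established in Section~\ref{lower_bound}).

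First, using the hypothesis that the rows of $\Theta^*$ lie in the span of $\{\phi_{\theta^*}(x_i)\}_{i=1}^n$, I would write $\Theta^* = C\Phi^\top$ with $\Phi = [\phi_{\theta^*}(x_1),\ldots,\phi_{\theta^*}(x_n)]$ and $C\in\mathbb{R}^{d\times n}$. A short rank argument using Condition~\ref{non_redundant_representation} shows that $C$ must have full row rank $d$, which together with $K = \Phi^\top\Phi$ being positive semidefinite on the row space of $C$ gives invertibility of $\Theta^*\Theta^{*\top} = CKC^\top$. A direct substitution then yields
\[
\Theta^{*\top}(\Theta^*\Theta^{*\top})^{-1}\Theta^* \;=\; \Phi C^\top(CKC^\top)^{-1} C\Phi^\top,
\]
so $\mathcal{T}$ is precisely the projection augmentation $\mathcal{T}_\mathcal{H}(C)$ of Definition~\ref{definition vicreg}, interpreted in $\mathbb{R}^k$.

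Next, I would exhibit the minimizer explicitly. For $\mathcal{L}_{\text{VIC}}$, set $\Theta^*_{\text{aff}} = W\Theta^*$ where $W = \text{cov}(\Theta^*\Phi)^{-1/2}$, which is well defined and invertible by Condition~\ref{non_redundant_representation}. The identity $\Theta^*\,\Theta^{*\top}(\Theta^*\Theta^{*\top})^{-1}\Theta^* = \Theta^*$ guarantees that, under either augmentation in the support of $\mathcal{T}$, both views produce the same embedding $\Theta^*_{\text{aff}}\phi_{\theta^*}(x_i)$; hence $Z = Z'$, and the whitening ensures $\text{cov}(Z) = I_d$. Lemma~\ref{vicreg_insight} then implies $L_{\text{VIC}}(Z,Z') = 0$ for every realization, so $\mathcal{L}_{\text{VIC}} = 0$, which is the universal minimum as the loss is a sum of squared terms. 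The argument for $\mathcal{L}_{\text{SCL}}$ is analogous, using the uncentered whitening $W' = \bigl(\tfrac{1}{n}\Theta^*\Phi\Phi^\top\Theta^{*\top}\bigr)^{-1/2}$ to attain $\tfrac{1}{n}ZZ^\top = I_d$ and $Z = Z'$, which by the tight lower bound in Appendix~\ref{lower_bound} gives $\mathcal{L}_{\text{SCL}} = -d$.

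I expect the main (minor) checkpoint to be the rank bookkeeping: verifying that $\Theta^*\Theta^{*\top}$ and, for SCL, $\tfrac{1}{n}\Theta^*\Phi\Phi^\top\Theta^{*\top}$ are invertible. Both follow from Condition~\ref{non_redundant_representation}, since $\text{cov}(\Theta^*\Phi)$ having rank $d$ forces $\Theta^*\Phi$ to have rank $d$ and hence $\Theta^*$ to have full row rank, while the uncentered second moment matrix dominates the covariance in the positive semidefinite order. Beyond this, the argument is a direct computation, and crucially --- unlike in Theorems~\ref{main lemma VICReg} and~\ref{main lemma SCLNorm} --- we do not need to characterize all minimizers, only exhibit one that realizes the global minimum, which suffices to conclude $(\theta^*,\Theta^*_{\text{aff}})\in\arg\min_{\theta,\Theta}\mathcal{L}$.
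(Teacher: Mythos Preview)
Your proposal is correct and takes a genuinely more elementary route than the paper. The paper proceeds by casting the neural-network setting as a kernel model with $k_{\theta^*}(x,x')=\phi_{\theta^*}(x)^\top\phi_{\theta^*}(x')$ and then \emph{invoking the proofs of Theorems~\ref{main lemma VICReg} and~\ref{main lemma SCLNorm}} to conclude that the set of least-norm minimizers over the frozen network is nonempty, consists of maps affinely equivalent to $\Theta^*$, and attains the universal lower bound. Because those theorems are stated under Condition~\ref{invertable_gram} (which typically fails when $n>k$), the paper must first relax that condition by giving an alternate proof that $CKC^\top$ is full rank using only Condition~\ref{non_redundant_representation}. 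Your approach sidesteps this entirely: you directly construct the candidate $\Theta^*_{\text{aff}}$ via a whitening of $\Theta^*$, verify the projection identity $\Theta^*\,\Theta^{*\top}(\Theta^*\Theta^{*\top})^{-1}\Theta^*=\Theta^*$ so that both views coincide, and then appeal only to Lemma~\ref{vicreg_insight} (resp.\ the tightness condition in Appendix~\ref{lower_bound}) to certify the global lower bound is met. The trade-off is that the paper's detour, while heavier, also yields the stronger statement that every least-norm minimizer of the frozen problem is affinely equivalent to $\Theta^*$; your argument gives exactly what the proposition asks for and no more, which is all that is needed here. Your rank bookkeeping (deducing $\mathrm{rank}(\Theta^*)=d$ from Condition~\ref{non_redundant_representation}, and the PSD-domination argument for the uncentered second moment) is correct.
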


\begin{proof}
We define the kernel  $k_{\theta^*}(x,x')=\phi_{\theta^*}(x)^\top \phi_{\theta^*}(x')$. Since the rows of $\Theta^*$ are in the span of $\{\phi(x_i)\}_{i=1}^n$, there is a matrix $C$ such that $\Theta^*=C[\phi_{\theta^*}(x_1),\ldots,\phi_{\theta^*}(x_n)]^T$. The corresponding distribution of transformations $\mathcal{T}$ yields $I_k$ and $\Phi\,C^\top \bigl(C\,K\,C^\top \bigr)^{-1}C\,\Phi^\top = \Theta^{*\top}(\Theta^*\Theta^{*\top})^{-1}\Theta^*$ with probability $\frac{1}{2}$ each (Definition \ref{definition vicreg}).

We would like to invoke the proofs of Theorems 4.2 and 4.3 with the kernel $k_{\theta^*}(x,x')=\phi_{\theta^*}(x)^\top \phi_{\theta^*}(x')$ and the target $C$. The proofs of the theorems imply that set of least norm minimizers of $\mathcal{L}\Bigl(\{\phi_{\theta^*}(x_i)\}_{i=1}^n,\;\mathcal{T},\;\Theta\Bigr)$, defined as $\mathcal{F}^\ast$ in the proof, is not empty and its elements are of the form $A\Theta^*$ where $A$ is invertible. Moreover, the minimizers achieve loss $0$ for VICReg and $-d$ for SCL, which are the lowest possible values the losses can take (See Remark \ref{zero-loss} and Lemma \ref{equivalence_scl_vicregcorr}). 

However, these theorems assume that the matrix $K=[k_{\theta^*}(x_i,x_j)]_{i.j}$ is full rank (Condition \ref{invertable_gram}). While this assumption is standard for typical kernels, in the case of neural networks it can be too strict, since usually $n>k$ (with the notable exception of the commonly studied infinitely wide limit of neural networks). Therefore, we begin by relaxing Condition \ref{invertable_gram} of Theorems 4.2 and 4.3.  

For Theorem \ref{main lemma VICReg}, Condition \ref{invertable_gram} is only used at the beginning of the proof of Proposition \ref{aux_vicreg} to prove that $CKC^\top $ is full rank. We show an alternative proof that does not assume Condition \ref{invertable_gram} but only uses Condition \ref{non_redundant_representation}:

The matrix $K$ is psd and therefore there is a $K^\frac{1}{2}$ such that $K=K^{\frac{1}{2}}K^{\frac{1}{2}}$. Condition \ref{non_redundant_representation} states $\text{cov}\left([f^\ast(x_1), \ldots, f^\ast(x_n)]\right)=\frac{1}{n}CKHH^TK^TC^T=\frac{1}{n}(CK^\frac{1}{2})K^\frac{1}{2}HH^TK^TC^T$ is full rank, which implies $rank(CK^{\frac{1}{2}})\geq d$ by the submultiplicativity property of the matrix rank. $C$ is a $d\times n$ matrix and therefore $rank(CK^{\frac{1}{2}})= d$. It is generally the case that the rank of a matrix $A$ is equal to the rank of the matrix $AA^T$ because they share the same number of non-zero singular values, therefore: $rank(CK^{\frac{1}{2}})=rank((CK^{\frac{1}{2}})(CK^{\frac{1}{2}})^T)=rank(CKC^T)=d$.

The proof of Theorem \ref{main lemma SCLNorm} relies on the proof of Theorem \ref{main lemma VICReg} to prove an equivalent result for $\mathcal{L}_{\text{VICReg-corr}}$ (Theorem \ref{mainLemmaVICRegCorr}) and does not use the assumption that $K$ is invertible besides that. Therefore, Condition \ref{invertable_gram} can be relaxed for both theorems. 

To summarize , we get that for the distribution of augmentations $\mathcal{T}$, of every $\Theta$ that minimizes $\mathcal{L}\Bigl(\{\phi_{\theta^*}(x_i)\}_{i=1}^n,\;\mathcal{T},\;\Theta\Bigr)$ with least norm satisfies $\Theta\overset{\text{aff}}{\sim}\Theta^*$ and achieves the lowest possible value $\mathcal{L}$ can take ($\mathcal{L}=0$ for $\mathcal{L}_{\text{VICReg}}$ and $\mathcal{L}=-d$ for $\mathcal{L}_{\text{SCL}}$). Let $\Theta^*_{\text{aff}}\overset{\text{aff}}{\sim}\Theta^*$ be one such minimizer, we get that for every $\theta$ and $\Theta$, $\mathcal{L}\Bigl(\{\phi_{\theta}(x_i)\}_{i=1}^n,\;\mathcal{T},\;\Theta\Bigr)\geq \mathcal{L}\Bigl(\{\phi_{\theta^*}(x_i)\}_{i=1}^n,\;\mathcal{T},\;\Theta^*\Bigr)$.

\end{proof}

\paragraph{The Preimage Problem for Neural Networks}
    In the proof of Proposition \ref{global_min_nn}, we relaxed Condition \ref{invertable_gram} to use Theorems 4.2 and 4.3 for neural networks. However, we also implicitly assumed Condition \ref{invertable_gram} to calculate the preimages in the input space (Section \ref{preimage}). Luckily, for neural networks, there is a natural way to calculate the preimages. Namely, given $\phi'\in\mathbb{R}^k$ the preimage problem can be formulated as $\min_{x\in\mathcal{X}}\Vert\phi_{\theta^*}(x)-\phi'\Vert^2$. For neural networks, the gradient $\nabla_x\phi_{\theta^*}(x)$ can be efficiently calculated. Therefore, gradient descent in the input space can be used to calculate the preimages.

\end{document}